\documentclass[reqno,10pt]{amsart}
\usepackage{graphicx} 

\usepackage{graphicx,amsfonts,amssymb,amsmath,latexsym,amsthm,fullpage, framed,url,tikz,comment,cancel,mathtools,amsthm, hyperref} 
\usepackage[normalem]{ulem}
\usepackage{centernot}
\usepackage{multicol}
\usepackage{hyperref}
\usepackage{dsfont}
\usepackage[T1]{fontenc}
\usepackage{mathrsfs}

\newcommand{\e}{\varepsilon}
\newcommand{\eps}{\varepsilon}

\newcommand{\R}{\mathbb{R}}

\newcommand{\N}{\mathbb{N}}

\newcommand{\B}{\mathbb{B}}
\newcommand{\A}{\mathbb{A}}

\newtheorem{theorem}{Theorem}[section]
\newtheorem{lemma}[theorem]{Lemma}
\newtheorem{proposition}[theorem]{Proposition}

\newtheorem{definition}[theorem]{Definition}

\newtheorem{remark}[theorem]{Remark}
\newtheorem{example}[theorem]{Example}

\numberwithin{equation}{section}

\begin{document}
\frenchspacing

\title[Robustness and accuracy in adversarial training]{On the tradeoff between robustness and geometric accuracy \\ in Adversarial Training}
\author[R. Morris] {Rachel Morris} 
\address[Rachel Morris]{Department of Mathematics and Statistics, Concordia University, Montreal, Canada}
\email{rachel.morris@mail.concordia.ca }
\author[R. Murray] {Ryan Murray} 
\address[Ryan Murray]{Department of Mathematics, North Carolina State University, Raleigh, USA}
\email{rwmurray@ncsu.edu}
\author[K. Stinson] {Kerrek Stinson} 
\address[Kerrek Stinson]{Department of Mathematics, University of Utah, Salt Lake City, USA}
\email{kerrek.stinson@utah.edu}

  \subjclass[2020]{ 28A75, 68Q32, 62G35, 35B25}
\keywords{adversarial training, robust classifiers, Hausdorff convergence, geometric measure theory}

\begin{abstract}   
{For non-parametric binary classification in the presence of an adversary, we prove an optimal quantitative estimate for adversarial classifiers controlling geometric accuracy, as measured by the Hausdorff distance to the Bayes classifier, in terms or their robustness, as encoded by the adversarial budget $\eps$.}
This rigorously shows that robust classifiers sacrifice at most $O(\sqrt{\eps})$ or $O(\eps)$ in accuracy, depending on the regularity of the underlying Bayes classifier. In contrast to prior work on the incompatibility of robustness and accuracy over low-complexity hypothesis classes, our work shows that high-complexity hypothesis classes allow for robust classifiers with precisely quantified accuracy. Our results apply for adversaries determined by any norm on Euclidean space and in both the infinite and finite data settings.

\end{abstract}

\maketitle

\section{Introduction}

Adversarial training is a robust optimization principle that accounts for perturbations of the data in the training process. In the formulation proposed by Madry et al.\ \cite{madry2018}, one seeks to solve the min-max optimization problem
$$ \text{adversarial training} \qquad \min_{u}\,\mathbb{E}_{(x,y)\sim \mu} \Big[\sup_{\tilde x \in {B(x,\eps)}}\ell (u(\tilde x),y)\Big], $$ where $u$ is a classifier, $\mu$ is a probability measure describing the data, $\ell$ is the loss function, ${B(x,\eps)}$ is a ball defined by a norm $\sigma$, and $\eps>0$ represents the adversarial budget . As seen in a variety of work \cite{bai2021recent,shafahi2019adversarial,xhonneux2024efficient,zhang2019theoretically}, adversarial training is a powerful antidote to adversarial attacks. However, there is an underlying tension in the problem: 
$$\text{Does robustness come at the expense of accuracy?}$$
Here, accuracy is often measured by the standard risk $\mathbb{E}_{(x,y)\sim \mu} [\ell (u( x),y)]$. 
We are not the first to question the tradeoff between robustness and accuracy. Tsipras et al.\ \cite{tsipras2019robustness} constructed a binary classification problem where small standard risk is \emph{incompatible} with small robust risk over all linear classifiers. Nakkiran \cite{nakkiran2019} further quantified this by showing that 
$$(\text{$\eps$-robust risk}) + (\text{standard risk})^\gamma \geq 1 \quad \text{ for some }\gamma>0$$ over linear classifiers for a particular classification problem.  Instead of viewing this as an intrinsic shortcoming of adversarial training, Nakkiran proposed that robust training is at odds with simplicity, and the loss of accuracy is a consequence of an insufficiently complex hypothesis class.  In other words, finding a robust and (standard) accurate classifier is prohibited by a rigid hypothesis class.

In this paper, we take the complementary view: We show that given a flexible hypothesis class, robust classifiers sacrifice at most $O(\sqrt{\eps})$ in \emph{geometric accuracy}, and in fact, the accuracy is guaranteed to improve with additional structural information on the decision boundary of the Bayes classifier. Precisely, in the setting of non-parametric (where classifiers are the most expressive) binary classification, we consider the \emph{Bayes classifier}
$$A_0 = \operatorname{argmin}_{A \subset \R^d} \mathbb{E}_{(x,y)\sim \mu} \Big[|\chi_{A}( x) - y|\Big],$$
which is unique under a simple non-degeneracy assumption on the data $\mu$,
and an \emph{adversarial classifier} 
$$A_\eps \in \operatorname{argmin}_{A \subset \R^d} \mathbb{E}_{(x,y)\sim \mu} \Big[\sup_{\tilde x \in {B(x,\eps)}}|\chi_{A}(\tilde x) - y|\Big].$$
We prove that for an $\ell^2$-adversary it holds that
\begin{equation}\label{eqn:introMain}
d_{\mathcal{H},K}({A}_\eps,A_0) \lesssim \begin{cases}
\sqrt{\eps} & \text{ when $\partial A_0$ is locally Lipschitz},\\
\eps & \text{ when $\partial A_0$ is locally $C^{1,1}$-regular},
\end{cases} 
\end{equation}
where $d_{\mathcal{H},K}$ is the local Hausdorff metric on $K\subset \subset \R^d$ (in fact, a related, but weaker estimate \eqref{eqn:noreg} holds with no regularity assumptions on $\partial A_0$). A similar geometric estimate holds for adversarial training with perturbations determined by any norm $\sigma$ (e.g., an $\ell^\infty$-adversary) and for the empirical adversarial training problem with near optimal sample complexity; we refer to Theorems \ref{thm:main} and \ref{thm:mainEmpirical} for the precise details. 
We remark that Raghunathan et al.\ \cite{raghunathan2019} were concerned with the robustness-accuracy tradeoff in finite data settings, and our empirical result (Theorem \ref{thm:mainEmpirical}) provides, under appropriate assumptions, rigorous guarantees in precisely finite data settings.

Our approach builds on the geometric perspective for adversarial training introduced in \cite{bungert2023geom}. Therein, it was shown that the adversarial training problem could equivalently be viewed as standard risk minimization regularized by a nonlocal total variation, allowing the authors to prove the existence of adversarial classifiers. Subsequent work by Bungert and the last author in \cite{bungert2024gammaconv} identified the asymptotic limit of the nonlocal total variation as the adversarial budget vanished. Using this, they showed that even without non-degeneracy assumptions on the data (i.e., allowing for non-unique Bayes classifiers), adversarial training acts as a selection mechanism for standard risk minimization with adversarial classifiers converging to
a minimal-perimeter Bayes classifier as $\eps\to 0$. It was very recently shown that this selection is still present for parametric models of adversarial training and allows for interpretability of the model via the saliency map \cite{lunk2026}.
Quantitative estimates for geometric accuracy were first introduced by the first two authors in \cite{morris2026uniform}. Of particular interest to us here, it was shown that $d_{\mathcal{H},K}({A}_\eps,A_0) \to 0$ as $\eps\to 0$ given a unique Bayes classifier and under additional non-degeneracy and regularity assumptions that $d_{\mathcal{H},K}({A}_\eps,A_0) \lesssim \eps^{1/(d+2)}$. By comparison, our main result \eqref{eqn:introMain} shows that a \emph{dimension-independent} estimate holds for the geometric error, making the new optimal estimate meaningful in the context of high-dimensional statistics. The optimality of \eqref{eqn:introMain} is formally justified in Examples \ref{ex:lowReg} and \ref{ex:highReg} and by \cite{trillos2022adversarial}, where it was shown that $\eps\mapsto A_\eps$ is described in terms of a curvature flow.

We briefly remark on related literature, though we surely provide an incomplete review which mostly focuses on the theoretical and mathematical aspects of the problem. Existence of adversarial classifiers has also been proven by Awasthi et al.\ \cite{Awasthi2021OnTE} for a \emph{closed-ball} version of the optimization problem. The open-ball model we consider has the advantage of substantially simplifying questions regarding measurability of classifiers. Frank and Niles-Weed \cite{frankNW2024minmax} developed an existence theory for adversarial training with surrogate losses, and consistency of such models was studied by Awasthi et al.\ \cite{Awasthi2021calibration}.
 Building on this, Frank has identified natural uniqueness conditions for adversarial training in specific contexts \cite{frank2026notion,frank2026adversarial}. It has also been fruitful to view adversarial training as a distributionally robust optimization problem, opening the door to analysis via optimal transport. From this perspective, Pydi and Jog \cite{pydiJogOT} proved existence of (closed-ball) adversarial classifiers, and Garc{\' i}a Trillos et al. \cite{trillos2023existence,trillosJacobsKimMOT1} proved existence of multiclass classifiers and a reformulation of the problem in terms of multimarginal optimal transport. 
We are optimistic that our low (Lipschitz) regularity estimate \eqref{eqn:introMain}  will help quantify accuracy in the multiclass case, where the geometry must account for junctions where multiple classes meet.

\noindent \textbf{Outline of the strategy.} The intuition behind our arguments is contained in the proof of \eqref{eqn:introMain} for an $\ell^2$-adversary under \emph{assumed} regularity on the boundary of the Bayes and adversarial classifiers, see Section~\ref{sec:heuristic}. 
For this, we use the first-order necessary conditions for adversarial classifiers introduced in \cite{trillos2022adversarial} to deduce that all points in the boundary $\partial A_\eps$ must be sufficiently close to $\partial A_0$.
The proof is conditional as it is not known whether we can a priori assume any regularity for $A_\eps$; the presence of non-smooth $\sigma$ determining the adversary further complicates the matter. To extend the argument beyond these assumptions, we introduce a flexible method for constructing energetic competitors by removing a ball from a given classifier, cf.\ Lemma \ref{lem:basComp}. For minimizers, these energetic competitors are used to construct \emph{geometric barriers}, preventing $\partial A_\eps$ from moving too far away from $\partial A_0$. The barriers depend on both $\eps$ and the regularity of $A_0$, thereby leading to control by $\sqrt{\eps}$ or $\eps$ depending on whether $A_0$ has low (Lipschitz) or high ($C^{1,1}$) regularity, respectively. The construction of barriers is highly adaptable, and with high probability, we can prove that the geometric estimates hold in the finite data setting assuming the number of sampled points $n$ satisfies $n\gtrsim \eps^{-Cd}$, for a constant $C\in (3/2,3)$ independent of dimension. In the case $C=2$, this is roughly the number of samples needed to land within any ball of size $\eps^2$ in the domain. A fundamental challenge with finite data is quantifying geometric errors due to differences between the empirical distribution and the infinite data distribution. Using the results of Garc\'ia Trillos and Slep\v{c}ev \cite{trillos2015rate} to control the $\infty$-Wasserstein distance between these two distributions, we introduce energetic competitors for the finite data classifiers in Lemma \ref{lem:basCompEmp}. Due to the probabilistic setting, the competitor must be carefully `initialized' by an averaging argument that controls the interaction between distributional uncertainty and irregular geometry.

\noindent \textbf{Organization.} In Section \ref{sec:mainResults}, we precisely formulate the adversarial training problem and state the main results. The heuristic proof for convergence based on the first variation is given in Section \ref{sec:heuristic}. A brief review on adversarial training in metric spaces is contained in Section \ref{sec:prelim}. In Sections \ref{sec:main} and \ref{sec:emp}, we prove our main results in the continuum (infinite data) and empirical (finite data) settings, respectively.

\noindent \textbf{Notation.} We use $C>0$ and $c>0$ to denote constants that may change from line to line but are always independent of $\eps$. We indicate dependence via $C = C(\sigma)$, which says that $C$ depends on $\sigma$. For $x\in \R^d$, we often write $x = (x',x_d)\in \R^{d-1}\times \R.$ 
Let $\sigma:\R^d \to [0,\infty)$ be a norm on Euclidean space and $d(x,y) := \sigma(x-y)$ be the metric defined by $\sigma.$
Any object implicitly or explicitly depending on a norm is taken to be with respect to $\sigma$ unless stated otherwise, e.g., the ball $B(0,1) = \{x\in \R^d : d(x,0) = \sigma(x)<1\}$.
For a set $A\subset \R^d$, we define the (anisotropic) distance to the set as $d(x,A) :=\inf_{y\in A}d(x,y)$. For two sets $A$ and $B$, we write $d(A,B) : = \inf_{x\in A}d(x,B)$. Similarly, the signed distance to $A$ is defined as ${\rm sdist}(x,A) = d(x,A) - d(x,A^c).$ We use $|\cdot|_2$ to denote the standard Euclidean norm.

\section*{AI use declaration}

All mathematical ideas and proofs in the body of the paper were conceptualized and written by the authors.  ChatGPT 5.6 was useful in simplifying arguments and was used to lift Theorem \ref{thm:mainEmpirical} to non-smooth norms $\sigma$. In particular, we previously had an averaging argument applicable to smooth norms $\sigma$, where one could boil down estimates for fattened-perimeters to a graphical setting. Now we use Lemma \ref{prop:geometricIneq}, which circumvents graphical assumptions. The proof of Lemma \ref{prop:geometricIneq} was first provided by ChatGPT, though carefully written by the authors. We take full responsibility for the validity of all arguments presented here. Figure \ref{fig:avgArg} was made with the aid of AI too.

\section*{Acknowledgments}

R. Murray acknowledges partial support from NSF DMS 2307971 and a Simons travel grant. K. Stinson was supported by funding from the NSF RTG grant DMS-2136198. The authors thank Leon Bungert for many fruitful discussions relating to this work. Part of this work was conducted during a visit to the Casa Matem\'atica Oaxaca; R. Murray and K. Stinson are grateful for the hospitality during their visit.

\section{Main results} \label{sec:mainResults}

We divide our results into two settings: continuum and empirical. First, we consider at the adversarial training problem in the infinite data limit, where the energetic cost is in terms of the underlying data distribution $\mu$. We prove that the gap between the adversarial classifier and the Bayes classifier is controlled directly by robustness, via the parameter $\eps.$ Second, we introduce the empirical adversarial training problem and control the distance of the empirical minimizer to the Bayes classifier by $\eps$ and the number of sampled data points. 

\subsection{Convergence for the continuum adversarial training problem}
Let $\sigma:\R^d \to [0,\infty)$ be a norm on Euclidean space $\R^d$, $d(\cdot,\cdot)$ its associated metric, and $\mathcal{B}(\R^d)$ the collection of Borel measurable sets in $\R^d$.
Binary classifiers in adversarial training are determined by the optimization problem
\begin{equation}\label{eqn:advProb}
\min_{A\subset \mathcal{B}(\R^d)}\mathbb{E}_{(x,y)\sim \mu}\Big[\sup_{\tilde x \in {B(x,\eps)}}|\chi_{A}(\tilde x) - y|\Big] \equiv \min_{A\subset \mathcal{B}(\R^d)}\rho_0 (A^\eps) + \rho_1((A^c)^\eps),
\end{equation}
where $\mu$ is a probability measure on $\R^d\times\{0,1\}$, 
\begin{equation}\label{eqn:conditionalDensity}
\rho_i(\mathcal{O}) : = \mu(\mathcal{O}\times \{i\})
\end{equation} is the (unnormalized) conditional density, and $A^\eps : = \{x\in \R^d : d(x,A)<\eps\}$ is the $\eps$-dilation.
 We denote by $A_\eps$ a given minimizer of the adversarial classification problem \eqref{eqn:advProb}, which exists by Theorem 3.18 in \cite{bungert2023geom} (see also Section \ref{sec:prelim}). For convenience we write the adversarial risk above as
$$ \mathcal{R}_{\mu,\eps}(A) : = \rho_0 (A^\eps) + \rho_1((A^c)^\eps).$$ 

Convergence rates of $A_\eps$ towards the Bayes classifier rely on two basic ingredients: non-degeneracy of the densities and regularity of the Bayes classifier. Non-degeneracy ensures $A_0$ is the unique Bayes classifier and formally that this uniqueness is stable at first order.
When no confusion arises, we interchangeably use $\rho_i$ for the Lebesgue-density of the measure $\rho_i$ defined in \eqref{eqn:conditionalDensity}, i.e., $\rho_i(\mathcal{O}) = \int_{\mathcal{O}}\rho_i \, dx$.

\begin{definition}[Bayes classifier and non-degeneracy] \label{def:nondegen}
Letting $A_0 : = \{x: \rho_1 > \rho_0\}$ be the Bayes classifier, we say that the densities $\rho_i$ are {non-degenerate} if for any compact $K\subset\R^d$ there is $\zeta = \zeta(K) >0$ and a constant $c = c(K)>0$ so that $\rho_i\in C(\R^d)\cap C^{0,1}(\{d(x,\partial A_0)<\zeta\}\cap K)$, $\rho_0+\rho_1 > 0$, and 
\begin{equation}
\frac{\rho_0(x) - \rho_1(x)}{{\rm sdist}(x,A_0)}>c \qquad \text{for all $\quad x\in \{0<d(x,\partial A_0)<\zeta\} \cap K$.}\label{eqn:nondegen}
\end{equation}
\end{definition}

We emphasize that when $\partial A_0$ is locally $C^{1,1}$-regular (in the typical Euclidean sense), then \eqref{eqn:nondegen} follows from the simple non-degeneracy condition $\rho_i\in C^{1}(\R^d)$ and $\nabla (\rho_1 - \rho_0)\neq 0$ on $\{x: \rho_1 = \rho_0\}$.

Universally, we assume that $A_0$ is an open set with locally Lipschitz boundary, which, as we will show, is sufficient to recover an $O(\sqrt{\eps})$ control on geometric accuracy. To obtain the improved $O(\eps)$ estimate, we also require that the Bayes classifier $A_0$ is smooth with respect to the norm $\sigma$, referred to as \emph{$\sigma$-regular}.
If for instance $\sigma = |\cdot|_2$, the Euclidean norm, a $\sigma$-regular set has $C^{1,1}$-regular boundary and the $\sigma$-normal vector is the Euclidean normal vector. Similarly, in the case $\sigma$ belongs to $C^2(\R^d\setminus \{0\})$, $\sigma$-regularity coincides with standard $C^{1,1}$-regularity for $\partial A_0$ and the $\sigma$-normal vector is the Cahn--Hoffman vector. In the case that $\sigma$ is crystalline, e.g., $\sigma(x) = |x|_1$ or $\sigma(x) = |x|_\infty$, this allows for certain polygonal geometries for $A_0$. We refer to \cite{Bellettini2004} for examples of regular sets with respect to crystalline norms. One elementary example of a $\sigma$-regular set is the anisotropic ball $B(0,r)$. We state the precise definition of a $\sigma$-regular set, cf. Definition 3.1 in \cite{bellettini2001crystalline}.

\begin{definition}[$\sigma$-regular sets] \label{def:sigmaReg}
We say that $A_0$ is a locally $\sigma$-regular set if for any compact $K\subset\R^d$ there is a Lipschitz regular vector field $\nu:\partial A_0\cap K\to \R^d$ such that for $\mathcal{H}^{d-1}$-a.e. $x\in \partial A_0\cap K$, 
$$\nu (x) \in \left\{\xi : \left\langle\xi , \frac{\nu_{\rm Eucl}(x)}{\sigma^\circ(\nu_{\rm Eucl}(x))}\right\rangle = 1 = \sigma(\xi)^2\right\},$$
where $\nu_{\rm Eucl}(x)$ is the standard Euclidean outer normal of $A_0$ and $\sigma^\circ$ is the dual norm.
\end{definition}
We emphasize to the reader that the technical advantage of $\sigma$-regularity is that a tubular neighborhood of $\partial A_0$ may be defined in terms of tangential-normal coordinates (see Proposition \ref{prop:sigmaReg}).

Assuming the Bayes classifer is non-degenerate and regular, we show that any minimizer $A_\eps$ of \eqref{eqn:advProb} is (locally) $L^\infty$-close to $A_0$ and our estimate provides the optimal \emph{dimension-independent} scaling in $\eps$ determined by the underlying regularity of the classifier. For a given set $K\subset \R^d$, define
\begin{equation}\label{eqn:HausdorffDef}
d_{\mathcal{H},K} (A,B) : = \sup \left\{\sup_{x\in A\cap K} \, d(x, B) , \sup_{x\in B\cap K} \, d(x, A)\right\}, 
\end{equation}
which is simply the Hausdorff metric localized to $K$.

\begin{theorem}\label{thm:main}
Suppose that $A_0\subset \R^d$ is the Bayes classifier with locally Lipschitz boundary, the densities $\rho_i$ are non-degenerate (see Definition \ref{def:nondegen}), and $A_\eps$ is a minimizer of the adversarial classification problem \eqref{eqn:advProb} for $\eps \in (0,1]$. Given $K\subset \subset \R^d$ the following holds: 
\begin{enumerate}
\item \label{item:lowReg} \emph{(Low regularity).}
There is a constant $C_0 = C_0(\rho_i, \sigma, d, K)>0$ such that 
\begin{equation}\label{eqn:optGeneric}
d_{\mathcal{H},K}(A_\eps,A_0)\leq C_0 \sqrt{\eps}  \quad \quad \text{ for } \eps \in (0,1].
\end{equation}
\item \label{item:highReg} \emph{(High regularity).}
If in addition $A_0$ has locally $\sigma$-regular boundary {(see Definition \ref{def:sigmaReg})}, then there is a constant $C_1 = C_1(\rho_i, \sigma, d, K)>0$ such that 
\begin{equation}\label{eqn:optRegular}
d_{\mathcal{H},K}(A_\eps,A_0)\leq C_1 \eps  \quad \quad \text{ for } \eps \in (0,1].
\end{equation}
\end{enumerate}
\end{theorem}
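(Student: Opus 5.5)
The plan is a barrier argument built on energetic competitors. I will only show that no point of $\partial A_\eps\cap K$, or more precisely no point of the symmetric difference $(A_\eps\triangle A_0)\cap K$, lies at distance more than $C\sqrt{\eps}$ (respectively $C\eps$) from $\partial A_0$. The two one-sided Hausdorff bounds follow from this by symmetry, swapping the roles of $A$ and $A^c$ and of $\rho_0$ and $\rho_1$. So it suffices to rule out a point $x_0\in A_\eps\cap K$ with $d(x_0,A_0)=r\gg\sqrt\eps$. Far from $\partial A_0$, where $d(x_0,\partial A_0)\geq\zeta$, we have $\rho_0>\rho_1$ uniformly on compacts, and the argument below works with $r\sim\zeta$.

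The basic competitor is $\tilde A=A_\eps\setminus B(x_0,s)$ for a suitable radius $s$, which is the ball-removal construction of Lemma \ref{lem:basComp}. I compare $\mathcal R_{\mu,\eps}(\tilde A)$ with $\mathcal R_{\mu,\eps}(A_\eps)$ as follows.
\begin{itemize}
\item \emph{Gain.} Removing the ball shrinks $A^\eps$, which reduces $\rho_0(A^\eps)$ by at least $\rho_0\big((A_\eps\cap B(x_0,s-\eps))^\eps\setminus(A_\eps\setminus B(x_0,s))^\eps\big)$.
\item \emph{Loss.} The complement grows, so $\rho_1((A^c)^\eps)$ increases by at most $\rho_1$ of $B(x_0,s+\eps)\setminus((A_\eps^c)^\eps)$, together with the $\eps$-layer.
\end{itemize}
Since $B(x_0,s)\subset A_0^c$ once $s<r$, non-degeneracy \eqref{eqn:nondegen} gives $\rho_0-\rho_1\geq c\,d(\cdot,A_0)$ on the removed region. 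Hence the net bulk gain is at least $c\,(r-s)\,|A_\eps\cap B(x_0,s)|$. The cost is a perimeter-type term of order $\eps\,\mathcal H^{d-1}(\partial B(x_0,s))\sim\eps s^{d-1}$, coming from the dilation layer near $\partial B(x_0,s)$.

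To make the bulk term comparable to $s^d$, I first need a density estimate: $|A_\eps\cap B(x_0,s)|\gtrsim s^d$ for $x_0\in A_\eps$ and $s\gtrsim\eps$. I will obtain it as in classical minimal-surface theory. Comparing with $A_\eps\setminus B(x_0,t)$ and using the nonlocal perimeter from \cite{bungert2023geom}, one gets a differential inequality for $m(t)=|A_\eps\cap B(x_0,t)|$. A key point is that $x_0\in A_\eps$ forces $B(x_0,\eps)\subset A_\eps^\eps$, which seeds the estimate at scale $\eps$. Taking $s=r/2$ and combining gives $c\,r\,s^d\lesssim\eps s^{d-1}$, i.e.\ $r\lesssim\eps/s\sim\eps/r$, hence $r\lesssim\sqrt\eps$. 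This proves \eqref{eqn:optGeneric}. The Lipschitz regularity of $\partial A_0$ enters through a uniform cone condition. Balls centered near $\partial A_0$ then intersect $A_0^c$ in a fixed fraction, so the bulk term remains positive when the ball is placed appropriately, for instance centered at a point pushed in the cone direction.

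For \eqref{eqn:optRegular}, I will replace balls by better-adapted barriers. Using the tubular tangential–normal coordinates from $\sigma$-regularity (Proposition \ref{prop:sigmaReg}), I take competitors $\tilde A=A_\eps\cap\{{\rm sdist}(\cdot,A_0)<t\}$, trimming $A_\eps$ by parallel sets of $A_0$. The perimeter cost of the parallel surface $\{{\rm sdist}=t\}$ is now compared with that of $A_\eps$ itself rather than paid in full. Because the parallel sets are $\sigma$-regular with bounded curvature, the nonlocal perimeter of their dilations changes only by $O(\eps)\cdot\eps\,\mathcal H^{d-1}$, which is a curvature term. The bulk gain is $\int_t (\rho_0-\rho_1)\gtrsim t$ per unit area. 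Taking $t$ to be the maximal excursion and localizing with a cutoff at scale $\sim\sqrt\eps$ gives $t^2\lesssim\eps t$, hence $t\lesssim\eps$. This matches the heuristic from the first variation, where curvature times $\eps$ balances $\rho_0-\rho_1\sim t$.

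I expect the main obstacle to be the lack of a priori regularity of $A_\eps$ and the non-smoothness of $\sigma$, which makes the perimeter bookkeeping for the competitors delicate. Specifically, it is hard to show that trimming $A_\eps$ by a barrier does not increase $\mathcal R_{\mu,\eps}$ by more than the barrier's own $\eps$-weighted perimeter. I would first try to prove a submodularity inequality for the dilation functional, $|(A\cap B)^\eps|+|(A\cup B)^\eps|\le|A^\eps|+|B^\eps|$, and the analogous inequality for complements. With it, the comparison with $A_\eps\cap B$ reduces to comparing $B$ with $A\cup B$, where $B$ is an explicit regular barrier whose energy can be computed.
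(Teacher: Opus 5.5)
\textbf{The genuine gap is in part (2).}

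Your ball-removal bookkeeping pays the $2\eps$-annulus around $\partial B(x_0,s)$ in full. For balls centred at $x_0$ this gives at best $rs\lesssim\eps$ with $s\le r$, i.e.\ $\sqrt\eps$. For an off-centre ball of $\eps$-independent radius, into which $A_\eps^\eps$ may protrude only a height $h\sim\eps$, the crude bounds fail outright: per unit area, the bulk gain is $\sim Dh\sim D\eps$ while the annulus costs $\sim\eps$.

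Your remedy is to trim by parallel sets and use submodularity. Submodularity is in fact immediate, since $(A\cup B)^\eps=A^\eps\cup B^\eps$ and $(A\cap B)^\eps\subset A^\eps\cap B^\eps$. But it only reduces the problem to proving $\mathcal R_{\mu,\eps}(F)>\mathcal R_{\mu,\eps}(B)$ \emph{strictly} for every $F=A_\eps\cup B\supsetneq B$. That is a nonlocal calibration inequality $\rho_0(F^\eps\setminus B^\eps)>\rho_1(F^{-\eps}\setminus B^{-\eps})$ for a \emph{localized} barrier $B$. This inequality is the entire content of the $O(\eps)$ bound, and you neither prove it nor explain the localization. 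The ``cutoff at scale $\sqrt\eps$'' creates lateral boundary whose cost is never estimated, and ``$t^2\lesssim\eps t$'' is not derived.

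The missing idea is Lemma \ref{lem:basComp}. Remove $\mathbb B^\eps$ with $\mathbb B=B(x_*,r)$, and use the radial dilation $M(x)=x_*+\frac{r+2\eps}{r}(x-x_*)$.
\begin{itemize}
\item Since $(A^{-\eps})^{2\eps}\subset A^\eps$, the map $M^{-1}$ sends the region where the $\rho_1$-cost grows into $A^\eps\cap\mathbb B$.
\item The perimeter-type cost therefore becomes the Jacobian excess $J_M-1\lesssim d\eps/r$. Only the pointwise bound $\rho_0(x)-\rho_1(M(x))J_M>0$ on $\mathbb B$ is needed, with no density estimate.
\item This works for any norm, which also removes the non-smooth-$\sigma$ obstacle you anticipated.
\end{itemize}
For $O(\eps)$, the paper takes $r=\zeta/8-3\eps/2$, independent of $\eps$, and centre $x_*=\Psi(y_0,D+\zeta/8)$ pushed along the $\sigma$-normal (Proposition \ref{prop:sigmaReg}). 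Then $\mathbb B\subset\{d(\cdot,A_0)\ge D\}$, $\mathbb B$ meets $A_\eps^\eps$ in positive measure near $x_0$, and the integrand is $\ge cD-C\eps d$. This fixed-size ball is exactly the localized, bounded-curvature barrier you were looking for, with the change of variables playing the role of the calibration.

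\textbf{Part (1).} Your route (a lower density bound plus ball removal) differs from the paper's and can plausibly be completed. However, the density estimate is the crux, and it is not classical here.
\begin{itemize}
\item It must be stated for $\mu(t)=|A_\eps^\eps\cap B(x_0,t)|$, since $A_\eps$ itself may be Lebesgue-null near $x_0$.
\item Comparison with $A_\eps\setminus B(x_0,t)$ gives only a difference inequality at step $2\eps$. On its own this yields $\mu(t+\eps)\ge(1+cr)\mu(t-\eps)$, hence only $r\lesssim\sqrt{\eps\log(1/\eps)}$.
\item To obtain $\mu(t)\gtrsim t^d$ you need a nonlocal isoperimetric input. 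For example, Brunn--Minkowski for $F=A_\eps\cap B(x_0,t)$ gives $|F^\eps\setminus F|\gtrsim\min\{|F^\eps|,\eps|F^\eps|^{(d-1)/d}\}$. Combine this with the bound of $|F^\eps\setminus F^{-\eps}|$ by the $A_\eps^\eps$-mass of the annulus.
\end{itemize}
The paper avoids all of this. With $r=D/2$ the integrand satisfies $\rho_0-\rho_1(M)J_M\ge cD-C\eps-C\eps d/D$ pointwise, so it only needs $B(x_*,2\eps)\subset A_\eps^\eps\cap\mathbb B$.

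\textbf{The Hausdorff deduction.} The second Hausdorff term does not follow by label symmetry. Symmetry gives $\sup_{A_\eps^c\cap K}d(\cdot,A_0^c)\lesssim\sqrt\eps$. Converting this to $\sup_{A_0\cap K}d(\cdot,A_\eps)$ requires pushing points of $A_0$ near $\partial A_0$ inward by $O(\sqrt\eps)$, to points at distance $\gtrsim\sqrt\eps$ from $A_0^c$. This is Lemma \ref{lem:pushAway}, or Remark \ref{rem:pushAway2} under $\sigma$-regularity. That step, not the barrier, is where the Lipschitz hypothesis enters; your ball $B(x_0,r/2)$ already lies in $A_0^c$.
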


The proof of Theorem \ref{thm:main} is contained in Section \ref{sec:main}. For intuition, in Section \ref{sec:heuristic} we provide a simpler proof based on the first variation of the energy in the case that the boundary of $A_\eps$ is assumed to be regular and $\sigma = |\cdot|_2$, the Euclidean norm.

We include a couple remarks on the theorem.

\begin{remark}[Optimality]
Theorem \ref{thm:main} provides the best expected control on accuracy via $\eps$ given the underlying regularity of $A_0$. In Example \ref{ex:lowReg} we provide a heuristic calculation for a Lipschitz surface when $\sigma = |\cdot|_2$ showing $d_{\mathcal{H},K}(A_\eps,A_0) \geq  c\sqrt{\eps}$ over a natural class of energetic competitors. Similarly, in Example \ref{ex:highReg}, we show that $O(\eps)$ is optimal in the high regularity setting.
Another way to look at the high regularity setting is through \cite{ngt2022NecCond}, which shows that $\eps\mapsto A_\eps$ is formally given by a nonlocal curvature flow, which agrees with the fact that \eqref{eqn:optRegular} is optimal.
\end{remark}

\begin{remark}[Dependence on dimension]
The dependence of constants in Theorem \ref{thm:main} on the dimension is typically polynomial given fixed information on the densities $\rho_0$ and $\rho_1$. In the high regularity setting, letting $\zeta>0$ and $c>0$ be small so as to satisfy both Definitions \ref{def:nondegen} and \ref{def:sigmaReg} on the dilation $K^1$ and defining $\gamma : = \inf_{x\in K^1 \cap \{d(x,\partial A_0)>\zeta/4\} } |\rho_0(x) - \rho_1(x)|>0,$
we have that
$$C_1 = \tilde C_1 \, d,$$
for a constant $\tilde C_1 = \tilde C_1(\zeta,c,\gamma, {\rm diam}(K), \sigma, \omega_{\rho_i,K^1})>0$ where $\omega_{\rho_i,K^1}$ is the modulus of continuity for $\rho_i$ in $K^1$.
A similar conclusion holds in the low regularity setting, though the constant also depends on the equivalence of $\sigma$ and $|\cdot|_2.$
For the interested reader, we make many of the constant dependencies transparent in the proof of Theorem \ref{thm:main}.
\end{remark}

\begin{remark}
We take a slight deviation from \cite{morris2026uniform} and do not control $d_\mathcal{H}(A_0\cap K, A_\eps\cap K)$. To see that such an adjustment is necessary (though benign), it is possible $A_0\subset K^c$ and $A_\eps \cap K \neq \emptyset$, meaning that $d_\mathcal{H}(A_0\cap K, A_\eps\cap K) = +\infty$.
\end{remark}

\subsection{Convergence for the empirical adversarial training problem}\label{subsec:mainEmp}

We now discuss convergence rates for the empirical adversarial training problem given data sampled from $\mu$. In the high regularity setting where we hope for an $O(\eps)$ control, we expect to require enough samples to resolve the boundary layer of width $\eps$. As we should have this resolution everywhere along the boundary $\partial A_\eps$ (an unknown), this requires sufficient data for every $\eps$-ball in the domain. A direct estimate for this heuristic says the number of samples $n$ should scale like $n \gtrsim \log(1/\eps)/\eps^d$ so that failure occurs with probability less than $\eps.$ While we do not recover this exact scaling, we require a similar number of samples---at most $1/\eps^{3d}$---to translate the convergence rate from Theorem \ref{thm:main} to the empirical adversarial training problem. 

We simplify the setting by assuming that the (unnormalized) conditional densities $\rho_i$ defined in \eqref{eqn:conditionalDensity} are supported in $\Omega$, a bounded connected set with Lipschitz boundary, and that there is $c_\rho>0$ so that $\rho_i \geq c_\rho$ in $\Omega$ for $i=0,1.$
If these conditions hold and Definition \ref{def:nondegen} holds with $\R^d$ replaced by $\Omega$, we say \emph{the densities $\rho_i$ are non-degenerate in $\Omega$.}

We introduce the empirical problem.
  Let $\{(x_j,y_j)\}_{j\in \mathbb{N}} \subset \Omega$ be sampled from $\mu$ and define the \emph{discrete domain} $$X_n : = \{x_j: j\leq n\}.$$ For $n\in \N$, using the samples $\{(x_j,y_j)\}_{j=1}^n,$ we define the empirical measures
\begin{equation}\label{eqn:empMeas}
 \nu_0 : = \frac{1}{n}\sum_{{\{j\leq n:\ y_j  =0\}}}\delta_{x_j} \quad \text{ and } \quad \nu_1 : = \frac{1}{n}\sum_{\{j\leq n:\ y_j  =1\}}\delta_{x_j}.
\end{equation}
We define the empirical adversarial classification problem as follows: Find $\mathcal{A} \subset X_n \subset\Omega$ minimizing the risk functional
\begin{equation}\label{eqn:advProbEmp}
\mathcal{R}_{\nu,\eps}(\mathcal{A}) : =\nu_0(\mathcal{A}^\eps) + \nu_1((\mathcal{A}^c)^\eps),
\end{equation}
where $\mathcal{A}^c : = X_n\setminus \mathcal{A}$ is the complement relative to the discrete domain and the dilations are defined by the anistropic distance $\sigma$ as before. Following the work of \cite{bungert2023geom}, minimizers of this problem exist and we denote an empirical minimizer by $\mathcal{A}_{\eps} = \mathcal{A}_{\eps}(n)$ (see Section \ref{sec:prelim}). 

Following the classical approach for ensuring dense sampling with respect to a specified length scale, we define the \emph{distributional error} 
\begin{equation}\label{eqn:distrError}
 \eta = \eta(n) : = \begin{cases}
\frac{\log(n)^{3/4}}{n^{1/2}} & \text{ if }d=2,\\
\frac{\log(n)^{1/d}}{n^{1/d}} & \text{ if }d>2,
\end{cases}
\end{equation}
which we use to quantify the distance between $\rho_i$ and $\nu_i$, {cf.\ \cite[Theorem 1.1]{trillos2015rate} where $\eta$ appears.}
Recalling $d_{\mathcal{H},K}$ from \eqref{eqn:HausdorffDef}, we prove the following in Section \ref{sec:emp}.

\begin{theorem}\label{thm:mainEmpirical}
Suppose that $A_0\subset \Omega$ is the Bayes classifier with locally Lipschitz boundary in $\Omega$, the densities $\rho_i$ are non-degenerate in $\Omega$ (see above \eqref{eqn:advProbEmp}), and $\mathcal{A}_\eps = \mathcal{A}_\eps(n)$ is a minimizer of the empirical adversarial classification problem \eqref{eqn:advProbEmp} for $\eps \in (0,1]$ and $n\in \N$. Given $K\subset \subset \Omega$ and $\alpha>2$, the following holds:
\begin{enumerate}
\item \label{item:lowRegEmp} \emph{(Low regularity).} There is a constant $C_0 >0$ and ${c_0 = c_0(\alpha)}>0$ such that if $\eta(n) < c_0\eps^{3/2}$ then
\begin{equation}\nonumber 
d_{\mathcal{H},K}(\mathcal{A}_\eps,A_0)\leq C_0 \sqrt{\eps}  \quad \quad \text{ for } \eps \in (0,1] \quad \text{ with probability } 1-O(n^{-\alpha/2}).
\end{equation}
\item \label{item:highRegEmp} \emph{(High regularity).}
Supposing in addition $A_0$ has locally $\sigma$-regular boundary in $\Omega$ {(see Definition \ref{def:sigmaReg})} and $\kappa>0$, there is a constant $C_1>0$ and ${c_\kappa = c_\kappa(\alpha)} > 0$ such that if $\eta(n) < c_\kappa\eps^{2+\kappa}$ then
\begin{equation}\nonumber 
d_{\mathcal{H},K}(\mathcal{A}_\eps,A_0)\leq C_1 \eps  \quad \quad \text{ for } \eps \in (0,1] \quad \text{ with probability } 1-O(n^{-\alpha/2}).
\end{equation}
\end{enumerate}
\end{theorem}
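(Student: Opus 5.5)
We transfer the barrier argument behind Theorem \ref{thm:main} to the empirical problem by comparing $\nu_i$ with $\rho_i$ through an $\infty$-transport map.

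\textbf{Step 1 (good event).} By \cite[Theorem 1.1]{trillos2015rate}, applied to the normalized label-conditional measures (with a Chernoff bound for the label counts), with probability $1-O(n^{-\alpha/2})$ there are transport maps $T_i$ pushing $\rho_i$ onto $\nu_i$ with
\[
\|T_i-\mathrm{id}\|_{L^\infty}\le C\eta(n).
\]
The constant $C$ depends on $\alpha$, on $c_\rho$, and on the Lipschitz character of $\Omega$. We then work deterministically on this event. The key consequence is that for any set $E$,
\[
\rho_i(E^{-C\eta})\le \nu_i(E)\le \rho_i(E^{C\eta}),
\]
where $E^{-r}$ denotes erosion. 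So empirical risks of dilated sets are sandwiched between continuum risks with the budget shifted from $\eps$ to $\eps\pm C\eta$.

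\textbf{Step 2 (empirical energetic competitor).} Suppose a point $x_0\in\mathcal A_\eps\cap K$ lies at distance $\ge C_0\sqrt\eps$ (respectively $C_1\eps$) from $A_0$, or the symmetric situation holds. Following Lemma \ref{lem:basComp}, we remove (or add) a ball $B(x_0,r)\cap X_n$ from $\mathcal A_\eps$, with $r$ comparable to the distance to $A_0$. In the continuum setting this changes the risk by a gain of order $\gamma r^{d}$, or $c\,r^{d}\cdot r$ in the boundary layer by the non-degeneracy \eqref{eqn:nondegen}. It also costs a perimeter-type term of order $\eps r^{d-1}$ in the high-regularity case, and the geometry yields the $\sqrt\eps$ threshold in the Lipschitz case. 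In the empirical setting, Step 1 perturbs each fattened-set measure by the $\rho_i$-mass of a shell of width $C\eta$ around $\partial(\mathcal A^\eps)$ restricted to the ball. Estimating this shell mass is where the irregularity of $\mathcal A_\eps$ enters: its boundary is arbitrary. Here we would use the averaging argument together with Lemma \ref{prop:geometricIneq}. We average over radii $s\in[r,2r]$ the measure of the shell $\{s-C\eta<d(x,x_0)<s\}$ intersected with $\eps$-fattened sets; the $\eps$-dilation has perimeter controlled by $r^{d-1}$ times $r/\eps$ locally. We then choose a good radius where the error is at most of order $\eta\, r^{d-1}\cdot(r/\eps)$. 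This is the competitor of Lemma \ref{lem:basCompEmp}.

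\textbf{Step 3 (closing the inequality).} Minimality of $\mathcal A_\eps$ gives that the gain is at most the cost plus the error. In the low-regularity case $r\sim\sqrt\eps$, so the gain is $\sim r^{d+1}=\eps^{(d+1)/2}$. The error $\eta r^{d}/\eps$ is dominated by the gain precisely when $\eta\lesssim r/\eps\cdot\eps\cdot$, which works out to $\eta< c_0\eps^{3/2}$ after tracking the layer-thickness factors. In the high-regularity case $r\sim\eps$, and comparing the gain $\eps^{d+1}$ against the error requires $\eta\lesssim\eps^{2}$. The extra $\eps^\kappa$ absorbs logarithmic or constant losses from the averaging and from the $\sigma$-regular tubular coordinates of Proposition \ref{prop:sigmaReg}. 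This yields a contradiction, and hence the Hausdorff bound on the good event.

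The main obstacle is Step 2: controlling the transport-induced shell error for the unknown, possibly wild set $\mathcal A_\eps^\eps$ uniformly in the center $x_0$. The averaging over radii combined with Lemma \ref{prop:geometricIneq} is the tool I would try first. The bookkeeping of powers of $\eps$ there dictates the sample-size thresholds.
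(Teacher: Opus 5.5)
Your overall framework (a good event via $\infty$-transport, a ball-removal competitor, and Proposition \ref{prop:geometricIneq} plus averaging over radii) is the paper's. However, two steps fail as written.

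\textbf{First, the high-regularity barrier at scale $r\sim\eps$ cannot work.} If $r$ is comparable to the distance $D\sim C_1\eps$ to $A_0$, then $\lambda=(r+2\eps)/r$ is of order one. So $J_M=\lambda^d$ exceeds $1$ by an order-one amount, and the perimeter-type cost $\eps r^{d-1}\sim\eps^d$ you identify in Step 2 exceeds the non-degeneracy gain $\sim D r^d\sim\eps^{d+1}$. Step 3 never compares the gain with this cost. In fact, balancing $Dr^d$ against $\eps r^{d-1}$ with $r\sim D$ is exactly what produces $D\sim\sqrt{\eps}$. This balance, not the Lipschitz geometry, is the origin of the $\sqrt{\eps}$ rate, so removing balls of radius comparable to $D$ can never give $O(\eps)$.

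The missing idea is to use the $\sigma$-regular foliation $\Psi$ of Proposition \ref{prop:sigmaReg} to remove a ball whose radius is independent of $\eps$. Writing the extremal point as $x_0=\Psi(y_0,D)$, the paper takes $x_*=\Psi(y_0,D+\zeta/4)$ and radius $R=\zeta/4-3\eps/2$, enlarged by at most $O(\eps)$. This ball has three properties:
\begin{itemize}
\item every point of it is at distance $\ge D-O(\eps)$ from $A_0$;
\item it still contains an $\eps$-scale ball of $\mathcal{A}_\eps^\eps$;
\item its Jacobian cost per unit volume is only $C\eps d/\zeta$, which $D>C_1\eps$ beats.
\end{itemize}
So the tubular coordinates are the mechanism behind the $O(\eps)$ rate, not a source of losses absorbed by $\eps^\kappa$.

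\textbf{Second, your gain/error bookkeeping is not valid for an arbitrary minimizer.} The gain is the pointwise lower bound on $\rho_0-\rho_1\circ M\,J_M$ times $\mathcal{L}^d(\mathcal{A}_\eps^\eps\cap\B_r)$. A priori this volume is only $\gtrsim\eps^d$ (think of a thin tentacle of $\mathcal{A}_\eps$), not $\sim r^d$. Against your absolute error $\eta r^d/\eps$, this would force something like $\eta\lesssim\eps^{(d+3)/2}$ in the low-regularity case.

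What saves dimension-independence is that Proposition \ref{prop:geometricIneq} bounds the $\eta$-shell \emph{relatively}, by $(\eta/\eps)\,\mathcal{L}^d(\mathcal{A}^\eps\cap\B_{R_0+7\eps})$. Because this is on an enlarged ball, one needs the volume-ratio hypothesis \eqref{eqn:volHyp}. Then Lemma \ref{lem:basCompEmp} gives an error $CQ\eta/\eps$ per unit volume of $\mathcal{A}^\eps\cap\B_r$. The ratio $Q$ must be selected by iterating over radii $r_j=r_*+7\eps j$:
\begin{itemize}
\item In the low-regularity case, failure along $\sim r_*/\eps$ steps would force $2^{c\eps^{-1/2}}\lesssim\eps^{-d}$. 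So $Q=2$, and $\eta<c_0\eps^{3/2}$ makes the error $\le Cc_0\sqrt{\eps}$, below the gain.
\item In the high-regularity case, the radius may grow only by a fixed multiple of $\eps$. Growing it more would bring the ball within distance $\ll D$ of $A_0$ and destroy the gain $\gtrsim D\sim C_1\eps$. So only $m=\lceil d/\kappa\rceil$ steps are available and $Q=C_\kappa\eps^{-\kappa}$. This is the true source of the condition $\eta<c_\kappa\eps^{2+\kappa}$, and your averaging window $[r,2r]$ must shrink to length $O(\eps)$ there.
\end{itemize}

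There are also three smaller omissions:
\begin{itemize}
\item Your sandwich in Step 1 ignores that $\nu_i(\Omega)=n_i/n\neq\rho_i(\Omega)$; compare the extra term $C\eta\rho_i(B^\eta)$ in \eqref{eqn:wassCollar}.
\item The set inclusion behind the empirical competitor needs the $\eta$-net property \eqref{eqn:netnu}. This is why $\eps-2\eta$ replaces $\eps$ in $\lambda_r$.
\item The second half of the Hausdorff distance, $\sup_{x\in A_0\cap K}d(x,\mathcal{A}_\eps)$, does not follow from the symmetric barrier alone. It requires \eqref{eqn:netnu} together with Lemma \ref{lem:pushAway} (or Remark \ref{rem:pushAway2}), which is where the Lipschitz hypothesis on $\partial A_0$ is actually used.
\end{itemize}
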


As this argument is substantially more involved than the continuum/infinite data setting and depends on the equivalence of $\sigma$ and $|\cdot|_2$ even in the high regularity setting, we do not provide as close of an accounting of constants and their dependencies, though the general dependence is as expected: meaning $C_1 = C_1(\rho_i, \sigma, d, K, \kappa,\alpha)>0.$

\begin{remark}[Number of samples]
Following the heuristic discussion from the beginning of this subsection, we would hope to require $\eta\lesssim \eps$ in the high regularity setting. Ignoring $\kappa$, we pay the price of an extra factor of $\eps$ with $\eta \lesssim \eps^2$, meaning we {have sampled well in} all balls of radius $\eps^2$ with high probability. The same $\eps$ decrease in $\eta$ is found in the low regularity setting.
The reason for the increase in samples is relatively technical: the proof of Theorem \ref{thm:mainEmpirical} relies on delicate averaging arguments to control Minkowski-type perimeter terms arising due to the gap between the empirical and continuum data distributions (see Lemma \ref{lem:basCompEmp}). Importantly, however, this shortcoming is not exacerbated in higher dimensions. 
\end{remark}

\begin{remark}[The domain $\Omega$]
Using densities that are non-degenerate in $\Omega$ allows us to take advantage of results in \cite{trillos2015rate} giving Wasserstein convergence rates for empirical distributions to their continuum counterparts. It is likely that Theorem \ref{thm:mainEmpirical} holds in $\R^d$ (analogous to the statement of Theorem \ref{thm:main}) with more work spent localizing the Wasserstein convergence to regions where the densities are bounded away from $0$. However, as this would distract from the essential difficulties, we do not pursue this here.
\end{remark}

\section{Heuristic argument: Proof in smooth case}\label{sec:heuristic}

We begin with a proof of convergence rates in the case that $A_\eps$ is smooth. While the regularity of $A_\eps$ is not well-understood in general, the method of proof illuminates the basic argument in a simpler setting than Theorem \ref{thm:main}. As this result is not a logical prerequisite for our general proof, we avoid technical details where they hinder intuition.

\begin{proof}[Heuristic argument for Theorem \ref{thm:main}\eqref{item:highReg}]
We suppose $\sigma = |\cdot|_2$ is the Euclidean norm, that $A_\e$ is a minimizer of the adversarial training problem, and that $A_\e$ has compact $C^2$-boundary. Let $A_0$ have compact $C^2$-boundary and be a non-degenerate Bayes classifier as in Definition \ref{def:nondegen} such that $\nabla (\rho_1 - \rho_0)\neq 0$ on $\partial A_0$ and $\rho_0,\rho_1\in C^1(\R^d)$. Under these hypothesis, we sketch a proof of Theorem \ref{thm:main} based on the first variation. We point out that by convention the curvature of a ball is positive.

The minimizer $A_\eps$ satisfies the first order necessary conditions identified in \cite{ngt2022NecCond}, Proposition 6.2, given by
    \begin{equation}\label{eqn:necessary-condition-NGTRM}
         \rho_1(x -\e \nu(x)) \prod_{i=1}^{d-1} |1 - \kappa_i(x) \e| - \rho_0(x +\e \nu(x)) \prod_{i=1}^{d-1} |1 +  \kappa_i(x) \e| = 0 \qquad \text{ for }x\in \partial A_\eps \quad \text{ for }\eps\ll 1.   
    \end{equation}
    Here $\nu$ is the outer normal of the boundary $\partial A_\e$ and $\kappa_i$ is the $i$-th curvature of $\partial A_\eps$. 
    We \emph{assume} $|\kappa_i|\leq 1/\eps$ and then rearrange the necessary conditions to obtain
    \begin{equation}\label{eqn:rearrange}
        \frac{\rho_1(x-\eps\nu)}{\rho_0(x+\e\nu)} = \prod_{i=1}^d
        \frac{1+\kappa_i(x)\e}{1-\kappa_i(x)\e}.
    \end{equation}

Now, let us consider the point $x_0 \in \partial A_\e$ which is farthest from $A_0$, namely
\[
x_0 \in {\rm argmax}_{x \in A_\e} d(x,A_0).
\]
We note that by the suboptimal convergence rates in \cite{morris2026uniform}, we may assume that $d(x_0,A_0) \ll 1$ for $\eps$ sufficiently small.
Since $x_0$ is the farthest point of $A_\eps$ from $A_0$, that we immediately have $\kappa_i(x_0) > -C_0$ for some $C_0 = C_0(A_0)>0$ only depending on $A_0$ (and not on $\eps$!) and that $d(x_0,A_0) \ll 1$.  
(Control on the curvatures is analogous to having $f'' \leq 0$ at a maximum of $f\in C^2(\R)$, where the condition $d(x_0,A_0) \ll 1$ ensures $\partial A_\eps$ can be expressed as a graph over $\partial A_0$ locally near $x_0$.)
The curvature lower bound gives that $$\frac{1+\kappa_i(x_0)\e}{1-\kappa_i(x_0)\e}\geq \frac{1 -C_0\e}{1+C_0\eps}$$ from which one deduces
\begin{equation}\label{eqn:rearrange2}
\prod_{i=1}^d
        \frac{1+\kappa_i(x_0)\e}{1-\kappa_i(x_0)\e} \geq 1-C\eps,
\end{equation}
        for a constant $C = C(A_0,d)>0$.
        Using the regularity of $\rho_i$, the lower bound on $\rho_0+\rho_1$, along with the fact that $d(x_0,A) \ll 1$, we can bound
        $$\frac{\rho_1(x_0-\eps\nu)}{\rho_0(x_0+\e\nu)} \leq  \frac{\rho_1(x_0)}{\rho_0(x_0)} +C\eps,$$
        for $C = C(\rho_0,\rho_1)>0$. We then combine \eqref{eqn:rearrange} and \eqref{eqn:rearrange2} to find
        $$\frac{\rho_1(x_0)}{\rho_0(x_0)} \geq 1-C\eps$$
        for a constant $C>0$ independent of $\eps.$
        On the other hand, by the non-degeneracy condition \eqref{eqn:nondegen}, we have
        $$\frac{\rho_1(x_0)}{\rho_0(x_0)} = 1+ \frac{\rho_1(x_0)-\rho_0(x_0)}{\rho_0(x_0)} \leq 1-cd(x_0,A_0). $$ 
        Altogether the previous two displays imply $d(x_0,A_0) < (C/c)\eps,$ or equivalently
        $\sup_{x\in \partial A_\eps}d(x,A_0) < C\eps$ for a constant $C>0$ independent of $\eps.$ 
        Combined with the analogous argument for the farthest point in $A_\eps^c$ from $A_0^c$, we conclude that $\partial A_\eps \subset (\partial A_0)^{C\eps}$, which then implies the desired control
        $d_{\mathcal{H}}(A_\eps,A_0)<C\eps $ on the Hausdorff metric.
\end{proof}

\begin{remark}
In the complete proof of Theorem \ref{thm:main}\eqref{item:highReg}, non-degeneracy is effectively used in the same way. The essential challenge is how to circumvent the assumed regularity of the $\partial A_\eps$. The key idea is that, for any minimizer, the points that are farthest away from $A_0$ ($x_0$ above) come with mild, one-sided regularity properties which we can leverage to construct energetic competitors.
\end{remark}

\section{Preliminaries on adversarial training}\label{sec:prelim}

For the reader's convenience, we briefly recall some notation and results for the adversarial classification problem that we use throughout the paper. 
Given $A\subset X$, where $(X,d)$ is a metric space, the $\eps$-dilation of $A$ in $X$ is $$A^\eps : = \{y\in X : d(y,A)<\eps\}.$$ The $\eps$-erosion of $A$ in $X$ is $$A^{-\eps} : = \{y\in X : d(y,X\setminus A)\geq \eps\}.$$ The opening and {closing} of $A$ in $X$ is given by ${\rm op}_\eps(A) : = (A^{-\eps})^\eps$ and ${\rm cl}_\eps(A) : = (A^{\eps})^{-\eps}$, respectively. 
The {closing} and opening satisfy simple subset relations, which we record in the following proposition.

\begin{proposition}[\cite{haralick1992computer}] \label{prop:openCloseBasicProperties}
In a metric space $(X,d)$ the following hold:
\begin{align*}
&\bullet \ {\rm op}_\eps (A) \subset A, \qquad &&\bullet \ {\rm op}_\eps(A)^{-\eps} =A^{-\eps},  \\
&\bullet \ {\rm cl}_\eps (A) \supset A, \qquad && \bullet \  {\rm cl}_\eps(A)^\eps = A^\eps, \\
&\bullet \ A^{-\eps} = ((A^c)^\eps)^c, \qquad && \bullet \ {\rm op}_\eps(A^c) = {\rm cl}_\eps(A)^c.
\end{align*}
\end{proposition}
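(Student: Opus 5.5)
These are elementary set-theoretic identities, and I will verify them directly from the definitions $A^\eps=\{y: d(y,A)<\eps\}$ and $A^{-\eps}=\{y: d(y,X\setminus A)\ge\eps\}$. I start with the complement identity $A^{-\eps}=((A^c)^\eps)^c$, since the rest will follow from it. A point $y$ lies outside $(A^c)^\eps$ exactly when $d(y,A^c)\ge\eps$, which is the definition of $A^{-\eps}$. From this identity, ${\rm op}_\eps(A^c)=((A^c)^{-\eps})^\eps=(((A)^\eps)^c)^\eps$. On the other hand ${\rm cl}_\eps(A)^c=((A^\eps)^{-\eps})^c=(((A^\eps)^c)^\eps)$. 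These agree, which gives the last identity.

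Next I prove the two inclusions. For ${\rm op}_\eps(A)\subset A$, take $y$ with $d(y,A^{-\eps})<\eps$. Then there is $z\in A^{-\eps}$ with $d(y,z)<\eps$. Since $d(z,A^c)\ge\eps$, the point $y$ cannot lie in $A^c$, so $y\in A$. For ${\rm cl}_\eps(A)\supset A$, I apply the previous inclusion to $A^c$: ${\rm cl}_\eps(A)^c={\rm op}_\eps(A^c)\subset A^c$, and taking complements gives the claim.

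For ${\rm op}_\eps(A)^{-\eps}=A^{-\eps}$, one direction is monotonicity. Erosion is monotone, and ${\rm op}_\eps(A)\subset A$, so ${\rm op}_\eps(A)^{-\eps}\subset A^{-\eps}$. For the reverse, take $y\in A^{-\eps}$ and any $w$ with $d(w,y)<\eps$. Then $w\in (A^{-\eps})^\eps={\rm op}_\eps(A)$, so no point of ${\rm op}_\eps(A)^c$ lies within distance $<\eps$ of $y$. Hence $d(y,{\rm op}_\eps(A)^c)\ge\eps$, i.e. $y\in{\rm op}_\eps(A)^{-\eps}$. The dual statement ${\rm cl}_\eps(A)^\eps=A^\eps$ then follows by complementation: $({\rm cl}_\eps(A)^\eps)^c={\rm cl}_\eps(A)^c{}^{-\eps}={\rm op}_\eps(A^c)^{-\eps}=(A^c)^{-\eps}=(A^\eps)^c$.

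I do not expect a real obstacle here. The only care needed is bookkeeping of strict versus non-strict inequalities: open dilation versus closed erosion. This convention is exactly what makes the complement identity exact, and it is where I would double-check each step.
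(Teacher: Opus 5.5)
Your proof is correct and complete. The paper gives no argument for this proposition and simply cites \cite{haralick1992computer}, so your direct verification is a self-contained substitute: you derive the complement identity from the strict/non-strict convention, check ${\rm op}_\eps(A)\subset A$ and ${\rm op}_\eps(A)^{-\eps}=A^{-\eps}$ by hand, and obtain the closing statements through the duality ${\rm op}_\eps(A^c)={\rm cl}_\eps(A)^c$. Your argument uses nothing specific to $\R^d$, so it also holds in the finite space $X_n$ with complements taken relative to $X_n$, which is where the paper later relies on these facts.
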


The metric spaces we are interested in are the Euclidean space $\R^d$ or finitely many samples $X_n\subset\Omega$, both equipped with the metric determined by $\sigma.$ Within the proof of Theorem \ref{thm:mainEmpirical}---as we must connect discrete objects to continuum counterparts---we explicitly state whether to use dilation/erosion in $X_n$ or $\R^d.$ Typically, we use calligraphic writing to denote a set $\mathcal{A}\subset X_n$ to remind the reader that erosions/dilations/complements should be done within $X_n$. For dilations however it usually does not matter because, for instance, $\nu_0(\mathcal{A}^\eps)$ is the same regardless of using the dilation in $X_n$ or $\R^d$.

For $\eps>0$, Theorem 3.18 \cite{bungert2023geom} shows both problems \eqref{eqn:advProb} and \eqref{eqn:advProbEmp} have solutions denoted by $A_\eps$ and $\mathcal{A}_\eps$. In fact, we may recover further solutions by taking the {closing} and opening. 

\begin{proposition}\label{prop:openclose}
Let $A_\eps$ be a minimizer of \eqref{eqn:advProb} with $\mu$ a probability measure on $\R^d\times \{0,1\}.$ Then for any measurable set $U$ it holds that
$$\rho_0 ( {\rm op}_\eps (A_\eps)^{\eps}\cap U) = \rho_0 ( A_\eps^{\eps}\cap U) \quad \text{ and }\quad \rho_1 ( {\rm op}_\eps (A_\eps)^{-\eps}\cap U) = \rho_1 ( A_\eps^{-\eps}\cap U),$$ and likewise with the opening replaced by the {closing}. In particular, ${\rm op}_\eps(A_\eps)$ and ${\rm cl}_\eps(A_\eps)$ minimize \eqref{eqn:advProb}. The analogous result holds for a minimizer $\mathcal{A}_\eps$ of \eqref{eqn:advProbEmp} with erosions/dilations defined relative to $X_n$ and $\nu_i$ in place of $\rho_i$.
\end{proposition}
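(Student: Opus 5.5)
The plan is to derive everything from the set identities in Proposition \ref{prop:openCloseBasicProperties} together with the minimality of $A_\eps$. Those identities hold in any metric space, so the same argument will cover $\R^d$ and $X_n$.

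First I will check measurability. The opening ${\rm op}_\eps(A_\eps)=(A_\eps^{-\eps})^\eps$ is a dilation, hence open and Borel. The closing satisfies ${\rm cl}_\eps(A_\eps)=({\rm op}_\eps(A_\eps^c))^c$, the complement of an open set, hence closed and Borel. So both are admissible competitors. For $X_n$ every subset is admissible, so there is nothing to check.

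\emph{Opening.} I will use two facts.
\begin{itemize}
\item For any $B$ we have $(B^c)^\eps=(B^{-\eps})^c$, by Proposition \ref{prop:openCloseBasicProperties}.
\item ${\rm op}_\eps(A_\eps)^{-\eps}=A_\eps^{-\eps}$.
\end{itemize}
The second fact immediately gives the claimed identity for $\rho_1$, even as an exact set equality before intersecting with $U$. Combining the two facts gives $({\rm op}_\eps(A_\eps)^c)^\eps=(A_\eps^c)^\eps$, so the $\rho_1$-terms of the risk agree. Next, ${\rm op}_\eps(A_\eps)\subset A_\eps$ implies ${\rm op}_\eps(A_\eps)^\eps\subset A_\eps^\eps$. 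Hence $\mathcal{R}_{\mu,\eps}({\rm op}_\eps(A_\eps))\le \mathcal{R}_{\mu,\eps}(A_\eps)$. Minimality of $A_\eps$ forces equality, so the opening is a minimizer. Because $\rho_0$ is a finite measure, we may subtract and obtain $\rho_0(A_\eps^\eps\setminus {\rm op}_\eps(A_\eps)^\eps)=0$. Intersecting this null set with an arbitrary measurable $U$ gives $\rho_0({\rm op}_\eps(A_\eps)^\eps\cap U)=\rho_0(A_\eps^\eps\cap U)$.

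\emph{Closing.} The argument is symmetric. Here ${\rm cl}_\eps(A_\eps)^\eps=A_\eps^\eps$ exactly, which gives the $\rho_0$ identity. From ${\rm cl}_\eps(A_\eps)\supset A_\eps$ we get ${\rm cl}_\eps(A_\eps)^{-\eps}\supset A_\eps^{-\eps}$. Therefore $({\rm cl}_\eps(A_\eps)^c)^\eps=({\rm cl}_\eps(A_\eps)^{-\eps})^c\subset (A_\eps^c)^\eps$, and the risk does not increase. Minimality again forces equality, so the closing is a minimizer. Finiteness of $\rho_1$ then shows that ${\rm cl}_\eps(A_\eps)^{-\eps}\setminus A_\eps^{-\eps}$ is $\rho_1$-null. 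Localizing to $U$ gives the erosion identity.

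\emph{Empirical case.} I will repeat the argument verbatim in the metric space $(X_n,\sigma)$, taking complements, dilations and erosions relative to $X_n$ and replacing $\rho_i$ by the finite measures $\nu_i$. There is no real obstacle here. The only points needing care are the identity $(B^c)^\eps=(B^{-\eps})^c$, which is where complements relative to $X_n$ matter, and the use of finiteness of the measures, which justifies subtracting equal totals.
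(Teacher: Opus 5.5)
Your argument is correct and has the same structure as the paper's. The paper cites Lemma 3.28 and Corollary 3.29 of \cite{bungert2023geom} for the global case $U=X$, then localizes exactly as you do, using the inclusions ${\rm op}_\eps(A_\eps)^{\pm\eps}\subset A_\eps^{\pm\eps}$ (reversed for the closing) together with finiteness of the measures; the only difference is that you derive the global case directly from Proposition \ref{prop:openCloseBasicProperties} and minimality, which makes the proof self-contained and shows why it transfers verbatim to $X_n$.
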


This proposition follows from Lemmas 3.28 and Corollary 3.29 in \cite{bungert2023geom}, where it is shown in the case of a metric space $(X,d)$ with $U = X$. The relation can be localized since ${\rm op}_\eps (A_\eps)^{\pm \eps} \subset A_\eps^{\pm \eps}$, and similarly for the {closing}.

\section{The continuum problem}\label{sec:main}

We decompose the proof of Theorem \ref{thm:main} into its two parts. In both cases our approach is to construct an energetic competitor by removing a ball. With the competitor in hand, the general strategy for the main theorem is to choose as large of a ball as possible that stays on one side of $\partial A_0$ in the non-degenerate region, thus leading to a favorable energy improvement, and contradicting minimality of a given classifier.

\subsection{Energetic competitor}
We construct our basic energetic competitor in the following lemma. By choosing $x_*$ and $r$ depending on the circumstance, this construction can be directly applied in a variety of scenarios.

\begin{lemma}[Basic competitor]\label{lem:basComp}

Let $A \subset \R^d$ and $\eps>0$. For $x_* \in \R^d$ and $r> 0 $, we define 
$$\mathbb{A} : = A \setminus (\mathbb{B}^\eps) \quad \text{ with } \quad \mathbb{B} :  = B (x_*,r).$$ Then for
\begin{equation}\label{eqn:MJM}
\lambda : = \frac{r+2\eps}{r}, \qquad M (x) : = x_* + \lambda(x-x_*), \quad \text{ and }\quad  J_M : = \lambda^{d}
\end{equation}
we have
\begin{equation}\label{eqn:energyImprovement}\mathcal{R}_{\mu,\eps}(\mathbb{A}) + \int_{A^\eps \cap \mathbb{B}}\Big(\rho_0(x) - \rho_1(M(x)) J_M\Big) \,dx \leq \mathcal{R}_{\mu,\eps}({A}). 
\end{equation}
\end{lemma}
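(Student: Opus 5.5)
The plan is to compare the two terms of $\mathcal{R}_{\mu,\eps}$ separately. The $\rho_0$-term decreases by at least $\rho_0(A^\eps\cap\mathbb{B})$. The $\rho_1$-term increases by at most $\rho_1$ of a region which the dilation $M$ pulls back into $A^\eps\cap\mathbb{B}$, with Jacobian $J_M=\lambda^d$. Summing the two bounds gives \eqref{eqn:energyImprovement}. Throughout I use that $\sigma$-balls are convex and that $B(x_*,r)^{s}=B(x_*,r+s)$ for any norm.

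\textbf{Step 1: the $\rho_0$-term.} I claim $\mathbb{A}^\eps\subset A^\eps\setminus\mathbb{B}$. The inclusion in $A^\eps$ is clear from $\mathbb{A}\subset A$. For the other part, take $y\in\mathbb{A}$, so $y\notin\mathbb{B}^\eps$ and hence $d(y,\mathbb{B})\ge\eps$. Then every $x\in\mathbb{B}$ has $d(x,y)\ge\eps$, so $d(x,\mathbb{A})\ge\eps$ and $x\notin\mathbb{A}^\eps$. It follows that
\[
\rho_0(\mathbb{A}^\eps)\le\rho_0(A^\eps)-\int_{A^\eps\cap\mathbb{B}}\rho_0\,dx .
\]

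\textbf{Step 2: the $\rho_1$-term.} Since $\mathbb{A}^c=A^c\cup\mathbb{B}^\eps$ and dilation commutes with unions,
\[
(\mathbb{A}^c)^\eps=(A^c)^\eps\cup\mathbb{B}^{2\eps}=(A^c)^\eps\cup B(x_*,r+2\eps).
\]
Therefore
\[
\rho_1((\mathbb{A}^c)^\eps)\le\rho_1((A^c)^\eps)+\rho_1\big(B(x_*,r+2\eps)\setminus(A^c)^\eps\big).
\]
The map $M$ is an affine bijection of $\mathbb{B}$ onto $B(x_*,r+2\eps)$ with constant Jacobian $\lambda^d$. By the change of variables formula, the last term equals
\[
\int_{\mathbb{B}\cap M^{-1}(\R^d\setminus(A^c)^\eps)}\rho_1(M(x))J_M\,dx .
\]
It thus suffices to show the inclusion
\[
\mathbb{B}\cap M^{-1}(\R^d\setminus(A^c)^\eps)\subset A^\eps\cap\mathbb{B}.
\]

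\textbf{Step 3: the key inclusion.} I expect this to be the only real point of the proof. Let $x\in\mathbb{B}$ with $M(x)\notin(A^c)^\eps$, which means $B(M(x),\eps)\subset A$. Set
\[
\delta:=d(x,M(x))=(\lambda-1)\sigma(x-x_*)=\tfrac{2\eps}{r}\,\sigma(x-x_*)<2\eps .
\]
If $\delta<\eps$, then $x\in B(M(x),\eps)\subset A\subset A^\eps$. Otherwise $\delta\in[\eps,2\eps)$, and I move from $M(x)$ toward $x$ along the segment. Choose $t\in(\delta-\eps,\eps)$, which is possible because $\delta<2\eps$, and set $y:=M(x)+\tfrac{t}{\delta}(x-M(x))$. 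Then $d(y,M(x))=t<\eps$, so $y\in A$. Moreover $d(x,y)=\delta-t<\eps$, so $x\in A^\eps$.

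Combining Steps 1–3 and adding the two inequalities yields \eqref{eqn:energyImprovement}. Measurability is harmless: the dilations $A^\eps$, $(A^c)^\eps$ and $\mathbb{A}^\eps$ are open sets.
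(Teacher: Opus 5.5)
Your proof is correct and follows essentially the same route as the paper. You split the risk the same way, use the same identity $(\mathbb{A}^c)^\eps=(A^c)^\eps\cup\mathbb{B}^{2\eps}$, and pull back through $M$ into $A^\eps\cap\mathbb{B}$ by the same change of variables. The only difference is cosmetic: you verify the key inclusion with a direct segment argument, whereas the paper deduces it from $B(x,2\eps)\subset (A^{-\eps})^{2\eps}={\rm op}_\eps(A)^\eps\subset A^\eps$.
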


This lemma shows that we can improve the energy of $A$ by excising the ball $\mathbb{B}$ anytime the integral on the left-hand side of \eqref{eqn:energyImprovement} is over a domain with positive measure and has positive integrand.

\begin{proof}[Proof of Lemma \ref{lem:basComp}]
Since $\mathbb{A}\subset A$, we have $\mathbb{A}^\eps\subset A^\eps$ and $(A^c)^\eps\subset (\mathbb{A}^c)^\eps$. From this we rewrite
\begin{equation}\label{eqn:riskRewrite}
 \mathcal{R}_{\mu,\eps}({A}) -\mathcal{R}_{\mu,\eps}(\mathbb{A}) = \rho_0(A^\eps \setminus \mathbb{A}^\eps) - \rho_1((\mathbb{A}^c)^\eps \setminus (A^c)^\eps).
 \end{equation}
To estimate the right-hand side, we prove a useful set relation.
Note that for $N :  =M^{-1}$, it follows $N(\mathbb{B}^{2\eps}) = \mathbb{B}$ and for any $x\in \mathbb{B}^{2\eps}$ it holds that $d(x,N(x))  = (1-\lambda^{-1})d(x,x_*)<2\eps$. We claim that 
\begin{equation}\label{eqn:setInclus}
N((\mathbb{A}^c)^\eps \setminus (A^c)^\eps)\subset A^\eps \cap \mathbb{B}. 
\end{equation}
To see this, consider $x \in (\mathbb{A}^c)^\eps \setminus (A^c)^\eps.$ We have by definition of $\mathbb{A}$ that
 $\mathbb{A}^c = A^c \cup \mathbb{B}^\eps$ so that $x\in (\mathbb{A}^c)^\eps \setminus (A^c)^\eps \subset \mathbb{B}^{2\eps}$. For such $x$, by the definition of $N,$ $N(x)\in \mathbb{B}$. 
 Since $x\in ((A^c)^\eps)^c = A^{-\eps}$, we have that $$N(x)\in B(x,2\eps)\subset (A^{-\eps})^{2\eps} = {\rm op}_\eps(A)^\eps \subset A^\eps $$ where we have used ${\rm op}_\eps(A) \subset A$ by Proposition \ref{prop:openCloseBasicProperties}, leading to \eqref{eqn:setInclus}.

Applying a change of variables and \eqref{eqn:setInclus}, we have
\begin{equation}\label{eqn:CCooVV} \rho_1((\mathbb{A}^c)^\eps \setminus (A^c)^\eps)  = \int_{N((\mathbb{A}^c)^\eps \setminus (A^c)^\eps)}\rho_1(M(x)) J_M \,dx \leq \int_{A^\eps \cap\mathbb{B}} \rho_1(M(x)) J_M \, dx,
\end{equation}
where $J_M$ is the Jacobian of $M$ (as in the lemma's statement). Using that $\mathbb{A}^\eps \subset \mathbb{B}^c$ by definition, we have $A^\eps \cap \mathbb{B} \subset A^\eps \setminus \mathbb{A}^\eps,$ which when combined with \eqref{eqn:riskRewrite} and \eqref{eqn:CCooVV} concludes the lemma.
\end{proof}

\subsection{Convergence in the low regularity setting}

We turn to the proof of Theorem \ref{thm:main}\eqref{item:lowReg} in the case of low regularity.
We begin with the proof of a one-sided bound. We control how far outside of $A_0$ the erosion of the optimizer, i.e., $A^{-\eps}_\eps$, extends. Given the symmetry of the argument with respect to labels, it also gives a bound for how far inside $A_0$ the set $(A^\eps_\eps)^c$ pushes.

\begin{proposition}\label{prop:DboundSimple}
    Under the hypotheses of Theorem \ref{thm:main}\eqref{item:lowReg}, there is $C = C (\rho_i, \sigma, d, K)>0$ and $\eps_0 = \eps_0 (\rho_i,\sigma, d, K)>0$ such that $$\sup_{x \in A_\eps^{-\eps} \cap K} d(x,A_0) < C\sqrt{\eps} \quad \text{ for }\eps<\eps_0.$$
\end{proposition}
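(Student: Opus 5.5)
We argue by contradiction using the basic competitor of Lemma \ref{lem:basComp} applied to $A = A_\eps$. Suppose $x_0 \in A_\eps^{-\eps}\cap K$ with $d(x_0,A_0) = D \geq C\sqrt{\eps}$, where $C$ is large and chosen later. The plan is to remove from $A_\eps$ a ball $\mathbb{B} = B(x_*,r)$ that lies entirely outside $A_0$, within distance roughly $\zeta$ of $\partial A_0$ where non-degeneracy applies, and that meets $A_\eps^\eps$ in a set of positive measure. We then show that the integrand $\rho_0(x) - \rho_1(M(x))\lambda^d$ is strictly positive on $A_\eps^\eps\cap\mathbb{B}$. Lemma \ref{lem:basComp} would then give $\mathcal{R}_{\mu,\eps}(\mathbb{A}) < \mathcal{R}_{\mu,\eps}(A_\eps)$, contradicting minimality. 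Positive measure is immediate: $x_0\in A_\eps^{-\eps}$ gives $B(x_0,\eps)\subset A_\eps$, so $\mathbb{B}\cap A_\eps$ contains an open set as soon as $x_0$ lies near $\mathbb{B}$. We can also reduce to $D\le \zeta/2$, since points farther from $A_0$ but still in $K$ fall in the region where $\rho_0-\rho_1\ge\gamma>0$, and there a ball of fixed radius works trivially.

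The geometry is as follows. Take $x_* = x_0$ and $r$ slightly less than $D$, say $r = D/2$, so that $\mathbb{B}\subset\{d(\cdot,A_0)\ge D/2\}$. The dilated ball $M(\mathbb{B}) = B(x_*,r+2\eps)$ is then also outside $A_0$, at distance at least $D/2 - 2\eps$ from it. On $\mathbb{B}$ the non-degeneracy condition \eqref{eqn:nondegen}, together with the Lipschitz continuity of $\rho_i$, gives two bounds. First, $\rho_0(x) \ge \rho_1(x) + c\,d(x,A_0)$. Second, $\rho_1(M(x)) \le \rho_1(x) + L\,d(x,M(x)) \le \rho_1(x) + 2L\eps$, since $d(x,M(x)) = (\lambda-1)d(x,x_*) < 2\eps$. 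Moreover $\lambda^d = (1+2\eps/r)^d \le 1 + C_d\,\eps/r$. Combining these, the integrand is at least
\[
c\,d(x,A_0) - 2L\eps - \|\rho_1\|_{\infty}\,C_d\,\frac{\eps}{r} \;\ge\; c\,\frac{D}{2} - C'\eps - C''\frac{\eps}{D}.
\]
This is positive once $D^2 \gtrsim \eps$, i.e.\ $D\ge C\sqrt{\eps}$ with $C$ large. This is exactly the source of the $\sqrt{\eps}$ rate: the Jacobian cost $\eps/r$ of the dilation must be beaten by the linear non-degeneracy gain on a ball of radius comparable to $D$.

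The step I expect to be the main obstacle is making this balance genuinely uniform, and two issues need care. First, the ball must stay in the non-degenerate tube $\{d(\cdot,\partial A_0)<\zeta\}$. Here we use that $\rho_0-\rho_1$ is bounded below on the complement portion via continuity and compactness, so only the near-boundary regime matters. Second, since $\sigma$ is a general norm, the Jacobian bound and the relation $d(x,M(x))<2\eps$ hold exactly for anisotropic balls, but the Lipschitz constants of $\rho_i$ are Euclidean, so norm-equivalence constants enter $C$. The Lipschitz regularity of $\partial A_0$ is not strictly needed for this one-sided bound beyond ensuring $\rho_i$ is Lipschitz near $\partial A_0$. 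It becomes essential in the complementary step, where one must locate a large ball inside $A_0^c$ adjacent to a given boundary point through a cone condition. Finally, $\eps_0$ is chosen so that $C\sqrt{\eps_0}<\zeta/2$ and $2\eps<D/4$, which keeps every set above inside $K^1$.
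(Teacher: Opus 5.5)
Your proposal is correct and follows essentially the same route as the paper: apply Lemma~\ref{lem:basComp} to $A_\eps$ with the ball centered at the far point $x_0$ itself, dispose of the case $D>\zeta/2$ with a fixed radius using $\gamma>0$, and in the near case take $r=D/2$, so that the non-degeneracy gain (of order $cD/2$) beats the Lipschitz error $O(\eps)$ and the Jacobian cost $O(\eps/D)$ once $D\gtrsim\sqrt{\eps}$. Your observation that maximality of $x_0$ is never used (any point at distance at least $C\sqrt{\eps}$ gives a contradiction) is also consistent with the paper's argument.
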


\begin{proof}
 Let $\zeta\in (0,1)$ be sufficiently small  so that Definition \ref{def:nondegen} holds on $\overline{K^1}$, namely an expanded version of $K$. Suppose, with the aim of establishing a contradiction, that 
 \begin{equation}\label{eqn:contraHyp}
 D : = \sup_{x \in A_\eps^{-\eps} \cap K} d(x,A_0) > C_0\sqrt{d\eps}
 \end{equation}
  for $C_0>0$ chosen below (independent of the dimension). Consider $x_* \in A_\eps^{-\eps} \cap K $ with $d(x_*,A_0) = D$, which exists as $A_\eps^{-\eps}$ and $A_0$ are closed and $K$ is compact.

 \emph{Step 1 (Generic competitor and estimates).}  Define
 $$\mathbb{A}_\eps : = A_\eps\setminus \mathbb{B}^\eps \quad \text{ with }\quad \mathbb{B} : =B(x_*,r) $$
 as in Lemma \ref{lem:basComp}: we will make an appropriate choice for $r$ dependent upon two cases below (see also Figure~\ref{fig:barrier-comparison} below).
 We have
 \begin{equation}\label{eqn:energyImprove1}
 \mathcal{R}_{\mu,\eps}(\mathbb{A}_\eps) + \int_{A_\eps^\eps \cap \mathbb{B}}\Big(\rho_0(x) - \rho_1(M(x)) J_M\Big) \,dx \leq \mathcal{R}_{\mu,\eps}({A}_\eps),
\end{equation}
where $M$ and $J_M$ are defined in \eqref{eqn:MJM}.
First, we note it holds that 
\begin{equation}\label{eqn:posVol}
B(x_*,2\eps) \subset A_\eps^\eps \cap \mathbb{B}
\end{equation} so long as $2\eps < r,$
so $A_\eps^\eps \cap \mathbb{B}$ has positive measure.
We estimate the integrand from below for $x\in \B$ with
\begin{align*}
\rho_0(x) - \rho_1(M(x))J_M &\geq \rho_0(x) - \rho_1(x) + \rho_1(x) -  \rho_1(M(x)) + \rho_1(M(x))(1-J_M) \\
&\geq \rho_0(x) - \rho_1(x) - \omega_{\rho_1,\mathbb{B}^{2\eps}}(2\eps) -  \|\rho_1\|_{L^\infty({K^1})}(J_M-1),
\end{align*}
where $\omega_{\rho_1,U}$ is the modulus of continuity of $\rho_1$ on a set $U$ and we use that $d(x,M(x)) < 2\eps$ in $\mathbb{B}$. We also require that $\eps_0 < 1/4$ and $r < 1/2$ so that $M(x)$ necessarily belongs to $K^1$.
To estimate the Jacobian, we linearize the exponential with
\begin{equation}\nonumber
J_M - 1  = \left(1 + \frac{2\eps}{r}\right)^d - 1 \leq \exp(2\eps d/r)  -1 \leq 4\frac{\eps d}{r}
\end{equation}
where the last inequality holds if $2\eps d/r \leq 1/2.$ 
Inserting this into our prior estimate we obtain
\begin{equation}\label{eqn:inteLB}
\rho_0(x) - \rho_1(M(x))J_M \geq \rho_0(x) - \rho_1(x) -  \omega_{\rho_1,\mathbb{B}^{2\eps}}(2\eps) - 4\|\rho_1\|_{L^\infty(K^1)} \frac{\eps d}{r}
\end{equation}
for $2\eps d/r \leq 1/2$.
We reduce to cases based upon on whether $x_*$ is close to or far from $\partial A_0$ to show that the right-hand side of \eqref{eqn:inteLB} is strictly positive. Once this holds, \eqref{eqn:posVol} and \eqref{eqn:energyImprove1} contradict the minimality of $A_\eps$ showing that \eqref{eqn:contraHyp} cannot hold.


\begin{figure}[t]
    \centering

    \begin{minipage}[t]{0.48\textwidth}
        \centering
        
            \includegraphics[scale=0.9]{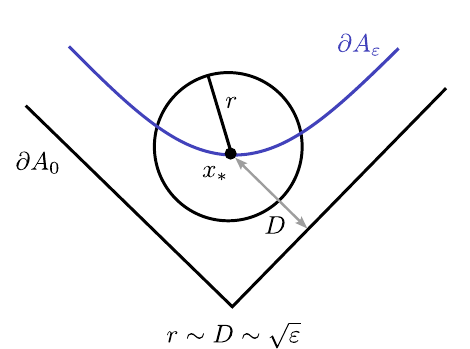}%
        
        \vspace{0.3em}

    \end{minipage}
    \hfill
    \begin{minipage}[t]{0.48\textwidth}
        \centering
        
            \includegraphics[scale=0.9]{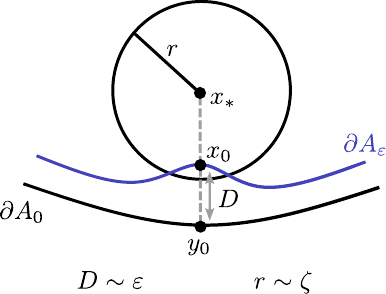}%
        
        \vspace{0.3em}

    \end{minipage}

    \caption{
    The geometric barrier constructions for low (left) and high (right) regularity settings, as in Propositions \ref{prop:DboundSimple} and \ref{prop:Dbound}, respectively.
    In the low regularity setting, the barrier radius $r$  is proportional to the distance $D$ between $A_0$ and $x^*\approx {\rm argmax}_{\partial A_\eps}d(x,A_0)$.
    By contrast, in the high regularity setting, we use normal coordinates to select
    a radius independent of $D = d(x_0,A_0)$.
    }
    \label{fig:barrier-comparison}
\end{figure}


\emph{Step 2 ($x_*$ far from $A_0$).}
Suppose that $d(x_*,A_0) \geq  \zeta/2$. Let $r = \zeta/4$ and define $$\gamma : = \inf_{x \in K^1 \cap \{d(y,A_0) \geq  \zeta/4\}}(\rho_0-\rho_1) > 0.$$ For $\eps < c (\omega_{\rho_1,K^1})$, we have $\omega_{\rho_1,\mathbb{B}^{2\eps}}(2\eps) \leq \omega_{\rho_1,K^1}(2\eps) < \gamma/2.$ Further, using that $r>\zeta/8$, we estimate \eqref{eqn:inteLB} from below by 
\begin{equation}\label{eqn:helpfuLB}
\rho_0(x) - \rho_1(x) -  \omega_{\rho_1,\mathbb{B}^{2\eps}}(2\eps) - 4\|\rho_1\|_{L^\infty(K^1)} \frac{\eps d}{r}\geq \gamma/2  - 32\|\rho_1\|_{L^\infty(K^1)} \frac{\eps d}{\zeta},
\end{equation}
 which is valid when $\eps \leq \zeta/(16d) $. Thus taking $\eps < \eps_0(\gamma,\zeta, \omega_{\rho_1,K^1},\|\rho_1\|_{L^{\infty}(K^1)})/d$ gives a positive lower bound to \eqref{eqn:helpfuLB} contradicting minimality. We conclude that
\begin{equation}\label{eqn:initSmall}
d(x_*,A_0) \leq  \zeta/2.
\end{equation}

\emph{Step 3 ($x_*$ close to $A_0$).}
With \eqref{eqn:initSmall} in hand, we may use non-degeneracy \eqref{eqn:nondegen} to estimate \eqref{eqn:inteLB} from below. Let $r = D/2$; this is consistent with the inequality $2\eps d/r \leq 1/2$ for $\eps_0$ sufficiently small. Now, by letting $\eps_0(\zeta)$ be small enough, we have $\mathbb{B}^{2\eps} \subset K^1 \cap \{0<d(y,A_0) < \zeta\}$, we have
\begin{equation}\label{eqn:helpfuLB1}
\begin{aligned}
\rho_0(x) - \rho_1(x) -  \omega_{\rho_1,\mathbb{B}^{2\eps}}(2\eps) - C\frac{\eps d}{r} & \geq c\, d(x_*,A_0)  - C_{\rm Lip}\eps - C\frac{\eps d}{D} \\
 &\geq c C_0\sqrt{d\eps} - C_{\rm Lip}\eps  - \frac{C}{C_0} \sqrt{\eps d}
\end{aligned}
\end{equation}
where we have used \eqref{eqn:contraHyp} and $C_{\rm Lip}$ just depends on $\|\nabla \rho_1\|_{L^\infty(K^1 \cap \{0<d(y,A_0) < \zeta\})}$.
Choosing $C_0$ large enough so that $\tilde C :=c C_0 - \frac{C}{C_0} >0$ and then considering $\sqrt{\eps} < \tilde C/(2C_{\rm Lip})\sqrt{d} $ gives positivity of the expression in \eqref{eqn:inteLB}, contradicting minimality of $A_\eps$ and concluding the proof.
\end{proof}

We remark that the above proof does not actually rely on the existence of a locally Lipschitz boundary for $\partial A_0$.
At this point, up to ignoring 
{the opening of $A_\eps$}
(which can be dealt with as in the proof of Theorem \ref{thm:main}\eqref{item:lowReg} below), Proposition \ref{prop:DboundSimple} can be used to show the subset relation 
\begin{equation}\label{eqn:noreg}
 A_0^{-C\sqrt{\eps}} \subset  A_\eps \cap K \subset A_0^{C\sqrt{\eps}},
 \end{equation}
 already providing meaningful information on the geometry of minimizers with \emph{no regularity assumptions} on $\partial A_0.$ It is only to obtain the symmetric relation with $A_0$ and $A_\eps$ swapped that we require $A_0$ be locally Lipschitz. At a high level, Lipschitz regularity gives a direction to move that is guaranteed to be \emph{away} from $\partial A_0$ (in the graphical setting this is in the $x_d$-th direction). We formalize this in the following lemma.

\begin{lemma}\label{lem:pushAway}
Suppose $A_0\subset \R^d$ has locally Lipschitz boundary and $K\subset\subset \R^d$. There are constants $\zeta, \ C, \ c > 0$ such that for any $x\in A_0\cap K$ and $t\in (0,\zeta)$ such that $d(x,A_0^c) < t,$
there is $y\in A_0$ such that 
\begin{equation}\label{eqn:pushAway}
d(y,x)\leq C t  \quad \text{ and }\quad d(y,A_0^c) \geq c t.
\end{equation}
\end{lemma}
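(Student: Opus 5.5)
The plan is to reduce to a graphical setting via a finite cover and then push along the ``vertical'' direction in which the Lipschitz graph bounds $A_0$. Because $\partial A_0$ is locally Lipschitz and $\overline{K}$ is compact, we can cover $\partial A_0\cap \overline{K^{1}}$ by finitely many open cylinders $U_k$. In each $U_k$, after a rigid motion, we have
$$A_0\cap U_k = \{(x',x_d)\in U_k : x_d< f_k(x')\},$$
where $f_k$ is Lipschitz with constant $L$, uniform over $k$, and $U_k=\{|x'|_2<2\delta,\ |x_d - f_k(x')|<2h\}$. By a Lebesgue-number argument we may shrink the cylinders to $V_k=\{|x'|_2<\delta,\ |x_d-f_k(x')|<h\}$ while still covering a neighborhood of $\partial A_0\cap \overline K$. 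Since $\sigma$ and $|\cdot|_2$ are equivalent, $c_\sigma|\cdot|_2\le\sigma\le C_\sigma|\cdot|_2$, it suffices to prove \eqref{eqn:pushAway} in Euclidean terms and absorb $c_\sigma,C_\sigma$ into $C$ and $c$.

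Given $x\in A_0\cap K$ with $d(x,A_0^c)<t<\zeta$, choose $\zeta$ small compared with $\delta$, $h$ and the Lebesgue number. Then $x$ lies within $C_\sigma t$ of $\partial A_0$, so $x$ belongs to some $V_k$, and we can write $x=(x',x_d)$ with $x_d<f_k(x')$. The candidate point is
$$y := x - s\,e_d, \qquad s := t.$$
The bound $d(y,x)\le C_\sigma t$ is immediate. For the second inequality we use the cone property of Lipschitz subgraphs. If $z=(z',z_d)$ satisfies $z_d<f_k(z')-\beta$, then the Euclidean ball around $z$ of radius $\beta/\sqrt{1+L^2}$ lies in $\{w_d<f_k(w')\}$, since $f_k$ drops by at most $L|w'-z'|$. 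Applying this at $y$: we have $y_d = x_d - t< f_k(x')-t$, so the Euclidean ball of radius $t/\sqrt{1+L^2}$ around $y$ is contained in the subgraph. Hence $d(y,A_0^c)\ge c_\sigma t/\sqrt{1+L^2}$, and this gives \eqref{eqn:pushAway} with $c=c_\sigma/\sqrt{1+L^2}$ and $C=C_\sigma$.

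The main obstacle is localization: we need this ball to stay inside $U_k$, because $A_0^c$ may have other components outside the chart. To guarantee this, note that $|y-x|_2=t<\zeta$ and the ball radius is at most $t$. Since $x\in V_k$, the whole ball stays within distance $2\zeta$ of $V_k$. Choosing $\zeta\le \min(\delta,h)/(4(1+L))$ therefore keeps it inside $U_k$, where the graphical description of $A_0^c$ is valid. (Points $x$ far from $\partial A_0$ never occur, since $d(x,A_0^c)<t<\zeta$ forces $x$ into the neighborhood covered by the $V_k$.)

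The constants $\zeta$, $C$, $c$ depend only on $L$, $\delta$, $h$, the number of charts and $\sigma$, hence only on $A_0$, $K$ and $\sigma$.
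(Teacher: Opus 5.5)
Your proof is correct and follows essentially the paper's route: reduce to a Lipschitz subgraph through a finite cover by charts, shift by $t$ in the $-e_d$ direction, use the cone condition to get $d_2(y,A_0^c)\ge t/\sqrt{1+L^2}$, and pass to $\sigma$ by equivalence of norms. The only differences are that you shift $x$ itself rather than its nearest boundary point $(\tilde y', f(\tilde y'))$, which gives $d(y,x)=t\,\sigma(e_d)$ directly without the triangle inequality, and that you spell out the localization step that the paper only sketches.
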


\begin{proof}
Suppose $A_0 : = {\rm subgraph}(f)$ for Lipschitz $f:\R^{d-1}\to \R$ with (Euclidean) Lipschitz constant $L.$ We take $\tilde y = (\tilde y',f(\tilde y'))\in {\rm graph}(f)$ such that $d(x,\tilde y) = d(x,A_0^c)$. Define $y : = (\tilde y',f(\tilde y')-t)$. Define the cone $\mathcal{C} : = \{z = (z',z_d):z_d > - L|z'|_2\}$. Using that $A_0^c \subset \tilde y + \mathcal{C},$ a straightforward geometric estimate computing the distance to the cone
gives
$$d_2(y,A_0^c) \geq d_2(y,\tilde y + \mathcal{C}) = \frac{t}{\sqrt{1+L^2}}, $$
where $d_2$ denotes the Euclidean distance. Using that the Euclidean distance is equivalent to the $\sigma$-distance, we find that
$d(y, A_0^c) \geq c_\sigma t.$
By the triangle inequality, $d(y,x)\leq \sigma(-t\, e_d) + t\leq Ct$, concluding \eqref{eqn:pushAway}.

In the general case, one covers $A_0$ by finitely many cylinders where $\partial A_0$ is represented by a Lipschitz graph with uniformly bounded constant, and chooses $\zeta$ sufficiently small to ensure that the preceding graphical argument goes through in a single cylinder.
\end{proof}

To lift the estimate of Proposition \ref{prop:DboundSimple} on $A_\eps^{-\eps}$ to $A_\eps$, we use that the opening or {closing} of an adversarial minimizer is still a minimizer, see Proposition \ref{prop:openclose}.
We now turn to the main theorem in the low regularity setting.

\begin{proof}[Proof of Theorem \ref{thm:main}\eqref{item:lowReg}]
By Proposition \ref{prop:openclose}, ${\rm op}_\eps(A_\eps)$ is a minimizer of \eqref{eqn:advProb} with $$\rho_0 ({\rm op}_\eps(A_\eps)^\eps) = \rho_0(A_\eps^\eps).$$ Since ${\rm op}_\eps(A_\eps)^\eps\subset (A_\eps)^\eps$ and $\{\rho_0>0\} \supset A_0^c$ outside of a Lebesgue-null set {(by non-degeneracy)}, it follows that ${\rm op}_\eps(A_\eps)^\eps \cap A_0^c =  A_\eps^\eps \cap A_0^c$
outside of a Lebesgue-null set. From this and a simple contradiction argument, we find that 
\begin{equation}\label{eqn:open=normal1} 
A_\eps^\eps \cap A_0^c\subset \overline{{\rm op}_\eps(A_\eps)^\eps}\cap A_0^c
\end{equation} since both dilations are open. 

Applying Proposition \ref{prop:DboundSimple}, we find for $\eps<\eps_0$ that 
$$\sup_{x \in A_\eps^{-\eps} \cap K} d(x,A_0) < C\sqrt{\eps}. $$
 But by \eqref{eqn:open=normal1}, $A_\eps \cap A_0^c \subset \overline{{\rm op}_\eps(A_\eps)^{\eps}} = \overline{(A_\eps^{-\eps})^{2\eps}}$, which leads to
$$ \sup_{x \in A_\eps \cap K} d(x,A_0) \leq (C+2)\sqrt{\eps}$$
for $\eps<\eps_0.$ This gives control on the first term appearing in $d_{\mathcal{H},K}(A_\eps,A_0)$ in \eqref{eqn:HausdorffDef}. 

To control the other term $$\sup_{x \in A_0 \cap K} d(x,A_\eps),$$ as the above argument is symmetric with respect to the labels $i =0$ and $i=1$, we first find that
\begin{equation}\label{eqn:helpyEsty}
\sup_{x \in A_\eps^c \cap K^1} d(x,A_0^c) \leq  \bar C \sqrt{\eps}: =(C+2)\sqrt{\eps}
\end{equation} 
for $C>0$ and $\eps<\eps_0,$ which are possibly chosen larger and smaller, respectively. Consider $x\in A_0\cap K$. If $d(x,A_0^c)>\bar C\sqrt{\eps},$ then \eqref{eqn:helpyEsty} necessarily implies $x \in A_\eps$ so that $d(x,A_\eps) = 0$. In the case that $d(x,A_0^c) \leq \bar C\sqrt{\eps},$ we apply Lemma \ref{lem:pushAway} with $t : = 2\bar C\sqrt{\eps}/c$ to find
${y} \in A_0\cap K^1$ such that
$$d({y},x)\leq C \sqrt{\eps} < 1  \quad \text{ and }\quad d({y},A_0^c) > \bar C\sqrt{\eps}$$
for $\eps<\eps_0$ and some $C>0.$
Reusing our above reasoning, this shows that ${y} \in A_\eps$ and thus $d(x,A_\eps)\leq d({y},x) \leq C \sqrt{\eps}$ as desired.

We conclude that $$d_{\mathcal{H},K}(A_\eps,A_0)\leq C{\sqrt{\eps}} \quad \text{ for }\eps < \eps_0.$$
To obtain the estimate for all $\eps\in (0,1]$, take the constant $C$, appearing in the previous estimate, large enough so that $C\sqrt{\eps_0} \geq {\rm diam}(K)$.
\end{proof}

\begin{example}\label{ex:lowReg}
We present a simple example to show that $O(\sqrt{\eps})$ scaling is likely optimal. Take $\sigma  =|\cdot|_2$ and $A_0  = \{x = (x_1,x_2): x_2>|x_1|\}\subset \R^2$. Consider the local energy $$E_\eps[A] := \int_{B(0,1)} |\chi_A - \chi_{A_0}|\, d\rho_1 + \int_{B(0,1)}|\chi_A|\, d\rho_0 + \eps \, {\rm Per}(A;B(0,1)),$$
where $\rho_1(\cdot) = d(\cdot,A_0^c)$ and $\rho_0(\cdot) = d(\cdot,A_0),$
which is a natural approximation of $\mathcal{R}_{\mu,\eps}(A)$ on $B(0,1)$ given Definition \ref{def:nondegen} (see also \cite{bungert2024gammaconv} for the perimeter term). We consider the energy scaling for competitors found by rounding the tip of $A_0$. Precisely, define the competitors 
$$\mathcal{C}_r : = \big(A_0\cap \{x_2 \geq \sqrt{2}r/2\}\big) \cup B((0,\sqrt{2}r),r)$$
with a rounded cone tip.
Then $$E_\eps[\mathcal{C}_r] = \int_{A_0\setminus \mathcal{C}_r}d(x,A_0) \, dx + \eps(2-2r  +(\pi/2) r)  = r^3\left(\int_{A_0\setminus \mathcal{C}_1}d(x,A_0) \, dx\right) + \eps 2 + \eps((\pi/2)-2)r,$$
where the second equality follows from a change of variables.
Optimizing in $r$ gives that $r = C\sqrt{\eps}$ for some $C>0$,  so that $A_{{\eps}} : = \mathcal{C}_{C\sqrt{\eps}}$ minimizes $E_\eps$. As may be directly checked, $d_{\mathcal{H}}(\mathcal{C}_{r}, A_0) = \tilde C r$, and we see that the optimizer of $E_\eps$ over the class $\{\mathcal{C}_r\}_{r>0}$ is at least $O(\sqrt{\eps})$ away from $A_0$ in the Hausdorff metric.

{Described in heuristic terms, ``clipping'' the corner of the Bayes classifier upward a distance $r$ gives a gain in the perimeter of order $\eps r$, while incurring a bulk cost of order $r^3$: these terms are balanced exactly when $r \sim \sqrt{\eps}$.}
\end{example}

\subsection{Convergence in the high regularity setting}\label{subsec:highReg}

In this subsection, we prove Theorem \ref{thm:main}\eqref{item:highReg}. The proof is similar in structure to Theorem \ref{thm:main}\eqref{item:lowReg} with the essential difference being the construction of energetic competitors near $\partial A_0$. Within the proof of Proposition \ref{prop:DboundSimple} (and in particular in Step 3), we chose the ball $\mathbb{B}$ with center $x_*$ and radius on the order of $D = d(x_*,A_0);$ small $D$ makes it more challenging to construct an energetic improvement (as can be seen in the right-hand side of \eqref{eqn:inteLB}). To obtain the optimal estimate, we use $\sigma$-regularity to choose a ball with new center and a radius independent of the distance to $A_0$ which tangentially touches the parallel surface $\{x:d(x,A_0) = D\}$. Phrasing this intuitively, for the low regularity setting we crudely cut out a ball to construct a competitor, but in the high regularity setting we choose a ball whose removal acts as a small perturbation of the decision boundary $\partial A_\eps$ in the normal direction; this difference is illustrated in Figure \ref{fig:barrier-comparison}.

As before, the essential ingredient is the one-sided bound.
\begin{proposition}\label{prop:Dbound}
    Under the hypotheses of Theorem \ref{thm:main}\eqref{item:highReg}, there is $C = C (\rho_i, \sigma, d, K)>0$ and $\eps_0 = \eps_0 (\rho_i,\sigma, d, K)$ such that $$ \sup_{x \in A_\eps^{-\eps} \cap K} d(x,A_0\cap K) < C\eps \quad \text{ for }\eps<\eps_0.$$
\end{proposition}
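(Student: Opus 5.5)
We argue by contradiction, following the structure of Proposition \ref{prop:DboundSimple}. The change is in Step 3, where the ball $\mathbb{B}$ will have a radius independent of $D$. Fix $\zeta$ small so that Definitions \ref{def:nondegen} and \ref{def:sigmaReg} hold on $\overline{K^1}$. Suppose that
$$D:=\sup_{x\in A_\eps^{-\eps}\cap K} d(x,A_0)\ge C\eps,$$
and pick a maximizer $x_*$. Step 2 of Proposition \ref{prop:DboundSimple} applies verbatim and gives $D\le \zeta/2$; it even gives $D\le C\sqrt{\eps}$. So we may assume that $x_*$ lies in the tubular neighborhood where $\sigma$-regularity provides tangential--normal coordinates (Proposition \ref{prop:sigmaReg}). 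In these coordinates $x_*=p+D\,\nu(p)$ for some $p\in\partial A_0$, and the parallel sets $\{d(\cdot,A_0)<t\}$ have uniformly $\sigma$-regular boundaries for $t<\zeta$. The key consequence is an interior $\sigma$-ball condition of fixed radius $R=R(\zeta,\sigma,K)$ for the complement of $\{d(\cdot,A_0)<t\}$. More precisely, with $z:=x_*+R\,\nu(p)$ we have
$$B(z,R)\subset\{d(\cdot,A_0)>D'\}$$
for a level $D'$ slightly below $D$, namely $D-D'=O(\eps)$ or comparable, so that the ball almost touches the parallel surface at $x_*$.

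We then apply Lemma \ref{lem:basComp} with $\mathbb{B}=B(z,r)$, $r\approx R$ fixed, centered so that $\mathbb{B}$ protrudes past the parallel surface by roughly $2\eps$ near $x_*$. Then $A_\eps^\eps\cap\mathbb{B}$ has positive measure: it contains $B(x_*,2\eps)\cap\mathbb{B}$, which has measure $\gtrsim\eps^d$, since $x_*\in A_\eps^{-\eps}$. The integrand bound \eqref{eqn:inteLB} now reads
$$\rho_0-\rho_1\circ M\,J_M\ \ge\ \rho_0(x)-\rho_1(x)-C_{\rm Lip}\eps-C\frac{\eps d}{R}.$$
The Jacobian error is now $O(\eps)$ instead of $O(\eps/D)$, because $R$ does not depend on $D$.

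The main obstacle is showing $\rho_0-\rho_1\gtrsim D$ on the part of $\mathbb{B}$ that meets $A_\eps^\eps$; in the bulk of $\mathbb{B}$ this is even better. Pointwise this fails near the edge of $\mathbb{B}$, where points may be closer to $A_0$. The remedy is maximality of $D$: every point of $A_\eps^\eps$ lies within $D+2\eps$ of $A_0$, since $A_\eps\subset\overline{(A_\eps^{-\eps})^{2\eps}}$ up to the opening argument in the proof of Theorem \ref{thm:main}\eqref{item:lowReg}. So $A_\eps^\eps\cap\mathbb{B}$ is contained in the thin cap
$$\mathbb{B}\cap\{d(\cdot,A_0)\le D+3\eps\}.$$
By choosing the center so that this cap sits in $\{d(\cdot,A_0)\ge D-C'\eps\}$, and using the curvature bound coming from the Lipschitz normal field together with the fixed radius $R$, non-degeneracy \eqref{eqn:nondegen} gives
$$\rho_0-\rho_1\ge c(D-C'\eps)$$
on the cap. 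Hence the integrand is bounded below by $cD-C''\eps>0$ once $C$ is large. Together with the positive measure of the domain, Lemma \ref{lem:basComp} contradicts minimality. The delicate point is making the cap geometry precise for a general (possibly crystalline) norm $\sigma$. For this we would rely on $\sigma$-regularity in the sense that the Wulff ball $B(z,R)$ lies inside the outer parallel set and touches it only at points where the parallel surface's distance changes at most quadratically, i.e. by $O(\eps)$ after an $O(\eps)$ offset. Finally, the statement uses $d(x,A_0\cap K)$ rather than $d(x,A_0)$, which we handle by working on $K^1$ and shrinking $\eps_0$.
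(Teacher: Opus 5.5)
Your construction is the paper's: from the foot point $p$ of the maximizer $x_*$, go out along the $\sigma$-normal and excise a Wulff ball of $\eps$-independent radius that reaches to within $O(\eps)$ of $x_*$. The gap is exactly the step you call the main obstacle, the bound $d(\cdot,A_0)\ge D-O(\eps)$ on $A_\eps^\eps\cap\mathbb{B}$, which the proposal never establishes.

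First, the ``edge of $\mathbb{B}$'' problem you describe does not exist. If $B(z,R)\subset\{d(\cdot,A_0)>D'\}$, the triangle inequality gives $B(z,R+2\eps)\subset\{d(\cdot,A_0)>D'-2\eps\}$ outright. What is missing is a proof of the interior-ball claim itself, and your remedies do not supply one. The thin-cap argument is vacuous. Maximality of $D$ gives only an \emph{upper} bound $d(\cdot,A_0)\le D+3\eps$ on $A_\eps^\eps$. The dangerous points of $\mathbb{B}$ are those with $d(\cdot,A_0)$ well below $D$, and these lie in your cap anyway. So ``the cap sits in $\{d(\cdot,A_0)\ge D-C'\eps\}$'' is the same statement as $\mathbb{B}\subset\{d(\cdot,A_0)\ge D-C'\eps\}$. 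The appeal to curvature bounds and ``at most quadratic'' contact is unproved, and it is not the right picture for crystalline $\sigma$. For $\sigma=|\cdot|_\infty$, a Wulff square can touch a flat side of a parallel surface along that entire side.

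The missing idea is a one-line consequence of the foliation property in Proposition \ref{prop:sigmaReg}. Your center $z=x_*+R\,\nu(p)=\Psi(p,D+R)$ satisfies $d(z,A_0)=D+R$ \emph{exactly}. Since $d(\cdot,A_0)$ is $1$-Lipschitz for the $\sigma$-metric, every $x\in B(z,r)$ has $d(x,A_0)>D+R-r$. Thus $D'=D$, and no curvature information is used, so the argument is norm-independent.

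The paper takes $R=\zeta/8$ and $r=R-3\eps/2$, so all of $\mathbb{B}$ lies in $\{d(\cdot,A_0)>D\}$. It gets positive measure from $B(\tilde x,\eps/4)\subset B(x_*,2\eps)\cap\mathbb{B}$ with $\tilde x=\Psi(p,D+7\eps/4)$. Your protruding radius $r=R+2\eps$ works equally well: it gives $d(\cdot,A_0)>D-2\eps$ on $\mathbb{B}$, and even $B(x_*,2\eps)\subset\mathbb{B}$. With this inserted, the rest of your argument goes through, with Jacobian error $O(\eps d/R)$ and integrand at least $cD-C\eps$.

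Finally, shrinking $\eps_0$ cannot convert a bound on $d(x,A_0)$ into one on $d(x,A_0\cap K)$, since nearest points of $A_0$ may lie outside $K$. What the paper's proof establishes, and what is used afterwards, is the bound with $d(x,A_0)$.
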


The proof of Proposition \ref{prop:Dbound} relies on basic properties for $\sigma$-regular sets, see Definition \ref{def:sigmaReg}, which we record in the following proposition. In particular, the proposition shows that a $\sigma$-regular set $A_0$ has a tubular neighborhood around its boundary described in terms of tangential-normal coordinates. For us, these coordinates provide a natural way to shift points `orthogonally' to $\partial A_0.$

\begin{proposition}[Lemma 33 and 34 in \cite{bellettini2001crystalline}]\label{prop:sigmaReg}
Suppose that $A_0$ is a $\sigma$-regular set with $\sigma$-normal vector $\nu$ as in Definition \ref{def:sigmaReg}. Then the following holds:
\begin{enumerate}
\item \emph{(Normal condition).} $\sigma(\nu(x)) = 1.$

\item \emph{(Transversality).} There is $c>0$ so that $|\nu\cdot \nu_{\rm Eucl}|>c$ for $\mathcal{H}^{d-1}$-a.e. $x\in \partial A_0$.

\item \emph{(Foliation).} For any open and precompact set $U\subset \R^d$, there is $\zeta = \zeta(U) > 0 $ such that the map $\Psi:[\partial A_0\cap U]\times (-\zeta,\zeta)\to \{d(x,\partial A_0)<\zeta\}$ defined by $\Psi(y,s) = y + s\nu(y)$ is a bi-Lipschitz map (onto its image) and $\Psi([\partial A_0\cap U] \times \{s\}) \subset \{{\rm sdist}(x,A_0) = s\}. $
\end{enumerate}
\end{proposition}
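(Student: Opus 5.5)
The plan is to verify the three items in order. The first two are essentially pointwise consequences of Definition~\ref{def:sigmaReg}. The third, the foliation, carries the geometric content and rests on a uniform ``Wulff ball'' condition produced by the Lipschitz regularity of $\nu$.

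\emph{Normal condition and transversality.} By Definition~\ref{def:sigmaReg}, $\sigma(\nu(x))^2=1$ for $\mathcal{H}^{d-1}$-a.e.\ $x\in\partial A_0\cap K$. Since $\nu$ is Lipschitz and $\sigma$ is continuous, this identity extends to every point of $\partial A_0\cap K$, which gives (1). For (2), the same definition gives $\langle \nu,\nu_{\rm Eucl}\rangle=\sigma^\circ(\nu_{\rm Eucl})$ a.e. All norms on $\R^d$ are equivalent and $|\nu_{\rm Eucl}|_2=1$, so $\sigma^\circ(\nu_{\rm Eucl})\ge c_\sigma>0$, which is the transversality bound. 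I will also record the geometric meaning of the defining relation. Together with $\sigma(\nu)=1$, it says that $\nu(x)\in\partial\sigma^\circ(\nu_{\rm Eucl}(x))$. Equivalently, the Wulff ball $B(x+s\nu(x),s)$ has $x$ on its boundary, and $\nu_{\rm Eucl}(x)$ is an outer normal to the ball $x+s\nu(x)-B(0,s)$ at $x$. In other words, the Wulff ball centered at $x+s\nu(x)$ is tangent to $\partial A_0$ from outside at $x$.

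\emph{Foliation.} First, $\Psi$ is Lipschitz on $[\partial A_0\cap U]\times(-\zeta,\zeta)$, because $\nu$ is Lipschitz and bounded. The level-set property $\Psi(y,s)\in\{{\rm sdist}(\cdot,A_0)=s\}$ for $s>0$ splits into two inequalities.

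The upper bound $d(y+s\nu(y),A_0)\le \sigma(s\nu(y))=s$ is immediate from (1).

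The lower bound requires a uniform exterior Wulff ball condition: $B(y+s\nu(y),s)\cap A_0=\emptyset$ for all $y\in\partial A_0\cap U$ and $s<\zeta$. I will argue by contradiction. Suppose some $z\in\partial A_0$ satisfies $\sigma(z-y-s\nu(y))<s$. Then $z$ is within $O(s)$ of $y$. Write $\partial A_0$ locally as a Lipschitz graph; transversality from (2) ensures that $\nu$ points uniformly out of the subgraph. Integrating the tangency relation along a path from $y$ to $z$ on $\partial A_0$, and using $|\nu(y)-\nu(z)|\le L|y-z|$, the violation of the Wulff inequality at $z$ can be bounded by $O(L|y-z|^2/s)$-type terms. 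These terms are too small once $s<\zeta$ with $\zeta\sim 1/L$, which gives the contradiction. The interior case $s<0$ follows symmetrically.

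Injectivity and the lower Lipschitz bound for $\Psi$ follow the same pattern. Suppose $y+s\nu(y)=y'+s'\nu(y')$. The exterior ball condition forces $|s|=|s'|={\rm sdist}$, and then $|y-y'|\le |s|\,|\nu(y)-\nu(y')|\le L\zeta|y-y'|$, which forces $y=y'$ when $\zeta<1/L$. Quantifying this argument yields the bi-Lipschitz estimate. Surjectivity onto $\{d(x,\partial A_0)<\zeta\}$ also has to be checked: for such $x$, take a closest point $y$ and show that $x=\Psi(y,{\rm sdist}(x,A_0))$, using tangency at $y$ and the uniqueness just proved.

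The main obstacle is the uniform Wulff ball condition when $\sigma$ is crystalline. In that case $\partial\sigma^\circ(\nu_{\rm Eucl})$ is multivalued, and the ball is not strictly convex, so the usual Euclidean $C^{1,1}$ reach argument does not apply directly. My first attempt would be the local-graph comparison above. If it fails in the crystalline case, I would fall back on the argument of \cite{bellettini2001crystalline}, Lemmas 33--34, which builds exactly this tubular structure from the Lipschitz Cahn--Hoffman field.
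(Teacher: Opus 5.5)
The paper gives no argument of its own here; the proposition is imported from \cite{bellettini2001crystalline}. Your items (1) and (2) are correct and immediate from Definition~\ref{def:sigmaReg}. Your reduction of injectivity and of the lower Lipschitz bound for $\Psi$ to the level-set identity ${\rm sdist}(\Psi(y,s),A_0)=s$ is also sound. (Minor slip: the outer normal of $B(x+s\nu(x),s)$ at $x$ is $-\nu_{\rm Eucl}(x)$, though your conclusion that this ball sits on the exterior side is right.)

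The gap is the uniform exterior (and interior) Wulff ball condition. This is the entire content of (3), and the perturbative argument you sketch cannot close in the crystalline case. The Euclidean reach argument for $C^{1,1}$ boundaries with normal $n$ works because strict convexity of the ball gives a quadratic gain, $|z-y-sn(y)|^2-s^2\geq (1-sL)|z-y|^2$, which absorbs the $O(L|z-y|^2)$ error from the Lipschitz normal. For non-strictly convex $\sigma$ there is no gain at all. Take $\sigma=|\cdot|_\infty$, $A_0=B(0,1)$ and $\nu(y)=y$, which is $\sigma$-regular. At $y=(1,0)$ one has $\sigma(z-y-s\nu(y))=s$ exactly for every $z=(1,t)$ with $|t|\leq s$, so any nonzero error term of size $L|z-y|^2$ destroys the inequality. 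Moreover, $\nu_{\rm Eucl}$ jumps at the corners of this square. For $y,z$ on adjacent sides, $\langle z-y,\nu_{\rm Eucl}(y)\rangle$ is of first order in $|z-y|$, so integrating tangency along a boundary path yields no quadratic control.

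A correct proof must use exactly, not perturbatively, that $\nu$ is a Lipschitz selection of $\partial\sigma^\circ(\nu_{\rm Eucl})$. This forces rigid compatibility between $\partial A_0$ and the facets and vertices of the Wulff shape. In the example, a strictly convex arc of $\partial A_0$ cannot have its normal cross $\pm e_1$ or $\pm e_2$, since $\nu$ would jump between two vertices of the square. This rigidity is what the cited lemmas provide. Since your fallback for this step is to invoke them, your treatment of (3) reduces to the paper's citation rather than an independent proof.

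Your closest-point argument for surjectivity also fails for crystalline $\sigma$. In the same example, $x=(1+s,0)$ has all of $\{(1,t):|t|\leq s\}$ as closest points. Touching at a closest point $y$ only yields $x-y\in s\,\partial\sigma^\circ(\nu_{\rm Eucl}(y))$, which is a whole facet, and $x=\Psi(y,s)$ holds only for $y=(1,0)$. Surjectivity is not part of the statement, which asserts bi-Lipschitz onto the image, although the paper later writes $x_0=\Psi(y_0,D)$.

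If you want surjectivity, argue topologically. Once $\Psi$ is continuous, injective and satisfies the level-set identity, invariance of domain makes its image open. The identity $|s|=|{\rm sdist}(\Psi(y,s),A_0)|$ plus compactness makes the image relatively closed in a thin tube around $\partial A_0$ near $K$. Connectedness of the segment from $x$ to a nearest boundary point then gives $x\in\Psi([\partial A_0\cap U]\times(-\zeta,\zeta))$.
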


We can now complete the proof of the one-sided estimate in Proposition \ref{prop:Dbound}.

\begin{proof}[Proof of Proposition \ref{prop:Dbound}]
Let $\zeta\in (0,1)$ be sufficiently small  so that Definition \ref{def:nondegen} and the conclusion of Proposition \ref{prop:sigmaReg} hold on the dilation $K^1$. Suppose towards a contradiction that 
 \begin{equation}\label{eqn:newContraHyp}
 D : = \sup_{x \in A_\eps^{-\eps} \cap K} d(x,A_0) > C_1 d\eps
 \end{equation}
  for $C_1>0$ chosen below (independent of the dimension). Consider $x_0 \in A_\eps^{-\eps} \cap K$ such that $d(x_0,A_0) = D$ which is possible since $A_\eps^{-\eps}$ is closed.
Following the argument of Steps 1 and 2 of the proof of Proposition \ref{prop:Dbound}, we immediately conclude that
\begin{equation}\nonumber
d(x_0,A_0) \leq  \zeta/2 \quad \text{ for } \quad \eps< \eps_0/d
\end{equation}
for $\eps_0>0$ independent of the dimension.

In the tangential-normal coordinates $\Psi$ associated to $A_0$ via Proposition \ref{prop:sigmaReg}, we write $x_0 = \Psi(y_0,D)$ for $y_0\in \partial A_0$. Define $$x_* : = \Psi(y_0,D+\zeta/8) \quad \text{ and } r  = \zeta/8-3\eps/2.$$
We apply Lemma \ref{lem:basComp} to define $\mathbb{A}_\eps = A_\eps\setminus \mathbb{B}^\eps$ with $\mathbb{B} : = B(x_*,r)$ satisfying
\begin{equation}\nonumber 
 \mathcal{R}_{\mu,\eps}(\mathbb{A}_\eps) + \int_{A_\eps^\eps \cap \mathbb{B}}\Big(\rho_0(x) - \rho_1(M(x)) J_M\Big) \,dx \leq \mathcal{R}_{\mu,\eps}({A}_\eps),
\end{equation}
where $M$ and $J_M$ are defined in \eqref{eqn:MJM}. See Figure \ref{fig:barrier-comparison} for a depiction of the construction.
As before, to contradict minimality of $A_\eps$, we show that the integral is positive. The integral domain $A_\eps^\eps\cap \mathbb{B}$ has positive measure since $\tilde x : = \Psi(y_0,D+7\eps/4)$ satisfies $B(\tilde x,\eps/4)\subset B(x_0,2\eps)\subset A_\eps^\eps$ and $d(x_*,\tilde x) = r - \eps/4$ so that $B(\tilde x,\eps/4)\subset \mathbb{B}$.

To show that the integrand is positive, we follow the same reasoning as in \eqref{eqn:inteLB} to find
\begin{equation}\label{eqn:inteLB2}
\rho_0(x) - \rho_1(M(x))J_M \geq \rho_0(x) - \rho_1(x) -  C_{\rm Lip} \eps - C\frac{\eps d}{r}
\end{equation}
for $x\in \mathbb{B}$ and $2\eps d/r \leq 1/2$ where $C>0$ depends only on $\rho_1$, where we have also used that $\rho_1$ is Lipschitz in $\mathbb{B}$ by non-degeneracy.

Furthermore, for $x \in A_\eps^\eps \cap \mathbb{B}$ we have 
\begin{equation}\label{eqn:disty}
d(x,A_0)\geq d(x_*,A_0) - d(x,x_*) \geq D+ \zeta/8 - (\zeta/8 - 3\eps/2) \geq D.
\end{equation}
For the sake of the reader, we remark that it is precisely for choosing a ball $\B$ with radius independent of $\eps$ that satisfies the previous inequality that we use the tangential-normal coordinates $\Psi$. Uniting \eqref{eqn:disty} with \eqref{eqn:nondegen} (and utilizing the $c>0$ from that definition), \eqref{eqn:newContraHyp}, and \eqref{eqn:inteLB2}, we have
\begin{equation}\label{eqn:helpyBdd}
\rho_0(x) - \rho_1(M(x))J_M \geq cC_1 d\eps - Cd\eps
\end{equation}
for $\eps \leq \eps_0/d$
where $\eps_0>0$ and $C>0$ depend only on $\zeta$ and $\rho_1$ (through its Lipschitz constant and magnitude). For $C_1$ large enough, this ensures the integrand is positive, contradicting minimality of $A_\eps$, and thereby concluding the theorem. 
\end{proof}

At this point it is possible to prove Theorem \ref{thm:main}\eqref{item:highReg} exactly as for Theorem \ref{thm:main}\eqref{item:lowReg}, however the application of Lemma \ref{lem:pushAway} uses equivalence of $\sigma$ and $|\cdot|_2$, which potentially introduces a hidden dependence on the dimension, e.g. when $\sigma = |\cdot|_\infty$. However, the following remark shows that Lemma \ref{lem:pushAway} is superfluous with $\sigma$-regularity.

\begin{remark}\label{rem:pushAway2}
For $A_0$ with locally $\sigma$-regular boundary, let $\zeta$ be as in Proposition \ref{prop:sigmaReg} on $K^1$ with $\Psi$ the associated tangential-normal coordinates. For any $x = \Psi(y,s) \in A_0\cap K$ and $t\in (0,\zeta/2)$ such that $d(x,A_0^c) < t,$ we can define
$x_t : = \Psi (y,s - t) \in A_0$ which satisfies 
\begin{equation}\nonumber
d(x_t,x)= t  \quad \text{ and }\quad d(x_t,A_0^c) \geq t.
\end{equation}
\end{remark} 

Using Remark \ref{rem:pushAway2} in place of Lemma \ref{lem:pushAway}, the proof of Theorem \ref{thm:main}\eqref{item:highReg} follows exactly as in the proof of Theorem \ref{thm:main}\eqref{item:lowReg} with $\sqrt{\eps}$ replaced by $\eps.$

\begin{example}\label{ex:highReg}
We present a simple example to show that $O({\eps})$ scaling is the best expected. Take $\sigma  = |\cdot|_2$ and $A_0 = B(0,1)$ and the class of competitors $\mathcal{C}_r : = B(0,1-r)$. Then with 
$$E_\eps[A] := \int_{B(0,2)} |\chi_A - \chi_{A_0}|\, d\rho_1 + \int_{B(0,2)}|\chi_A|\, d\rho_0 + \eps \,  {\rm Per}(A;B(0,2)),$$
where $\rho_1(\cdot) = d(\cdot,A_0^c)$ and $\rho_0(\cdot) = d(\cdot,A_0).$ Applying polar coordinates
$$E_\eps[\mathcal{C}_r] = \int_{A_0\setminus \mathcal{C}_r}d(x,A_0)\, dx + \eps 2\pi (1-r) \approx \pi (1-r)^2 +  \eps 2\pi (1-r).$$
Thus the optimal $r = 1-C\eps$, and the optimizer of $E_\eps$ over the class $\{\mathcal{C}_r\}_{r>0}$ is at least $O({\eps})$ away from $A_0$ in the Hausdorff metric. See also the closely related Example 6.1 in \cite{trillos2022adversarial}.
\end{example}

\section{The empirical problem}\label{sec:emp}

We turn our attention now to proving Theorem \ref{thm:mainEmpirical} for the empirical adversarial training problem. Similar to before, our argument is based on the construction of a generic competitor for classifiers defined on the discrete domain $X_n = \{x_j : j\leq n\}$ constructed from the data $\{(x_j,y_j)\}_{j\in \N}$ sampled from $\mu$. While the general idea is the same as Lemma \ref{lem:basComp}---excise a ball---it is complicated by the gap between the empirical and continuum distributions. On the path to controlling this error, we first initialize the probabilistic setting. Once we have control on the distance between the empirical and continuum distributions, we may construct the competitor and then address convergence in low and high regularity settings. Throughout, we remind the reader of the conventions introduced in Section \ref{sec:prelim}: dilations/erosions in the empirical adversarial training problem are in the discrete domain $X_n$; however, to connect the empirical problem to the continuum one, we also use dilations/erosions in $\R^d$ (\emph{always} in terms of the norm $\sigma$). For clarity, we typically specify which is used.

\subsection{Probabilistic setting} \label{subsec:prob}

With the conventions introduced in Subsection \ref{subsec:mainEmp}, we have that $\{y_j\}_{j\in \N}$ form a sequence of independent samples drawn from a Bernoulli distribution with probabilities $p  =\rho_0(\Omega)$ and $q = \rho_1(\Omega)$ of obtaining $0$ and $1$ respectively. Letting $n_i$ be the number of samples in $\{(x_j,y_j)\}_{j\leq n}$ with $y_j = i$, we see that $n_1 = \sum_{j\leq n} y_j$. Applying Hoeffding's inequality, we have
$$\mathbb{P}\left(\left|\frac{n_1}{n} - \rho_1(\Omega)\right| > \eta\right)\leq 2\exp(-2n\eta^2).$$
Using the definition of $\eta = \eta(n)$ in \eqref{eqn:distrError}, one can directly check that $\exp(-2n\eta^2) \leq n^{-\alpha/2}$ for a given $\alpha$ with $n$ larger than a universal constant (depending on $\alpha$ but independent of $d$). 
This shows that
\begin{equation}\label{eqn:expectationsClose}
 \left|\frac{n_i}{n} - \rho_i(\Omega)\right| < \eta \quad \text{ with probability } \quad 1-O(n^{-\alpha/2}) 
 \end{equation}
for $i=0,1$,
where we have also used $\left|\frac{n_1}{n} - \rho_1(\Omega)\right| = \left|\frac{n_0}{n} - \rho_0(\Omega)\right|$.
The above inequality also implies that there is $\theta = \theta (\rho_0,\rho_1) \in (0,1/2)$ such that
\begin{equation}\label{eqn:comparenni}
 n\geq n_i\geq \theta n \quad \text{ with probability } \quad 1-O(n^{-\alpha/2})
 \end{equation}
for $i=0,1.$

We derive some transport estimates connecting $\rho_i$ and $\nu_i.$
Applying Theorem 1.1 from \cite{trillos2015rate} and \eqref{eqn:comparenni} with the sampling probability measure $\mathbb{P}$,
it holds that 
$$
\begin{aligned}
\mathbb{P}(d_\infty\left(\tilde \nu_i , \tilde \rho_i\right) \lesssim \eta)  &\geq \mathbb{P}(d_\infty\left(\tilde \nu_i , \tilde \rho_i\right) \lesssim \eta \ \text{ and } \  n_i \geq \theta n)\\
&= \mathbb{P}(d_\infty\left(\tilde \nu_i , \tilde \rho_i\right) \lesssim \eta(n_i) \ | \  n_i \geq \theta n)\, \mathbb{P}(n_i \geq \theta n) \geq  \big(1-O((\theta n)^{-\alpha/2})\big)\big(1-O(n^{-\alpha/2})\big),
\end{aligned}$$
where $d_\infty$ is the $\infty$-Wasserstein metric, the normalized measures are $\tilde \rho_i : = \frac{1}{\rho_i(\Omega)}\rho_i$ and $\tilde \nu_i : = \frac{n}{n_i}\nu_i$, and $\eta$ is defined in \eqref{eqn:distrError} (with the convention $\eta = \eta(n)$).
We summarize this as
\begin{equation}\label{eqn:wassInfty}
d_\infty\left(\tilde \nu_i , \tilde \rho_i\right) \lesssim \eta 
\quad \text{ with probability } \quad 1-O(n^{-\alpha/2}) .
\end{equation}
For convenience, we ignore the constant implicit in \eqref{eqn:wassInfty}, which can be absorbed into $c_0$ and $c_\kappa$ from Theorem \ref{thm:mainEmpirical}. We note that since $\rho_i$ has a continuous density that estimate \eqref{eqn:wassInfty} is equivalently described in terms of a transport map $T_i:\Omega\to \Omega$ satisfying
\begin{equation}\label{eqn:wassInftyTransport}
\|T_i - {\rm id}\|_{\infty}  := \|\sigma(T_i(\cdot) - \cdot)\|_{L^\infty(\Omega)} < \eta \quad \text{ and } \quad \tilde \nu_i = (T_i)_\# \tilde \rho_i, 
\end{equation}
where $\#$ indicates the pushforward. To move from \eqref{eqn:wassInfty}---which is in terms of $|\cdot|_2$---to \eqref{eqn:wassInftyTransport}, we must absorb a further constant into $\eta$ due to the equivalence of $\sigma$ and $|\cdot|_2$. From \eqref{eqn:wassInftyTransport}, we also deduce
\begin{equation}\label{eqn:netnu}
\sup_{x \in \Omega}d(x,{\rm spt} (\nu_i)) < \eta
\end{equation}
for $i=0,1$ since ${\rm spt} (\rho_i )  = \overline\Omega$ by non-degeneracy in $\Omega$.

Finally we use the above estimates to compare $\rho_i(B)$ and $\nu_i(B)$ for an arbitrary set $B\subset \Omega.$
Using \eqref{eqn:expectationsClose} and \eqref{eqn:wassInfty} we estimate that, with probability at least $1-O(n^{-\alpha/2})$,
\begin{equation}\label{eqn:wassCollar}
\begin{aligned}
\left| \nu_i(B) - \rho_i(B) \right| & \leq  {\frac{n_i}{n}}|\tilde\nu_i(B) - \tilde\rho_i(B) | + \left|\frac{n_i}{n}-\rho_i(\Omega) \right|\tilde \rho_i(B)  \\
& \leq {\frac{n_i}{n}}\left|\int_\Omega \left(\chi_B - \chi_B\circ T_i \right) d\tilde \rho_i\right| +\eta \tilde\rho_i(B)   \\
& \leq  {\frac{n_i}{n}}\tilde \rho_i(B^{\eta} \setminus B^{-\eta}) +\eta \tilde\rho_i(B) \\
& \leq  \rho_i(B^{\eta} \setminus B^{-\eta})+ \left({\frac{n_i}{n}} - \rho_i(\Omega)\right)\tilde \rho_i(B^{\eta} \setminus B^{-\eta}) +\eta \tilde\rho_i(B) \\
&\leq \rho_i(B^{\eta} \setminus B^{-\eta}) +C\eta \rho_i(B^\eta).
\end{aligned}
\end{equation}
We emphasize that dilations/erosions are taken in $\R^d.$

In the \emph{event} that \eqref{eqn:expectationsClose} and \eqref{eqn:wassInfty} hold, and thereby \eqref{eqn:expectationsClose}--\eqref{eqn:wassCollar} hold, we now continue in our effort to prove Theorem \ref{thm:mainEmpirical}. 

\subsection{Empirical energetic competitor}
We develop a flexible notion of energetic competitor. While the strategy in the continuum setting is similar, we must now average over radii to manage the error between distributions appearing in \eqref{eqn:wassCollar}, see Figure \ref{fig:avgArg}. 

\begin{figure}
%
\begin{tikzpicture}[
  x=.93cm,y=1cm,
  line cap=round,line join=round,
  font=\small,
  collar/.style={black!34,line width=.68pt,dash pattern=on 2.7pt off 2.35pt},
  Abdry/.style={blue!68!black,line width=1.18pt},
  Bbdry/.style={black!60,line width=.86pt}
]

\pgfmathdeclarefunction{ygood}{1}{%
  \pgfmathparse{-18.6 + sqrt(20.5*20.5-(#1-6.6)*(#1-6.6))}%
}
\pgfmathdeclarefunction{ybad}{1}{%
  \pgfmathparse{-18.6 + sqrt(21.1*21.1-(#1-6.6)*(#1-6.6))}%
}

%
%
\pgfmathdeclarefunction{hoff}{1}{%
  \pgfmathparse{%
    0.25
    -0.36*exp(-((#1-6.5)/5.1)^8)
    -0.85*exp(-((#1-2.9)/1.25)^2)%
  }%
}
\pgfmathdeclarefunction{abdry}{1}{%
  \pgfmathparse{ybad(#1)+hoff(#1)}%
}

\def\xgL{2.14882}
\def\xgR{3.81472}
\def\xbL{1.24834}
\def\xbR{10.99577}

\draw[collar]
  (2.02,{ygood(2.02)+.115})
  plot[domain=2.02:3.94,samples=70,smooth] (\x,{ygood(\x)+.115})
  .. controls (4.02,{ygood(3.94)+.09}) and (4.00,{abdry(3.91)-.08}) ..
     (3.90,{abdry(3.90)-.115})
  plot[domain=3.90:2.06,samples=70,smooth] (\x,{abdry(\x)-.115})
  .. controls (1.96,{abdry(2.06)-.08}) and (1.94,{ygood(2.02)+.09}) ..
     (2.02,{ygood(2.02)+.115});

\draw[collar]
  (1.12,{ybad(1.12)+.105})
  plot[domain=1.12:11.12,samples=190,smooth] (\x,{ybad(\x)+.105})
  .. controls (11.21,{ybad(11.10)+.08}) and (11.19,{abdry(11.08)-.07}) ..
     (11.08,{abdry(11.08)-.105})
  plot[domain=11.08:1.16,samples=190,smooth] (\x,{abdry(\x)-.105})
  .. controls (1.05,{abdry(1.16)-.07}) and (1.03,{ybad(1.12)+.08}) ..
     (1.12,{ybad(1.12)+.105});

\draw[Bbdry]
  plot[domain=.30:13.30,samples=220,smooth] (\x,{ygood(\x)});
\draw[Bbdry]
  plot[domain=.30:13.30,samples=220,smooth] (\x,{ybad(\x)});

\draw[Abdry]
  plot[domain=.30:13.30,samples=260,smooth] (\x,{abdry(\x)});

\node[blue!68!black,anchor=south west] at (.46,2.02) {$\mathcal A^{\eps}$};

\node[font=\footnotesize,text=black!86] at (4.7,1.2)
  {$G_{r_{\rm good}}$};
\node[font=\footnotesize,text=black!86,anchor=north] at (2.62,3.12)
  {$G_{r_{\rm bad}}$};
\draw [->] (2.62,2.65) -- (3,2.0);
\draw [->] (4.1,1.3) -- (2.9,1.4);

\node[anchor=west,text=black!70] at (11.15,.97)
  {$\partial\mathbb B_{r_{\rm good}}$};
\node[anchor=west,text=black!70] at (11.15,1.58)
  {$\partial\mathbb B_{r_{\rm bad}}$};

\node[font=\scriptsize,text=black!46,anchor=west] at (5.15,.56)
  {$\eta\text{-fattened boundary}$};
  \draw [collar] (4.15,.56) -- (5.15,.56);
\draw [collar] (8.1,.56) -- (9.1,.56);

\end{tikzpicture}
\caption{
For the good radius, the fattened boundary is controlled by the volume of $G_{r_{\rm good}}  = \mathcal{A}^\eps \cap \mathbb{B}_{r_{\rm good}}$, with $\mathcal{L}^2(G_{r_{\rm good}}) \gtrsim \mathcal{L}^2( G_{r_{\rm good}}^\eta\setminus G_{r_{\rm good}}^{-\eta}).$ However, in the case of the bad radius $r_{\rm bad}$, we see that a thin sliver of $G_{r_{\rm bad}}$ may generate a disproportionately large fattened boundary and break the previous estimate. However, these bad slivers appear only when large pieces of $\partial \mathcal{A}^\eps$ align with $\partial \B_r$, which fails for the majority of radii.}
\label{fig:avgArg}
\end{figure}

\begin{lemma}[Empirical competitor]\label{lem:basCompEmp}
Suppose the results of Subsection \ref{subsec:prob} hold {with the distributional error $\eta$ defined in \eqref{eqn:distrError}}.
Let $\mathcal{A} \subset X_n$, $\eps>0$, $0< \eta< \eps/(8d)$, and $\mathcal{A}^\eps$ denote the dilation in $\R^d$. Given $x_* \in \R^d$ we define the collection of competitors
$$\mathbb{A}_r : = \mathcal{A} \setminus \mathbb{B}_r^\eps \quad \text{ with } \quad \mathbb{B}_r :  = B (x_*,r).$$  For $ R_0\geq 4\eps> 0 $ satisfying
\begin{equation}\label{eqn:volHyp}
0< \mathcal{L}^d(\mathcal{A}^\eps \cap \B_{R_0+7\eps}) \leq Q \mathcal{L}^d( \mathcal{A}^\eps \cap \B_{R_0}),
\end{equation} there is $r\in [R_0,R_0 + 7\eps]$ such that with
$\eps_\eta : = \eps -2\eta$,
$$\lambda_r : = \frac{r +2\eps_\eta}{r}, \qquad M_r (x) : = x_* + \lambda_r(x-x_*), \quad \text{ and }\quad  J_{M_r} : = \lambda_r^{d}$$
it holds that
\begin{equation}\label{eqn:energyImprovementEmp}
\mathcal{R}_{\nu,\eps}(\mathbb{A}_r) + \int_{\mathcal{A}^\eps \cap \mathbb{B}_r}\left[\Big(\rho_0(x) - \rho_1(M_r(x)) J_{M_r}\Big)  - CQ\frac{\eta}{\eps}\right]\,dx  \leq \mathcal{R}_{\nu,\eps}(\mathcal{A}),
\end{equation}
where $C = C(\rho_i,\sigma,d)>0$. 
\end{lemma}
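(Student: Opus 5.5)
I will mimic the proof of Lemma \ref{lem:basComp}, replacing $\rho_i$ by $\nu_i$ and paying for the replacement through \eqref{eqn:wassCollar}. The averaging over $r\in[R_0,R_0+7\eps]$ is used only to control the resulting collar terms.

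\emph{Step 1 (exact rewrite).} Since $\mathbb{A}_r\subset\mathcal{A}$, as in \eqref{eqn:riskRewrite} we have
$$\mathcal{R}_{\nu,\eps}(\mathcal{A})-\mathcal{R}_{\nu,\eps}(\mathbb{A}_r)=\nu_0(\mathcal{A}^\eps\setminus\mathbb{A}_r^\eps)-\nu_1\big((\mathbb{A}_r^c)^\eps\setminus(\mathcal{A}^c)^\eps\big),$$
with complements taken in $X_n$. Points of $\mathbb{A}_r$ lie outside $\B_r^\eps$, so $\mathcal{A}^\eps\cap\B_r\subset\mathcal{A}^\eps\setminus\mathbb{A}_r^\eps$, and therefore
$$\nu_0(\mathcal{A}^\eps\setminus\mathbb{A}_r^\eps)\ge\nu_0(G_r),\qquad G_r:=\mathcal{A}^\eps\cap\B_r .$$
For the $\nu_1$ term, I want a continuum set $S_r\subset\R^d$ containing the discrete set $(\mathbb{A}_r^c)^\eps\setminus(\mathcal{A}^c)^\eps$ and satisfying $N_r(S_r)\subset G_r$, where $N_r=M_r^{-1}$. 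The set should be made of points within $\eps$ of $\B_r^\eps$ and at distance at least $\eps_\eta$ (roughly) from the sample points labelled in $\mathcal{A}^c$. The point of the reduced dilation $\eps_\eta=\eps-2\eta$ in $\lambda_r$ is this: by \eqref{eqn:netnu}, every point of $\Omega$ is within $\eta$ of a sample, so a point of $\R^d$ at distance at least $\eps-\eta$ from $\mathcal{A}^c$ lies within $\eps$ of a point of $\mathcal{A}$. Repeating the argument behind \eqref{eqn:setInclus} with $2\eps$ replaced by $2\eps_\eta$ should then give $N_r(S_r)\subset\mathcal{A}^\eps\cap\B_r$. I would check the $\eta$ margins carefully here, and I expect the choice $\eps_\eta$ is what makes them close.

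\emph{Step 2 (pass to densities).} Apply \eqref{eqn:wassCollar}:
$$\nu_0(G_r)\ge\rho_0(G_r)-\rho_0(G_r^\eta\setminus G_r^{-\eta})-C\eta\rho_0(G_r^\eta),$$
and likewise $\nu_1$ of the discrete set is at most $\rho_1(S_r)$ plus collar terms of the same form. The change of variables \eqref{eqn:CCooVV} then bounds $\rho_1(S_r)\le\int_{G_r}\rho_1(M_r)J_{M_r}\,dx$. The collars of $S_r$ are mapped by $N_r$, which is bi-Lipschitz with constants close to $1$ because $\eta<\eps/(8d)$ and $r\ge4\eps$, into collars of $G_r$ of width about $2\eta$. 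Using boundedness of $\rho_i$ on $\Omega$, everything then reduces to the single estimate
$$\mathcal{L}^d\big(G_r^{2\eta}\setminus G_r^{-2\eta}\big)\le CQ\frac{\eta}{\eps}\,\mathcal{L}^d(G_r)\qquad\text{for some }r\in[R_0,R_0+7\eps].$$
The lower-order term $C\eta\rho_0(G_r^\eta)$ is absorbed the same way, since $G_r^\eta\subset G_{r+\eta}$.

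\emph{Step 3 (averaging; main obstacle).} The boundary of $G_r$ consists of a piece of $\partial\mathcal{A}^\eps$ inside $\B_r$ and a piece of $\partial\B_r$ inside $\mathcal{A}^\eps$. For the spherical piece, the collar volume $\mathcal{L}^d(\mathcal{A}^\eps\cap(\B_{r+2\eta}\setminus\B_{r-2\eta}))$ averaged over $r\in[R_0,R_0+7\eps]$ is at most $\tfrac{C\eta}{\eps}\mathcal{L}^d(\mathcal{A}^\eps\cap\B_{R_0+7\eps})$ by Fubini. For the $\partial\mathcal{A}^\eps$ piece, $\mathcal{A}^\eps$ is a union of $\eps$-balls, so it satisfies an interior ball condition of radius $\eps$. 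Hence the collar of $\partial\mathcal{A}^\eps$ inside $\B_r$ has volume at most $C\frac{d\eta}{\eps}\mathcal{L}^d(\mathcal{A}^\eps\cap\B_{r+2\eps})$, by comparing each such ball with its $\eta$-shell. I expect this to be the delicate geometric inequality (presumably the role of the later Lemma \ref{prop:geometricIneq}), particularly for non-smooth $\sigma$. Once the average bound is in hand, hypothesis \eqref{eqn:volHyp} converts $\mathcal{L}^d(\mathcal{A}^\eps\cap\B_{R_0+7\eps})$ into $Q\,\mathcal{L}^d(G_{R_0})\le Q\,\mathcal{L}^d(G_r)$. A Chebyshev/pigeonhole argument then selects a good $r$, which gives \eqref{eqn:energyImprovementEmp}.
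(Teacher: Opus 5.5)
Your architecture matches the paper's: the exact rewrite, the set $G_r$, the change of variables under $N_r=M_r^{-1}$, transport collars from \eqref{eqn:wassCollar}, Fubini averaging in $r$ plus Proposition~\ref{prop:geometricIneq}, and finally \eqref{eqn:volHyp}. However, the inclusion at the heart of Step 1 is false, and the $\eta$-margins do not close.

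The discrete set $X_n\cap(\mathbb{A}_r^c)^\eps\setminus(\mathcal{A}^c)^\eps$ consists of points of $\mathcal{A}$ within $\eps$ of $\mathcal{A}\cap\B_r^\eps$. It can therefore contain sample points anywhere in $\B_r^{2\eps}$, in particular in the shell $\B_r^{2\eps}\setminus\B_r^{2\eps_\eta}$. But $N_r$ maps only $\B_r^{2\eps_\eta}=B(x_*,r+2\eps-4\eta)$ onto $\B_r$. Any sample point in that shell is sent outside $\B_r$, so no $S_r$ containing the discrete set can satisfy $N_r(S_r)\subset G_r$.

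The reduced dilation $\eps_\eta$ \emph{creates} this shell rather than closing it. It is the price of the inclusion $B(x,2\eps_\eta)\subset\mathcal{A}^\eps$ for sample points with $d(x,\mathcal{A}^c)\geq\eps$, proved by placing a sample within $\eta$ of the midpoint of $x$ and $y$ via \eqref{eqn:netnu}. That inclusion is exactly what gives $N_r(x)\in G_r$ for $x\in\B_r^{2\eps_\eta}$.

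The missing step is the splitting \[ X_n\cap(\mathbb{A}_r^c)^\eps\setminus(\mathcal{A}^c)^\eps\subset M_r(G_r)\cup\big(\mathcal{A}\cap\B_r^{2\eps}\setminus\B_r^{2\eps_\eta}\big), \] together with a separate bound for $\nu_1$ of the shell. That bound is $\rho_1$ of its $\eta$-neighbourhood $\mathcal{A}^\eta\cap\B_r^{2\eps+\eta}\setminus\B_r^{2\eps-5\eta}$, up to a factor $1+C\eta$. Because this slab lies outside $\B_r$, it is not controlled by $\frac{\eta}{\eps}\mathcal{L}^d(G_r)$ for a given $r$. It must enter the same Fubini average over $r$ as the spherical collar of $G_r$, and then be converted through \eqref{eqn:volHyp}.

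Relatedly, take $S_r:=M_r(G_r)$ itself. Only then is its collar the homothetic image $M_r(G_r^{\eta/\lambda_r}\setminus G_r^{-\eta/\lambda_r})$ of a collar of $G_r$, which is what your Step 2 uses. A set defined through distances to the samples of $\mathcal{A}^c$ has boundary pieces unrelated to $\partial G_r$.

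Three smaller repairs are needed in Steps 2--3.

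\emph{Averaging window.} Your bounds at radius $r$ involve $\mathcal{A}^\eps\cap\B_{r+2\eps}$, and the shell above sits near radius $r+2\eps$. For $r$ near $R_0+7\eps$ these leave $\B_{R_0+7\eps}$, the only region where \eqref{eqn:volHyp} gives control. Average instead over a short window such as $r\in[R_0,R_0+\eps]$, as the paper does. Localize with the finite set $S=\mathcal{A}\cap\B_{R_0+6\eps}$, using $(\mathcal{A}^\eps)^\eta\cap\B_{R_0+4\eps}\subset(S^\eps)^\eta$ and $(S^\eps)^{-\eta}\subset(\mathcal{A}^\eps)^{-\eta}$, before applying Proposition~\ref{prop:geometricIneq}.

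\emph{The inclusion $G_r^\eta\subset G_{r+\eta}$.} This is false, since $G_r^\eta$ contains the outer $\eta$-collar of $\mathcal{A}^\eps$. The absorption of $C\eta\rho_0(G_r^\eta)$ instead uses $\mathcal{L}^d((\mathcal{A}^\eps)^\eta\cap\B_{R_0+4\eps})\leq C\mathcal{L}^d(\mathcal{A}^\eps\cap\B_{R_0+7\eps})$, which follows from the geometric inequality, followed by \eqref{eqn:volHyp}.

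\emph{Ball-by-ball comparison.} Comparing each $\eps$-ball with its own $\eta$-shell fails once balls overlap, since the shells' total volume can far exceed the volume of the union. The $\partial\mathcal{A}^\eps$ collar bound comes from the Voronoi star-shaped scaling argument behind Proposition~\ref{prop:geometricIneq}. Cite that result rather than the interior-ball heuristic.
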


{We emphasize that this is a conditional competitor: given \eqref{eqn:volHyp}, we may construct $\mathbb{A}_r$ satisfying \eqref{eqn:energyImprovementEmp}. Our choice of $Q$ is made in the proof of Theorem \ref{thm:mainEmpirical} and depends on both $\eps$ and the regularity of $A_0$ (low or high).}

To prove the competitor lemma, we rely on a geometric inequality controlling errors arising in \eqref{eqn:wassCollar}. This is a consequence of Theorem 4 in \cite{stacho1976volume}, but given that we only rely on the inequality for discrete sets we provide a self-contained proof in Section \ref{sec:appGeo}.

\begin{proposition}\label{prop:geometricIneq}
Let $S\subset \R^d$ be finite. If $0<\eta\leq \eps/(8d)$, then
$$\mathcal{L}^d((S^\eps)^\eta\setminus (S^\eps)^{-\eta}) \leq Cd\frac{\eta}{\eps}\mathcal{L}^d(S^\eps) $$
for a universal constant $C>0$, where dilations/erosions are taken in $\R^d.$ 
\end{proposition}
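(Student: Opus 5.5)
The plan is to trap the fattened boundary between two concentric dilations of $S$, and then compare volumes of dilations of a finite set at different radii by a cell-wise homothety argument in the spirit of Stach\'o.

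\emph{Step 1 (set inclusion).} Write $U := S^\eps = \bigcup_{x\in S} B(x,\eps)$. The outer part is exact: $U^\eta = S^{\eps+\eta}$, since $d(y,U)<\eta$ if and only if $d(y,S)<\eps+\eta$. For the inner part, take $y$ with $d(y,S)<\eps-\eta$. Then every $z$ with $d(z,y)<\eta$ satisfies $d(z,S)<\eps$, so $B(y,\eta)\subset U$. This gives $d(y,U^c)\geq\eta$, that is, $y\in U^{-\eta}$. Therefore
$$U^\eta\setminus U^{-\eta}\subset S^{\eps+\eta}\setminus S^{\eps-\eta},$$
and it suffices to bound $\mathcal{L}^d(S^{\eps+\eta})-\mathcal{L}^d(S^{\eps-\eta})$. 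Note that $\eps-\eta>0$ because $\eta\le\eps/(8d)$.

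\emph{Step 2 (volume growth of dilations; the main step).} I will show that for $0<t<s$,
$$\mathcal{L}^d(S^s)\le (s/t)^d\,\mathcal{L}^d(S^t).$$

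For $x\in S$, define the $\sigma$-Voronoi cell $V_x := \{y : d(y,x)\le d(y,x') \ \forall x'\in S\}$. The cells cover $\R^d$, and I will take them to overlap only on a Lebesgue-null set. This is the point to check carefully for non-strictly-convex norms: the bisector sets $\{d(\cdot,x)=d(\cdot,x')\}$ can have positive measure. If that happens, I will replace the cells by a measurable selection (for instance, assign ties by a fixed ordering of $S$), which keeps the star-shapedness used below.

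Each $V_x$ is star-shaped with respect to $x$ for any norm. Indeed, if $y\in V_x$ and $z$ lies on the segment $[x,y]$, then
$$d(z,x)=d(y,x)-d(y,z)\le d(y,x')-d(y,z)\le d(z,x').$$
Moreover $S^t\cap V_x=B(x,t)\cap V_x$, because within $V_x$ the nearest point of $S$ is $x$.

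Consider the homothety $H(y)=x+(t/s)(y-x)$. It maps $B(x,s)\cap V_x$ into $B(x,t)\cap V_x$, using star-shapedness for the cell and homogeneity of $\sigma$ for the ball. Since $H$ has Jacobian $(t/s)^d$,
$$\mathcal{L}^d(B(x,s)\cap V_x)\le (s/t)^d\,\mathcal{L}^d(B(x,t)\cap V_x).$$
Summing over $x\in S$ gives the claim.

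\emph{Step 3 (conclusion).} Apply Step 2 with $s=\eps+\eta$ and $t=\eps-\eta$, and use $S^{\eps-\eta}\subset S^\eps$:
$$\mathcal{L}^d(S^{\eps+\eta})-\mathcal{L}^d(S^{\eps-\eta})\le\Big[\Big(\tfrac{\eps+\eta}{\eps-\eta}\Big)^d-1\Big]\mathcal{L}^d(S^{\eps}).$$
Since $\eta\le\eps/(8d)$, we have $\frac{\eps+\eta}{\eps-\eta}\le 1+\frac{4\eta}{\eps}$, so
$$\Big(\tfrac{\eps+\eta}{\eps-\eta}\Big)^d-1\le e^{4d\eta/\eps}-1\le 8d\,\frac{\eta}{\eps},$$
where the last inequality uses $4d\eta/\eps\le 1/2$ (the same linearization as in Proposition~\ref{prop:DboundSimple}). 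Combined with Step 1, this gives the proposition with a universal constant $C$ (e.g.\ $C=8$).

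I expect Step 2 to be the crux, specifically the handling of Voronoi ties for crystalline norms. The rest is elementary.
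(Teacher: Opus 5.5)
Your proposal is correct and is essentially the paper's proof: the paper also uses a Voronoi partition with ties broken by a fixed ordering of $S$ and checks that each cell is star-shaped about its center. It then contracts cell by cell to get $\mathcal{L}^d(S^u)\ge (u/v)^d\mathcal{L}^d(S^v)$, takes $u=\eps-\eta$, $v=\eps+\eta$, linearizes, and concludes via $(S^\eps)^\eta=S^{\eps+\eta}$ and $S^{\eps-\eta}\subset(S^\eps)^{-\eta}$. The one point you left implicit is that the tie-broken cells remain star-shaped, and it follows from your own triangle-inequality computation: $d(z,x')\ge d(y,x')-d(y,z)>d(y,x)-d(y,z)=d(z,x)$, so strict inequalities are preserved along the segment.
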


We now construct the competitor.

\begin{proof}[Proof of Lemma \ref{lem:basCompEmp}]
We break the argument into two pieces. In the first step, estimate the gap between the risk for $\mathcal{A}$ and $\mathbb{A}_s$ using set inequalities and transport inequalities due to \eqref{eqn:wassInftyTransport}. This leads to geometric errors that are similar to a Minkowski-type perimeter. In Step 2, we perform an averaging argument to control these geometric errors. Throughout, we exclusively use the dilation/erosion in $\R^d.$

\emph{Step 1 (Breakdown of the energy).}
Since $\A_s \subset \mathcal{A}$ we have $\A_s^\eps\subset \mathcal{A}^\eps$ and $(\mathcal{A}^c)^\eps\subset (\A_s^c)^\eps$, where complements are taken relative to $X_n$, we have that
\begin{equation}\label{eqn:riskRewriteEmp}
 \mathcal{R}_{\nu,\eps}(\mathcal{A}) -\mathcal{R}_{\nu,\eps}(\A_s) = \nu_0(\mathcal{A}^\eps \setminus \mathbb{A}_s^\eps) - \nu_1((\mathbb{A}_s^c)^\eps \setminus (\mathcal{A}^c)^\eps).
 \end{equation}
 To apply \eqref{eqn:wassCollar} to the right-hand side sets, we first modify the sets so that their $\eta$-fattened boundaries are easier to control. Precisely,
 because $\mathbb{A}_s \subset (\B^\eps_s)^c$,  we have that 
 \begin{equation}\label{def:Gs} 
 G_s : = \mathcal{A}^\eps \cap \B_s \subset \mathcal{A}^\eps \setminus \mathbb{A}_s^\eps
 \end{equation}
 {(recall Figure \ref{fig:avgArg}).}
 We further claim that
 \begin{equation}\label{eqn:empSetIncl}
  X_n \cap (\mathbb{A}_s^c)^\eps \setminus (\mathcal{A}^c)^\eps \subset M_s(G_s)\cup \big(\mathcal{A}\cap \B^{2\eps}_{s}\setminus\B^{2\eps_\eta}_{s}\big). 
  \end{equation}

 By definition of $\A_s$, we have that
 $$X_n \cap (\mathbb{A}_s^c)^\eps \setminus (\mathcal{A}^c)^\eps \subset \B^{2\eps}_s.$$ 
 From this, as $X_n \cap (\mathbb{A}_s^c)^\eps \setminus (\mathcal{A}^c)^\eps \subset \mathcal{A}$, it follows that 
 \begin{equation}\label{eqn:outsideBallSubset}
 \left(X_n \cap (\mathbb{A}_s^c)^\eps \setminus (\mathcal{A}^c)^\eps\right)\setminus \B^{2\eps_\eta}_s \subset \mathcal{A}\cap \B^{2\eps}_{s}\setminus\B^{2\eps_\eta}_{s}.
 \end{equation}
 On the other hand, if $$x\in \left(X_n \cap (\mathbb{A}_s^c)^\eps \setminus (\mathcal{A}^c)^\eps\right)\cap \B^{2\eps_\eta}_{s}$$ then $N_s : = M_s^{-1}$ maps $\B^{2\eps_\eta}_s$ onto $\B_s$ so that 
 \begin{equation}\label{eqn:baller1}
 N_s(x)\in \B_s.
 \end{equation} Additionally, since $x\not \in (\mathcal{A}^c)^\eps$, we claim that 
 \begin{equation}\label{eqn:smallBallContain}
 B(x,2\eps_\eta)\subset \mathcal{A}^\eps.
 \end{equation} To see this, let $y\in B(x,2\eps_\eta)$ and define $z = (x+y)/2$. Then by \eqref{eqn:netnu}, there is $\tilde x \in X_n$ such that $d(\tilde x , z)<\eta$. By the triangle inequality $d(x,\tilde x)<\eps$ and $d(\tilde x, y)<\eps$. The first inequality $d(x,\tilde x)<\eps$ implies that $\tilde x\in \mathcal{A}$ (see Proposition \ref{prop:openCloseBasicProperties}), and the second that $y\in \mathcal{A}^\eps$, proving \eqref{eqn:smallBallContain}. As $d(x,N_s(x)) < 2\eps_\eta$ {(see above \eqref{eqn:setInclus})}, we conclude 
$ N_s(x)\in \mathcal{A}^\eps$. This set inclusion combined with \eqref{eqn:baller1} and \eqref{def:Gs} implies that $x \in M_s(G_s)$. With \eqref{eqn:outsideBallSubset}, this conclude \eqref{eqn:empSetIncl}.

Using \eqref{def:Gs} and \eqref{eqn:empSetIncl} in \eqref{eqn:riskRewriteEmp}, we apply \eqref{eqn:wassCollar} to find  
\begin{equation}\label{eqn:almostEnergyBound}
\begin{aligned}
\mathcal{R}_{\nu,\eps}(\mathcal{A}) -\mathcal{R}_{\nu,\eps}(\A_s) & \geq \nu_0(G_s) - \nu_1(M_s(G_s)) - \nu_1(\mathcal{A}\cap \B^{2\eps}_{s}\setminus\B^{2\eps_\eta}_{s} ) \\
& \geq \int_{G_s}\Big(\rho_0(x) - \rho_1(M_r(x)) J_{M_r}\Big)\, dx  -   C \eta\mathcal{L}^d(G_s^\eta) \\
& \qquad  - \Big(\rho_0(G_s^{\eta}\setminus G_s^{-\eta}) + \rho_1([M_s(G_s)]^{\eta}\setminus [M_s(G_s)]^{-\eta}) + \rho_1 (\mathcal{A}^\eta\cap \B^{2\eps+\eta}_{s}\setminus\B^{2\eps-5\eta}_{s}) \Big),
\end{aligned}
\end{equation} 
where the constant in the first error term absorbs $J_{M_s}$ and $\|\rho_i\|_\infty$.
We next conduct an averaging argument to choose a good radius $r\in [R_0,R_0+\eps]$ so that
\begin{equation}\label{eqn:simpleBit}
    \mathcal{L}^d(G_r^\eta) \leq Q\mathcal{L}^d(G_r)
\end{equation}
and
\begin{equation} \label{eqn:estViaAvg}
\rho_0(G_r^{\eta}\setminus G_r^{-\eta}) + \rho_1([M_r(G_r)]^{\eta}\setminus [M_r(G_r)]^{-\eta}) + \rho_1 (\mathcal{A}^\eps\cap \B^{2\eps+\eta}_{r}\setminus\B^{2\eps-5\eta}_{r})\leq Q\frac{\eta}{\eps}\mathcal{L}^d(G_r),
\end{equation} which with \eqref{def:Gs} and \eqref{eqn:almostEnergyBound} concludes the lemma.

\emph{Step 2 (Choosing a good radius $r$).}
Define the set $S : = \mathcal{A}\cap \B_{R_0+6\eps}$. For $x\in (\mathcal{A}^\eps)^\eta \cap \B_{R_0+4\eps},$ there is some $y\in \mathcal{A}$ with $d(x,y) < \eps + \eta < 2\eps$, meaning $y\in  \B_{R_0+6\eps}$. Consequently $x\in (S^{\eps})^\eta$, and $(\mathcal{A}^\eps)^\eta \cap \B_{R_0+4\eps}\subset (S^{\eps})^\eta$. Simpler reasoning gives $(S^{\eps})^{-\eta}\subset (\mathcal{A}^\eps)^{-\eta}$. These subset relations in conjunction with Proposition \ref{prop:geometricIneq} give
\begin{equation}\label{eqn:firstPerimEst}
\mathcal{L}^d\big([(\mathcal{A}^\eps)^\eta \setminus (\mathcal{A}^\eps)^{-\eta}] \cap \B_{R_0+4\eps} \big)\leq C \frac{\eta}{\eps}\mathcal{L}^d(S^\eps) \leq C \frac{\eta}{\eps}\mathcal{L}^d(\mathcal{A}^\eps \cap \B_{R_0+7\eps}) = C \frac{\eta}{\eps}\mathcal{L}^d(G_{R_0+7\eps}). 
\end{equation}
As a direct consequence of this inequality and $\eta\leq\eps$, we also find 
\begin{equation}\label{eqn:anotherVol}
\mathcal{L}^d\big((\mathcal{A}^\eps)^\eta  \cap \B_{R_0+4\eps} \big) 
\leq \mathcal{L}^d\big((\mathcal{A}^\eps)^\eta\setminus (\mathcal{A}^\eps)^{-\eta}  \cap \B_{R_0+4\eps} \big) +\mathcal{L}^d\big(\mathcal{A}^\eps  \cap \B_{R_0+4\eps} \big)
\leq  
C \mathcal{L}^d(G_{R_0+7\eps}).
\end{equation}
We note this inequality with \eqref{eqn:volHyp} shows that for any $r \in [R_0,R_0+\eps]$,  \eqref{eqn:simpleBit} holds.

Recalling \eqref{def:Gs}, we have that
\begin{equation} \label{eqn:graphDecomp}
G_s^{\eta}\setminus G_s^{-\eta} \subset \Big( \left[(\mathcal{A}^\eps)^\eta \setminus (\mathcal{A}^\eps)^{-\eta}\right]\cap \B_s^\eta \Big) \cup \Big((\mathcal{A}^\eps)^\eta \cap \B_s^\eta \setminus \B_s^{-\eta}\Big).
\end{equation}
Applying \eqref{eqn:firstPerimEst} and \eqref{eqn:volHyp} for any $s\in [R_0,R_0+\eps]$, it holds that 
$$\mathcal{L}^d\Big( \left[(\mathcal{A}^\eps)^\eta \setminus (\mathcal{A}^\eps)^{-\eta}\right]\cap \B_s^\eta \Big) \leq  C Q\frac{\eta}{\eps}\mathcal{L}^d(G_{R_0}) \leq C Q\frac{\eta}{\eps}\mathcal{L}^d(G_s).$$
To control the last set in \eqref{eqn:graphDecomp}, we use polar coordinates/Fubini's theorem, \eqref{eqn:anotherVol}, and \eqref{eqn:volHyp} to estimate
\begin{equation}\label{eqn:avgArgument}
\int_{R_0}^{R_0+\eps} \mathcal{L}^d\Big((\mathcal{A}^\eps)^\eta \cap \B_s^\eta \setminus \B_s^{-\eta}\Big) \, ds \leq C\eta \mathcal{L}^d\big((\mathcal{A}^\eps)^\eta  \cap \B_{R_0+4\eps} \big) \leq C Q\eta\mathcal{L}^d(G_{R_0}).
\end{equation}
An averaging argument then gives that there is $r \in [R_0,R_0+\eps]$
such that
$$\mathcal{L}^d\Big((\mathcal{A}^\eps)^\eta \cap \B_r^\eta \setminus \B_r^{-\eta}\Big) \leq C Q\frac{\eta}{\eps}\mathcal{L}^d(G_{R_0})\leq C Q\frac{\eta}{\eps}\mathcal{L}^d(G_{r}).$$
Altogether we find
\begin{equation}\label{eqn:anVol2}
\mathcal{L}^d(G_r^{\eta}\setminus G_r^{-\eta})\leq C Q\frac{\eta}{\eps}\mathcal{L}^d(G_{r}),
\end{equation}
giving control on the first term in \eqref{eqn:estViaAvg}.

Control on the second term in \eqref{eqn:estViaAvg} follows from \eqref{eqn:anVol2}: Since $M_r$ is isotropic, we have that
$$[M_r(G_r)]^{\eta}\setminus [M_r(G_r)]^{-\eta} = M_r(G_r^{\eta/\lambda_r}\setminus G_r^{-\eta/\lambda_r})$$
and so $$\mathcal{L}^d ([M_r(G_r)]^{\eta}\setminus [M_r(G_r)]^{-\eta}) = J_{M_r}\mathcal{L}^d(G_r^{\eta/\lambda_r}\setminus G_r^{-\eta/\lambda_r}) \leq C Q\frac{\eta}{\eps}\mathcal{L}^d(G_{r}), $$
where in the last inequality we use \eqref{eqn:anVol2} and $1\leq \lambda_r \leq 2$ to bound $J_M\leq 2^d$ (recall $R_0\geq 4\eps$; we point out that a much tighter control of $J_{M_r}$ is possible within the proof of Theorem \ref{thm:mainEmpirical}).

Finally the third term in \eqref{eqn:estViaAvg} may be controlled using another averaging argument as in \eqref{eqn:avgArgument}. As the averaging arguments may be done simultaneously, we find a good radius $r\in [R_0,R_0+\eps]$ obtaining \eqref{eqn:estViaAvg}, concluding the proof.
\end{proof}

\subsection{Convergence in the low regularity setting}\label{subsec:lowRegEmp}
We turn to the proof of Theorem \ref{thm:mainEmpirical}\eqref{item:lowRegEmp}. As the proof follows the same strategy as that for Theorem \ref{thm:main}\eqref{item:lowReg}, we are light on details where they are in close correspondence and provide full details where the proofs deviate.

We first obtain a one-sided estimate. The primary difference with respect to the continuum setting is an averaging argument to choose $Q$ in Lemma \ref{lem:basCompEmp} well. For the low regularity setting, we are able to choose $Q =2.$ Looking to \eqref{eqn:energyImprovementEmp} with $\eta  = O(\eps^{3/2})$ and non-degeneracy in mind shows that a contradiction to minimality arises if we are at least $C\sqrt{\eps}$ away from $\partial A_0$ for $C\gg 1$. We formalize this argument in the following proposition.

\begin{proposition}\label{prop:DboundSimEmp}
Under the hypotheses of Theorem \ref{thm:mainEmpirical}\eqref{item:lowRegEmp}, there is $C = C (\rho_i, \sigma, d, K)>0$ and $\eps_0 = \eps_0 (\rho_i,\sigma, d, K)$ such that $$ \sup_{x \in \mathcal{A}_\eps^{-\eps} \cap K} d(x,A_0) < C\sqrt{\eps} \quad \text{ for }\eps<\eps_0,$$
where $\mathcal{A}_\eps^{-\eps}$ is the erosion in $X_n$.
\end{proposition}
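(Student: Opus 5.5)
We argue by contradiction, mirroring the proof of Proposition \ref{prop:DboundSimple}, with Lemma \ref{lem:basCompEmp} replacing Lemma \ref{lem:basComp}. Suppose
$$D := \sup_{x\in \mathcal{A}_\eps^{-\eps}\cap K} d(x,A_0) > C_0\sqrt{\eps},$$
and pick a point $x_*\in \mathcal{A}_\eps^{-\eps}\cap K$ (erosion in $X_n$, a finite set) attaining $D$. Since $x_*\in \mathcal{A}_\eps^{-\eps}$, every sample within distance $\eps$ of $x_*$ lies in $\mathcal{A}_\eps$. By \eqref{eqn:netnu}, every point of $B(x_*,2\eps_\eta)$ is within $\eps$ of some sample $\tilde x\in\mathcal{A}_\eps$; this is the argument of \eqref{eqn:smallBallContain}. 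Hence $B(x_*,2\eps-4\eta)\subset \mathcal{A}_\eps^\eps$, so $\mathcal{A}_\eps^\eps$ has positive measure near $x_*$.

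\emph{Choosing $R_0$ so that \eqref{eqn:volHyp} holds with $Q=2$.} We average over scales. Consider the function $f(s):=\mathcal{L}^d(\mathcal{A}_\eps^\eps\cap \B_s)$ on $[2\eps, D/2]$. We have $f(2\eps)\gtrsim \eps^d$ and $f(D/2)\lesssim D^d$. If $f(s+7\eps)>2f(s)$ held for every $s$ in this range, then iterating over roughly $D/(14\eps)$ steps would give $f(D/2)\ge 2^{cD/\eps}\eps^d$. This contradicts $f(D/2)\lesssim D^d$ once $D/\eps\gg d\log(D/\eps)$, which holds since $D\gtrsim\sqrt\eps$. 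Therefore there is $R_0\in[4\eps,D/2]$ with $f(R_0+7\eps)\le 2f(R_0)$. We would additionally like $R_0\sim D$, because the Jacobian error in \eqref{eqn:inteLB} is of size $\eps d/r$. To get this, we run the doubling argument only on $[D/4,D/2]$, starting from the lower bound $f(D/4)\ge f(2\eps)\gtrsim\eps^d$. The $\sim D/\eps$ available steps then give the same contradiction, since $2^{cD/\eps}\eps^d\gg D^d$ when $D\ge C_0\sqrt\eps$ and $\eps$ is small. This step is the main obstacle. It must be checked carefully that the number of steps beats the polynomial loss $(D/\eps)^d$ uniformly in the dimension, which works because the exponent is $D/\eps\gtrsim \eps^{-1/2}$.

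\emph{Energy estimate.} Lemma \ref{lem:basCompEmp} with this $R_0$ and $Q=2$ yields $r\in[R_0,R_0+7\eps]\subset[D/4,D]$ and a competitor $\mathbb{A}_r$ satisfying \eqref{eqn:energyImprovementEmp}. It then suffices to show that the integrand is positive on $\mathcal{A}_\eps^\eps\cap\B_r$. We repeat Steps 2 and 3 of Proposition \ref{prop:DboundSimple}, with $\eps_\eta\le\eps$ in place of $\eps$ in $\lambda_r$.

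In Step 2, where $x_*$ is far from $A_0$, the integrand is at least
$$\gamma/2 - C\eps d/\zeta - 2C\eta/\eps,$$
which is positive because $\eta<c_0\eps^{3/2}$.

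In Step 3, where $d(x_*,A_0)\le\zeta/2$, the ball $\B_r$ with $r\le D$ must stay at distance $\gtrsim D$ from $A_0$. To arrange this, we restrict the averaging to $r\in[D/4,D/2]$. Then every $x\in\B_r$ has $d(x,A_0)\ge D/2$, and non-degeneracy \eqref{eqn:nondegen} gives the lower bound
$$\frac{c}{2}C_0\sqrt\eps - C_{\rm Lip}\eps - \frac{C d\eps}{D} - 2C\frac{\eta}{\eps} \;\ge\; \frac{c}{2}C_0\sqrt\eps - C\sqrt\eps\Bigl(\eps^{1/2}+\frac{d}{C_0}+c_0\Bigr).$$
This is positive for $C_0$ large, $c_0$ small, and $\eps<\eps_0$. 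Positivity of the integrand on a set of positive measure contradicts the minimality of $\mathcal{A}_\eps$.

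Finally, we make sure the requirement $\eta<\eps/(8d)$ in Lemma \ref{lem:basCompEmp} is satisfied; it follows from $\eta<c_0\eps^{3/2}$ for $\eps$ small. We also note that $\B_r^{2\eps}$ remains inside $K^1\cap\Omega$, which holds after shrinking $\eps_0$ and $\zeta$.
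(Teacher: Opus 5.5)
Your overall strategy is the same as the paper's. You argue by contradiction at a farthest point $x_*$, run a doubling argument over shells of width $7\eps$ to get \eqref{eqn:volHyp} with $Q=2$, and then combine Lemma \ref{lem:basCompEmp} with the far/close dichotomy of Proposition \ref{prop:DboundSimple}. The ball-containment argument, the doubling count and the close case are all fine.

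The gap is in the far case. Your averaging window $[D/4,D/2]$ is tied only to $D$, and when $d(x_*,A_0)\ge \zeta/2$ nothing makes $D$ small: it can be of order ${\rm diam}(\Omega)$. Then $\B_r$ with $r\ge D/4$ need not lie in $K_1$, or even in $\Omega$. So the bound $\rho_0-\rho_1\ge\gamma$, which is an infimum over a compact subset of $\Omega$, is not available on $\mathcal{A}_\eps^\eps\cap\B_r$. Non-degeneracy gives no control near $\partial\Omega$. On $\mathcal{A}_\eps^\eps\setminus\Omega$ one even has $\rho_0=0$, so the integrand in \eqref{eqn:energyImprovementEmp} is $\le -CQ\eta/\eps<0$ there.

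Your closing claim that $\B_r^{2\eps}\subset K^1\cap\Omega$ "after shrinking $\eps_0$ and $\zeta$" is therefore false. The radius $r\ge D/4$ does not depend on either parameter. This is exactly why the continuum Step 2 takes $r=\zeta/4$ rather than $r\sim D$.

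The repair is what the paper does. Fix $\zeta<{\rm dist}(K,\partial K_1)$ and set $r_*:=\min\{D/8,\zeta/16\}$. Then run the doubling on $[r_*,3r_*/2]$, starting from $\mathcal{L}^d(\mathcal{A}_\eps^\eps\cap\B_{r_*})\gtrsim\eps^d$.
\begin{itemize}
\item The number of steps is $\gtrsim r_*/\eps\gtrsim\min\{C_0\eps^{-1/2},\zeta\eps^{-1}\}$. This still beats the polynomial factor $(r_*/\eps)^d\lesssim\eps^{-d}$ for $\eps$ small.
\item In the close case, $r_*=D/8$ and $r\in[D/8,D/4]$, which reproduces your estimate up to constants.
\item In the far case, $r\in[\zeta/16,\zeta/8]$. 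This keeps $\B_r^{2\eps}$ inside $K_1\cap\{d(\cdot,A_0)\ge\zeta/4\}$, where $\rho_0-\rho_1\ge\gamma$, while the Jacobian error stays $\lesssim \eps d/\zeta$.
\end{itemize}
With this cap on the radius, your argument coincides with the paper's.
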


\begin{proof}
Dilations are taken in $\R^d$.
Let $K\subset\subset K_1 \subset\subset \Omega$, with $K_1$ an open set. Choose $\zeta\in (0,{\rm dist}(K,\partial K_1))$  as in the non-degeneracy Definition \ref{def:nondegen} on $K_1$.
Suppose by way of contradiction that 
\begin{equation}\label{eqn:contraEmp1}
D : = \sup_{x \in \mathcal{A}_\eps^{-\eps} \cap K} d(x,A_0) > C_0 \sqrt{\eps}
\end{equation}
for $C_0$ to be chosen below. Let $x_*\in \mathcal{A}_\eps^{-\eps}$ satisfy $d(x_*,A_0) = D$ and define $r_* : = \min\{D/8,\zeta/16\}$. 
For $\eps < \eps_0 (\zeta)$, we have $r_* > 4\eps,$ so
in particular, we have that 
\begin{equation}\label{eqn:anotherVolLowBdd}
\mathcal{L}^d(A_\eps^\eps \cap \B_{r_*}) \geq \mathcal{L}^{d}(\B_\eps) >c \eps^d
\end{equation} where $\B_{s} : = B(x_*,s)$.
Defining $r_j : = r_*+ 7\eps j$, we claim that for $\eps<\eps_0$ (determined by \eqref{eqn:polyBlowup}) there is some $r_{j_0} \in [r_*,3r_*/2]$ such that
\begin{equation}\label{eqn:volHypSetup1}
\mathcal{L}^d(A_\eps^\eps \cap \B_{r_{j_0+1}})\leq 2 \mathcal{L}^d(A_\eps^\eps \cap \B_{r_{j_0}}).
\end{equation}
If the inequality fails for every $r_j$, we can iterate the negated inequality across $j=0,\ldots, \lfloor {r_*/(14\eps)} \rfloor$ to find
$$2^{\lfloor {r_*/(14\eps)} \rfloor}\mathcal{L}^d(A_\eps^\eps \cap \B_{r_{*}}) \leq \mathcal{L}^d(A_\eps^\eps \cap \B_{(3/2)r_{*}}).$$
Combined with \eqref{eqn:anotherVolLowBdd} we obtain
$$2^{r_*/28\eps}\eps^d \leq Cr_*^d, $$
for $C>0$ (depending also on the dimension). Inserting \eqref{eqn:contraEmp1} and the definition of $r_*$, we find that
\begin{equation}\label{eqn:polyBlowup}
2^{C\eps^{-1/2}}\leq C(r_*/\eps)^d\leq C \eps^{-d},
\end{equation} which is not possible for $\eps$ sufficiently small (independent of $C_0$ when $C_0 \geq 1$).
It follows that \eqref{eqn:volHypSetup1} holds for some choice of  $j_0$.

Consequently using $\eta < c_0\eps^{3/2}$, we may apply Lemma \ref{lem:basCompEmp} with $R_0 : = r_{j_0}$ and $Q = 2$ to find a competitor denoted (with an abuse of notation) by $\mathbb{A}_\eps : = \mathcal{A}_\eps\setminus \B_{r}$ for some $r\in [r_*,2r_*]$ such that
 \begin{equation}\label{eqn:energyImprovementEmp1}
\mathcal{R}_{\nu,\eps}(\mathbb{A}_\eps) + \int_{\mathcal{A}_\eps^\eps \cap \mathbb{B}_r}\left[\Big(\rho_0(x) - \rho_1(M(x)) J_{M}\Big)  - c_0C\sqrt{\eps}\right]\,dx  \leq \mathcal{R}_{\nu,\eps}(\mathcal{A}_\eps),
\end{equation}
with $\eps_\eta : = \eps- 2\eta$,
$$\lambda : = \frac{r +2\eps_\eta}{r}, \qquad M (x) : = x_* + \lambda(x-x_*), \quad \text{ and }\quad  J_{M} : = \lambda^{d}.$$

Since by construction $r \leq \min\{D/4,\zeta/4\}$, one may proceed precisely as in the proof of Proposition \ref{prop:DboundSimple}:
In the \emph{far from} case, 
$d(x_*,A_0)\geq \zeta/2$ and it holds that
$$\rho_0(x) - \rho_1(M(x)) J_{M} \geq \gamma/4 \quad \text{ where }\quad \gamma : = \inf_{x \in K_1 \cap \{d(y,A_0) \geq  \zeta/4\}}(\rho_0-\rho_1) > 0 $$ for $\eps$ sufficiently small (this follows from \eqref{eqn:inteLB} and \eqref{eqn:helpfuLB}). When $C\sqrt{\eps} < \gamma/4$, 
\eqref{eqn:energyImprovementEmp1} contradicts the minimality of $\mathcal{A}_\eps$ and shows that $d(x_*,A_0)<\zeta/2$.
In the \emph{close to} case with $d(x_*,A_0)<\zeta/2$, given $\tilde C_0 > 0$, for $C_0$ sufficiently large and $\eps$ sufficiently small it holds that
$$\rho_0(x) - \rho_1(M(x)) J_{M} \geq \tilde C_0\sqrt{\eps}$$ (this follows from \eqref{eqn:helpfuLB1}). Thus, taking $c_0$ appearing in \eqref{eqn:energyImprovementEmp1} sufficiently small shows that the integral term therein is positive, contradicting minimality of $\mathcal{A}_\eps$ and concluding the proof. 
\end{proof}

Finally we complete the proof of convergence in the low regularity setting for empirical classifiers. The main difference with the continuum argument is that we must bound the distance to the continuum Bayes classifier, but Proposition \ref{prop:openclose} naturally relates sets to the \emph{empirical} Bayes classifier. We use \eqref{eqn:netnu} to move between the two Bayes classifiers. 

\begin{proof}[Proof of Theorem \ref{thm:mainEmpirical}\eqref{item:lowRegEmp}]
Let $K\subset \subset K_1 \subset \subset K_2 \subset\subset \Omega$ and consider $\eps < \min\{d(K,K_1^c),d(K_1,K_2^c)\}/8.$ Applying Proposition \ref{prop:DboundSimEmp}, we have that
\begin{equation}\label{eqn:letsbounds}
\sup_{x \in \mathcal{A}_\eps^{-\eps} \cap K_2} d(x,A_0) < C\sqrt{\eps} \quad \text{ for }\eps<\eps_0.
\end{equation}
Proposition \ref{prop:openclose} shows that the discrete opening ${\rm op}_\eps (\mathcal{A_\eps})$ is an empirical adversarial minimizer with
$$ {\rm op}_\eps (\mathcal{A_\eps})^\eps \subset \mathcal{A}^\eps_\eps \quad \text{ and } \quad \nu_0({\rm op}_\eps (\mathcal{A_\eps})^\eps) = \nu_0(\mathcal{A}^\eps_\eps).$$
This implies 
\begin{equation}\label{eqn:openSetEmp}
{\rm op}_\eps (\mathcal{A_\eps})^\eps \cap {\rm spt}(\nu_0) = \mathcal{A}^\eps_\eps \cap {\rm spt}(\nu_0) .
\end{equation}
Since ${\rm op}_\eps (\mathcal{A_\eps})^{-\eps} = \mathcal{A}^{-\eps}_\eps$ by Proposition \ref{prop:openCloseBasicProperties}, \eqref{eqn:letsbounds} and \eqref{eqn:openSetEmp} give
\begin{equation}\label{eqn:letsbounds2}
\sup_{x \in \mathcal{A}_\eps^\eps \cap K_1 \cap {\rm spt}(\nu_0)} d(x,A_0) \leq \sup_{x \in {\rm op}_\eps (\mathcal{A_\eps})^{-\eps} \cap K_2 \cap {\rm spt}(\nu_0)} d(x,A_0) +2\eps < C\sqrt{\eps} +2\eps \leq C\sqrt{\eps} \quad \text{ for }\eps<\eps_0,
\end{equation}
where we have possibly increased $C.$

On the other hand, consider $x \in \mathcal{A}_\eps \cap K\cap {\rm spt}(\nu_1)$. By \eqref{eqn:netnu} there is $y\in {\rm spt}(\nu_0)$ with $d(x,y)<\eta$ so that, in particular, $y\in \mathcal{A}^\eps_\eps \cap K_1 \cap {\rm spt}(\nu_0) $. Then \eqref{eqn:letsbounds2} implies 
\begin{equation}\label{eqn:letsbounds3}
\sup_{x \in \mathcal{A}_\eps \cap K } d(x,A_0) <  C\sqrt{\eps} \quad \text{ for }\eps<\eps_0,
\end{equation}
where we have possibly increased $C.$
This gives control of the first term of $d_{\mathcal{H},K}(\mathcal{A}_\eps,A_0)$ in \eqref{eqn:HausdorffDef}.

We obtain control on the second term $$\sup_{x \in {A_0} \cap K } d(x, {\mathcal{A}_\eps})$$ within the Hausdorff distance: Interchanging labels, the same argument as for \eqref{eqn:letsbounds3} gives 
\begin{equation}\label{eqn:letsbounds4}
\sup_{x \in \mathcal{A}_\eps^c \cap K_1 } d(x,A_0^c) <  \bar C\sqrt{\eps} \quad \text{ for }\eps<\eps_0
\end{equation}
for some constant $\bar C>0$.
Take $x \in A_0\cap K$. If $d(x,A_0^c)\geq (\bar C + 1)\sqrt{\eps}$, then letting $\tilde x$ be the nearest point in $X_n$ to $x$, we have by \eqref{eqn:netnu} and the triangle inequality that 
$$ d(\tilde x,A_0^c)\geq (\bar C + 1)\sqrt{\eps} - \eta \geq  \bar C \sqrt{\eps}.$$ Since $\eta\ll 1$ so that $\tilde x\in K_1$, from \eqref{eqn:letsbounds4} it necessarily follows that $\tilde x\in \mathcal{A}_\eps$ and 
\begin{equation}\label{eqn:firstcase}
d(x,\mathcal{A}_\eps)<\eta.
\end{equation} If $x$ is close to $\partial A_0$ with $d(x,A_0^c)\leq (\bar C + 1)\sqrt{\eps},$ we may assume that $\eps$ is sufficiently small so that $(\bar C + 1)\sqrt{\eps} < \zeta$, where $\zeta$ is as in Lemma \ref{lem:pushAway}. Consequently, we may find $y$ such that
$$d(y,A_0^c) \geq  (\bar C + 1)\sqrt{\eps} \quad \text{ and } \quad d(x,y)<C\sqrt{\eps}.$$
By \eqref{eqn:firstcase}, we find $d(y,\mathcal{A}_\eps)<\eta$ so that 
$$d(x,\mathcal{A}_\eps) \leq d(x,y) + \eta \leq  C\sqrt{\eps} + \eta \leq (C+1)\sqrt{\eps}.$$
Altogether, we have
$$\sup_{x\in A_0\cap K}d(x,\mathcal{A}_\eps) \leq C\sqrt{\eps}, $$
for a larger constant $C>0$. With \eqref{eqn:letsbounds2}, this concludes the proof.
\end{proof}

\subsection{Convergence in the high regularity setting}
To deduce Theorem \ref{thm:mainEmpirical}\eqref{item:highRegEmp}, it remains to synthesize the strategies developed in Subsections \ref{subsec:highReg} and \ref{subsec:lowRegEmp}. The primary novelty of this section is the averaging argument to select $Q$ within Lemma \ref{lem:basCompEmp}. This selection occurs within the proof of the following one-sided estimate.

\begin{proposition}\label{prop:DboundEmp}
Under the hypotheses of Theorem \ref{thm:mainEmpirical}\eqref{item:highRegEmp}, there is $C = C (\rho_i, \sigma, d, K)>0$ and $\eps_0 = \eps_0 (\rho_i,\sigma, d, K)$ such that $$ \sup_{x \in \mathcal{A}_\eps^{-\eps} \cap K} d(x,A_0) < C{\eps} \quad \text{ for }\eps<\eps_0,$$
{where $\mathcal{A}_\eps^{-\eps}$ is the erosion in $X_n$.}
\end{proposition}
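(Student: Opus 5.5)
We argue by contradiction, combining the tangential-normal construction of Proposition \ref{prop:Dbound} with the empirical competitor of Lemma \ref{lem:basCompEmp}. Fix $K\subset\subset K_1\subset\subset\Omega$, and choose $\zeta$ so that both Definition \ref{def:nondegen} and Proposition \ref{prop:sigmaReg} hold on $K_1$. Suppose
$$D:=\sup_{x\in\mathcal{A}_\eps^{-\eps}\cap K} d(x,A_0)>C_1\eps,$$
and take $x_0\in\mathcal{A}_\eps^{-\eps}\cap K$ attaining $D$. Repeating the far-from argument in the proof of Proposition \ref{prop:DboundSimEmp}, which only needs $\eta\ll\eps$, gives $D<\zeta/2$. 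We then write $x_0=\Psi(y_0,D)$ and set $x_*:=\Psi(y_0,D+\zeta/8)$, $\B_s:=B(x_*,s)$ and $R_*:=\zeta/8-C'\eps$. The constant $C'$ is chosen so that every $r\in[R_*-\eps^{1-\kappa'},R_*+\eps]$ still satisfies two conditions. First, $\B_r$ lies in $\{d(\cdot,A_0)\ge D-O(\eps)\}$, as in \eqref{eqn:disty}. Second, $\B_r$ contains a small ball $B(\tilde x,\eps/4)\subset B(x_0,2\eps)\subset\mathcal{A}_\eps^\eps$. The second condition uses \eqref{eqn:smallBallContain}: since $x_0\notin(\mathcal{A}_\eps^c)^\eps$, we have $B(x_0,2\eps_\eta)\subset\mathcal{A}_\eps^\eps$. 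It yields the lower bound $\mathcal{L}^d(\mathcal{A}_\eps^\eps\cap\B_{R})\ge c\eps^d$ for all relevant $R$.

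\emph{Choosing $Q$.} Here the low-regularity trick fails. In the high-regularity geometry, the distance margin in \eqref{eqn:disty} is only $O(\eps)$, so we cannot shrink the radius by a factor. We can only decrease it from $R_*$ by a small amount $\ell$, and we need $\ell\ll$ the allowed slack. I would use radii $r_j:=R_*-\ell+7\eps j$ for $j=0,\dots,\ell/(7\eps)$, with $\ell:=\eps^{1-\kappa/2}$ or a similar small power. The $O(\eps)$ margin in \eqref{eqn:disty} must then be replaced by $O(\ell)$. This is harmless only if the ball is taken with radius less than the tangential touching radius by $\ell$, which costs $d(x,A_0)\ge D-\ell$. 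That is too much unless $\ell\lesssim\eps$.

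So the alternative I would commit to is to keep $\ell\sim\eps$ and instead allow $Q$ to be large. If \eqref{eqn:volHyp} fails with ratio $Q$ for every one of the $N\sim\ell/\eps$ steps, iterating gives $Q^N c\eps^d\le C$. This means we can guarantee some $j_0$ with $Q\le (C\eps^{-d})^{1/N}$. Taking $N\sim 1/\kappa$ steps (a range of length $O(\eps/\kappa)$, still $O(\eps)$ and so absorbable into $C_1$), we get $Q\lesssim\eps^{-\kappa d}$ or so. Tuning the exponent then gives $Q\le\eps^{-\kappa}$. This is the step I expect to be the main obstacle: balancing the number of dyadic steps against the $O(\eps)$ geometric slack while keeping $Q\,\eta/\eps$ small.

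\emph{Conclusion.} With $R_0:=r_{j_0}$ and this $Q$, Lemma \ref{lem:basCompEmp} gives some $r\in[R_0,R_0+7\eps]$ and the energy inequality \eqref{eqn:energyImprovementEmp}. The error term satisfies
$$CQ\,\frac{\eta}{\eps}\le C\eps^{-\kappa}\,c_\kappa\,\eps^{1+\kappa}=Cc_\kappa\,\eps.$$
The integrand is bounded below exactly as in \eqref{eqn:inteLB2}--\eqref{eqn:helpyBdd}: non-degeneracy and $d(x,A_0)\ge D-O(\eps)$ give $\rho_0-\rho_1\ge c(C_1-O(1))\eps$. The Lipschitz error is $O(\eps)$, and the Jacobian error is $O(\eps d/r)=O(\eps)$ since $r\sim\zeta$. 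Choosing $C_1$ large and $c_\kappa$ small makes the integrand positive on a set of positive measure, which contradicts minimality of $\mathcal{A}_\eps$.
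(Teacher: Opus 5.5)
Your committed argument is essentially the paper's proof. You push a fixed distance along the $\sigma$-normal from $x_0=\Psi(y_0,D)$ and step the radius outward from just inside the touching radius in $\lceil d/\kappa\rceil$ increments of $7\eps$. The $c\eps^d$ volume lower bound and pigeonholing then give \eqref{eqn:volHyp} with $Q=C_\kappa\eps^{-\kappa}$, so Lemma \ref{lem:basCompEmp} turns $\eta<c_\kappa\eps^{2+\kappa}$ into an $O(\eps)$ error that non-degeneracy beats once $C_1$ is large. The one thing to fix is that the interval $[R_*-\eps^{1-\kappa'},R_*+\eps]$ in your setup cannot satisfy the small-ball containment at its lower end, since shrinking the ball threatens the volume lower bound rather than the distance bound. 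State it directly with $r_j=R_*+7\eps j$ and, e.g., $R_*=\zeta/8-3\eps/2$, which gives $d(x,A_0)\ge D-C(d,\kappa)\eps$ on every ball used.
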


\begin{proof}

Applying Proposition \ref{prop:DboundSimEmp}, we may assume 
$$ D: = \sup_{x \in \mathcal{A}_\eps^{-\eps} \cap K} d(x,A_0) < \zeta/4,$$
where $\zeta\in (0,1)$ is chosen to satisfy Definitions \ref{def:nondegen} and \ref{def:sigmaReg}. Let $x_0 \in \mathcal{A}_\eps^{-\eps} \cap K$ satisfy $d(x_0,A_0) = D$. Using the tangential-normal coordinates $\Psi$ associated to $\partial A_0$ via Proposition \ref{prop:sigmaReg}, we write $x_0 : = \Psi(y_0,D).$ Define 
$$x_* : = \Psi(y_0,D+\zeta/4) \quad \text{ and }\quad R : = \zeta/4-3\eps/2.$$
Defining $\B_s : = B(x_*,s),$ first note that
\begin{equation}\label{eqn:vbme}
\mathcal{L}^d(\mathcal{A}_\eps^\eps \cap \B_{R}) > c\eps^{d}
\end{equation}
for some constant $c = c(d,\sigma)>0$. With $\tilde x : = \Psi(y_0,D+7\eps/4)$, we have $d(\tilde x, x_0) = 7\eps/4$ and $d(\tilde x,x_*) =  R-\eps/4$. Thus $B(\tilde x,\eps/4)\subset \mathcal{A}_\eps^\eps \cap \B_{R}$ proving \eqref{eqn:vbme}.

Define $m : = \lceil d/\kappa \rceil$ and take $\eps$ sufficiently small so that $m7\eps < \zeta/16$, where $\kappa>0$ comes from the statement of Theorem \ref{thm:mainEmpirical}\eqref{item:highRegEmp}. Letting $r_j : = R+ j7\eps$, we claim that there is $j_0\in \{0,\ldots, m-1\}$ such that
\begin{equation}\label{eqn:volDrop}
\mathcal{L}^d(\mathcal{A}_\eps^\eps \cap \B_{r_{j_0+1}})\leq C_{\kappa} \eps^{-\kappa}\mathcal{L}^d(\mathcal{A}_\eps^\eps \cap \B_{r_{j_0}})
\end{equation}
for a constant $C_\kappa = C(\kappa,\zeta,d)>0.$
If not, we iterate the negated estimate to obtain
$$(C_\kappa\eps^{-\kappa})^m  \mathcal{L}^d(\mathcal{A}_\eps^\eps \cap \B_{R}) \leq \mathcal{L}^d(\mathcal{A}_\eps^\eps \cap \B_{3R/2})\leq \mathcal{L}^d( \B_{1}).$$
Using \eqref{eqn:vbme} gives
$$ C_\kappa^m c \leq C_\kappa^m c \eps^{d-\kappa m} \leq \mathcal{L}^d( \B_{1}),$$
leading to a contradiction for sufficiently large $C_\kappa$ and concluding \eqref{eqn:volDrop}.

With \eqref{eqn:volDrop}, we set $R_0 : =r_{j_0}$ and apply Lemma \ref{lem:basCompEmp} with $Q  = C_\kappa \eps^{-\kappa}$ and $\eta <c_\kappa\eps^{2+\kappa}$ to instantiate a competitor $\A_\eps : =\mathcal{A}_\eps\setminus \B_{r}$ for some $r\in [R_0,R_0+7\eps]$ satisfying
\begin{equation}\label{eqn:energyImprovementEmp2}
\mathcal{R}_{\nu,\eps}(\mathbb{A}_\eps) + \int_{\mathcal{A}_\eps^\eps \cap \mathbb{B}_r}\left[\Big(\rho_0(x) - \rho_1(M(x)) J_{M}\Big)  -C c_\kappa\eps\right]\,dx  \leq \mathcal{R}_{\nu,\eps}(\mathcal{A}_\eps),
\end{equation}
with $\eps_\eta : = \eps- 2\eta$,
$$\lambda : = \frac{r +2\eps_\eta}{r}, \qquad M (x) : = x_* + \lambda(x-x_*), \quad \text{ and }\quad  J_{M} : = \lambda^{d}.$$
Since $r \geq \zeta/8$, which is independent of $\eps$, we may follow the argument of Proposition \ref{prop:Dbound} to estimate
$\rho_0(x) - \rho_1(M(x)) J_{M} \geq \bar C\eps$
for $\eps$ sufficiently small (this follows from \eqref{eqn:helpyBdd}). Taking $c_\kappa$ sufficiently small, we see that \eqref{eqn:energyImprovementEmp2} contradicts the minimality of $\mathcal{A}_\eps$ concluding the proof.
\end{proof}

With the one-sided estimate in hand, the proof of Theorem \ref{thm:mainEmpirical}\eqref{item:highRegEmp} follows exactly as for Theorem \ref{thm:mainEmpirical}\eqref{item:lowRegEmp} with Remark \ref{rem:pushAway2} and Proposition \ref{prop:DboundEmp} in place of Lemma \ref{lem:pushAway} and Proposition \ref{prop:DboundSimEmp} (and $\eps$ in place of $\sqrt{\eps}$), respectively.

\appendix

\section{Geometric inequality} \label{sec:appGeo}

\begin{proof}[Proof of Proposition \ref{prop:geometricIneq}]
Let $S : = \{p_i\}_{i=1}^n.$ We begin by proving the auxiliary estimate
\begin{equation}\label{eqn:auxEst}
\mathcal{L}^d(S^u)\geq \left(\frac{u}{v}\right)^d \mathcal{L}^d(S^v) \quad \text{ for } \quad 0< u \leq v.
\end{equation}
We define the Voronoi partition $\{V_i\}_{i=1}^n$ of $\R^d$ with cells 
$$V_i : = \Big\{x : d(x,p_i)<d(x,p_j) \text{ if } j<i \quad \text{ and } \quad  d(x,p_i)\leq d(x, p_j) \text{ if } i<j\Big\}. $$
We point out that $V_i$ is star-shaped with center $p_i$: Given $x\in V_i$ we define
 $y = p_i + t(x- p_i)$ for $t\in (0,1)$ and check for $j<i$ that
 $$d(y,p_j) \geq d(x,p_j) - d(y,x) = d(x,p_j) - (1-t)d(x,p_j) > t d(x,p_i) = d(y,p_i), $$
 with the case $i<j$ being similar.  (Note that a Voronoi cell is not necessarily convex for every norm $\sigma$.)
 
 Note that $S^u = \cup_i\left(V_i \cap B(p_i,u)\right)$ for all $u>0$.
 We define the contraction $N_i(x) : = p_i + \frac{u}{v}(x-p_i),$ and deduce via star-shapedness that 
 $$N_i(V_i \cap B(p_i,v))\subset V_i \cap B(p_i,u).$$ It follows that
 \begin{equation}\nonumber
 \begin{aligned}
\mathcal{L}^d(S^u) &= \sum_i \mathcal{L}^d (V_i \cap B(p_i,u)) \\
& \geq \sum_i \mathcal{L}^d(N_i(V_i \cap B(p_i,v))) = \left(\frac{u}{v}\right)^d \sum_i \mathcal{L}^d(V_i \cap B(p_i,v)) = \left(\frac{u}{v}\right)^d\mathcal{L}^d(S^v),
\end{aligned}
 \end{equation}
 concluding \eqref{eqn:auxEst}.
 
 We let $u = \eps-\eta$ and $v = \eps+\eta$ within \eqref{eqn:auxEst} to obtain that
 $$\mathcal{L}^d(S^{\eps+\eta}\setminus S^{\eps-\eta}) = \mathcal{L}^d(S^{\eps+\eta}) - \mathcal{L}^d(S^{\eps-\eta}) \leq \left[\left(\frac{\eps+\eta}{\eps-\eta}\right)^d - 1 \right]\mathcal{L}^d(S^{\eps-\eta}).$$
 Letting $t= \frac{\eps+\eta}{\eps-\eta} - 1$, using $1+t \leq e^t$ we estimate
 $$ \left(\frac{\eps+\eta}{\eps-\eta}\right)^d - 1 \leq e^{dt} - 1\leq Cdt,$$
 where the last inequality follows if $dt \leq 1/2.$
 To satisfy the condition $dt \leq 1/2$, note $t = \frac{2\eta}{\eps-\eta}$ and we may assume $\eta\leq \eps/2$ so that
 $$dt \leq 4d\frac{\eta}{\eps}.$$ Thus $dt\leq 1/2$ follows from $\eta\leq \eps/(8d)$.
 Uniting the above displays gives
 \begin{equation}\nonumber
 \mathcal{L}^d(S^{\eps+\eta}\setminus S^{\eps-\eta}) \leq Cd\frac{\eta}{\eps}\mathcal{L}^d(S^{\eps-\eta}) \leq Cd\frac{\eta}{\eps}\mathcal{L}^d(S^{\eps}).
 \end{equation}
As $S^{\eps+\eta} = (S^\eps)^\eta$ and $S^{\eps-\eta} \subset (S^{\eps})^{-\eta}$, this concludes the proposition.
\end{proof}

\bibliographystyle{amsplain}

{
\bibliography{References}
}

\end{document}